\newtheorem{example}{Example}
\newtheorem{definition}{Definition}
\newtheorem{lemma}{Lemma}
\newtheorem{theorem}{Theorem}
\newtheorem{observation}{Observation}
\newtheorem{proposition}{Proposition}
\newenvironment{mycenter}[1][\topsep]
{\setlength{\topsep}{#1}\par\kern\topsep\centering}% \begin{mycenter}[<len>]
{\par\kern\topsep}% \end{mycenter}
\newcommand{\newchange}[1] {{\color{black}{{#1}}}}
\newcommand{\change}[1] {{\color{black}{{#1}}}}
\begin{document}

\setlength{\abovedisplayskip}{4pt}
\setlength{\belowdisplayskip}{4pt}
\setlength{\abovedisplayshortskip}{4pt}
\setlength{\belowdisplayshortskip}{4pt}

\title{Learning Temporal Causal Sequence Relationships \\from Real-Time Time-Series}

\author{\name Antonio Anastasio Bruto da Costa \email antonio@iitkgp.ac.in\\
		\name Pallab Dasgupta \email pallab@cse.iitkgp.ac.in \\
		\addr Dept. of Computer Science\\Indian Institute of Technology Kharagpur\\
		Kharagpur, West Bengal, India 721302}

% For research notes, remove the comment character in the line below.
% \researchnote

\maketitle

%% Group authors per affiliation:

%% or include affiliations in footnotes:
%\author[mymainaddress,mysecondaryaddress]{Elsevier Inc}
%\ead[url]{www.elsevier.com}

%\author[mysecondaryaddress]{Global Customer Service\corref{mycorrespondingauthor}}
%\cortext[mycorrespondingauthor]{Corresponding author}
%\ead{support@elsevier.com}

%\address[mymainaddress]{1600 John F Kennedy Boulevard, Philadelphia}
%\address[mysecondaryaddress]{360 Park Avenue South, New York}

\begin{abstract}
We aim to mine temporal causal sequences that 
\change{explain observed events (consequents) in time-series traces.} 
% Existing work on real-time time-series has focused on classification, summarization, and pattern extraction. 
Causal explanations of key events in a time-series have applications in design debugging, 
anomaly detection, planning, root-cause analysis and many more. We make use of decision 
trees and interval arithmetic to mine sequences that explain defining events in the 
time-series. We propose modified decision tree construction metrics to handle the 
non-determinism introduced by the temporal dimension. The mined sequences are expressed in a 
readable temporal logic language that is easy to interpret. The application of the
proposed methodology is illustrated through various \change{examples.}%case studies.
\end{abstract}

%\begin{keyword}
%Time-series \sep mining \sep properties \ decision trees \sep interval arithmetic
%\MSC[2010] 68T05 \sep 68T27\sep 
%\end{keyword}

%\linenumbers

\section{Introduction}

%Re-write this

%Position papers
%Generalizing to other domains is a problem. Properties can be used for deduction
%Causal Relationships - learning bayes networks [want the assertions to be such that they don't fail on the data, which causes what is not explicitly indicated, a variety of networks can be constructed for the same data, past influences the future - time is a factor, introduces a partial order on the events] , learning assertions [not discrete time or discrete variable domains].
%We can use  the miner to construct a hierarchy of causal relations
%Product of Time * Event space for constructing a Bayes Network 

%\pd{This is a comment}

This article presents an approach for learning causal sequence relationships, in the form of temporal properties, from data. Most realistic causal relationships are timed sequences of events that affect the truth of a target event. For example:
\textit{``A car crashes into another. The cause was that there was a time-point 7~sec to 
8~sec before the crash at which the two cars had a relative velocity of 70~kmph and a longitudinal distance of 6~m, and the leading car braked sharply."} %If the car is at a speed of 70kmph and brakes, it covers between 50m to 100m within the next 8sec before coming to a complete stop."}
In this relationship, the cause is the relative velocity, the distance between the cars, and the braking of the lead car. Such timing relationships can be expressed in logic languages such as Linear Temporal Logic~\cite{Pnueli77} for discrete event systems and Signal Temporal Logic~\cite{stl2004} for continuous and hybrid systems.  Temporal logic properties are also extensively used in the semiconductor industry, with language standards such as SystemVerilog Assertions (SVA)~\cite{sva} and Property Specification Language (PSL)~\cite{psl}.
The notion of {\em sequence expressions} in SVA, which is very natural for expressing
temporal sequences of events, is one of the primary features responsible for the
popularity of the language in industrial practice. In this article we use a logic language
inspired from SVA, which allows us to express real-valued timing relations between 
predicates.
Our choice is partially influenced by our objective of mining assertions from circuit simulation traces, but also due to the applicability of the semantics of {\em sequence expressions} in other time-series domains. For instance, in our language, the causal expression for a crash may be captured as:
\begin{mycenter}[0pt]
{\tt rspeed >= 70 \&\& brake \&\& ld <= \change{6} |-> \#\#[7:8] crash }
\end{mycenter}
In this expression, {\tt brake} is a proposition, {\tt ld} and {\tt rspeed} are real-valued variables representing the longitudinal distance and relative velocity respectively, and
{\tt rspeed>=70} \change{and {\tt ld<=6} are {\em predicates over real variables} (PORVs).} 

Methods for learning causal relationships from data have been studied extensively and its importance is well established~\cite{pearl1995,Pearl2000,Pearl2009causal,Evans2018,Guo2019,Pearl2019}. Most recently, the case was made for the need to build a framework for learning causal relationships using logic languages, to have explanations for predictions or recommendations and to understand cause-effect patterns from data, the latter being a nec\change{e}ssary ingredient for achieving human level cognition~\cite{Pearl2019}. Causal learning has applications in many areas~\cite{Guo2019} including medical sciences, economics, education, environmental health and \change{epidemiology}.
%epidemology. 
We aim to learn causal relationships as temporal properties, of the form $\alpha \Rightarrow \beta$, having a defined syntax and semantics. $\alpha$ is a sequence of Boolean expressions, with adjacent expressions separated %in time
from one another \change{by time intervals}. $\beta$ is a predicate whose cause {\change is} to be learned. Properties of this nature can be used for deduction. This then enables us to learn complex hierarchical causal relationships. 

Existing learning approaches, such as neural networks, require large amounts of training data, and their results are not easily explainable. Furthermore, it is difficult to generalize the same network structure to a variety of domains. Alternately, using Bayes networks to capture causal relationships, poses ambiguities which are challenging to deal with 
%during
\change{when} reasoning over time. Bayes \change{networks} do not explicitly indicate what events cause what; a variety of networks can be constructed for the same data. 
Also, time, which is a factor in our learning, introduces a natural partial order among the events.
%, that is, past events may influence future events. 
Furthermore, the data is a time-series representing the continuous evolution of variables over real-time, and therefore in our setting time is assumed to be dense by default,
and the variables over which the \change{predicates} are defined are assumed to be continuous in general.

In this article, we use decision trees to achieve our goal of learning causal relationships. 
Traditional decision tree structures, as they exist in standard 
texts~\cite{Quinlan1986,Quinlan1993,Mitchell1997}, deal with enumerated value domains of
the variables. Continuous domains can be partitioned into a finite set of options by using 
\change{predicates}. For example, we may have the \change{predicates}, {\tt speed<30}, 
{\tt 30<=speed<=100},
and {\tt speed>100}, to define {\tt low\_speed}, {\tt moderate\_speed}, and
{\tt high\_speed}. In the temporal setting, {\tt speed}, varies with time and hence the
truths of these predicates change with time. When learning causal sequences which connect events in multiple time-worlds, the following fundamental challenges arise:
\begin{itemize}[topsep=0em]\setlength\itemsep{0em}
\item A data-point is the state of a single time-world. For a time-series over real-time, in theory, there are infinite time-worlds in a finite time window.
\item The influence of a \change{predicate} on the truth of the consequent changes with the time
    separation between them. In other words, the same \change{predicate} may contribute to the
    consequent being true in some time-worlds and false in some other time-worlds. 
    Relative to the consequent, it means that different past time-worlds
    contribute to its truth in potentially conflicting ways.
\item The influence of a time-world state on a future event (namely, the consequent) 
    not only depends on the truth of the predicates in that time-world, but also on the
    truth of the predicates in past time-worlds.
\end{itemize}
The main challenges may therefore be summarized in terms of two questions, namely:
\begin{enumerate}
    \item Finding the predicates which influence the consequent, and
    \item For these predicates, finding the time window separating the predicates and
        the consequent, such that the sequence of predicates guarantee the consequent. 
\end{enumerate}
The challenges above arise due to the non-deterministic characteristic of temporal properties such as the one described earlier. In a temporal property, past time-worlds influence future time-worlds, and due to the dense nature of the real-time domain, these time-world associations are not always an exact association, and non-determinism \textit{in time} plays a crucial role in representing these relationships. 
% As a result of these challenges, standard decision tree learning metrics can not be directly applied.

\begin{figure}[t]
\centering
\includegraphics[scale=1]{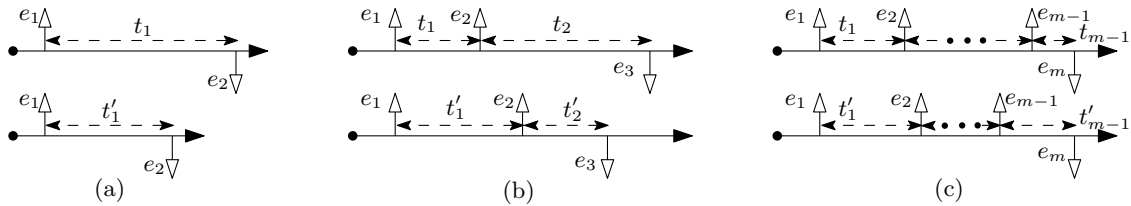}
\caption{Evidence of time-separations between events in different time-worlds.}\label{fig:non-deterministic-association}
\end{figure}

In Fig\change{.}~\ref{fig:non-deterministic-association}, various temporal associations are described. In Fig.~\ref{fig:non-deterministic-association}(a), in one instance, event $e_1$ is separated from event $e_2$ by $t_1$ time units, and in another by $t'_1$ time units. 
The system from which the traces have been taken may potentially admit infinite variations of time separation between $e_1$ and $e_2$ within a dense time interval. For learning meaningful \change{associations} %assertions
we need to generalize from the discrete time separation instances shown in the time-series to time intervals.
%Many, possibly infinite, such evidences of time separation between $e_1$ and $e_2$ may exist in the data. 
Fig.~\ref{fig:non-deterministic-association}(b) depicts a similar situation with three events, where the separation between $e_1$ and $e_2$ can vary, and the separation between $e_2$ and event $e_3$ may also vary. Fig.~\ref{fig:non-deterministic-association}(c) generalizes this.

The primary contributions of this article are as follows:
\begin{itemize}[topsep=0em]\setlength\itemsep{0em}
	\item We discuss the problems associated with using the standard decision tree learning
	framework for learning causal sequence relationships across multiple time-worlds. We show how this is attributed to the metrics used in building the decision tree.
	\item We adapt the measures of entropy, to account for time-worlds that may non-determi\-nistically be classified into multiple classes.
	\item We also adapt the information gain metric to account for time-world states that may be present across multiple sibling nodes in the \change{decision} tree.
	\item We propose a logic language for representing causal sequences. \change{The semantics of the language are compatible with standard ranking measures for assessing learned properties. We use measures of support and correlation to measure the quality of the learned properties. These measures also} give insights into the data.
	\item We propose a decision tree construction that uses the adapted metrics to learn causal sequences across multiple time-worlds. We provide a method to translate the associations learned into properties, in a logic language that can be then used for reasoning.

\end{itemize}
%extends the technique presented therein with the aim of providing higher trace coverage and finer control over the quality of assertions generated. The assertions generated are in a format similar to that of the widely used language SystemVerilog Assertions (SVA)~\cite{sva}, where sequence expressions form the key to expressing the chain of events representing a cause associated with a target effect. Unlike SVA which is famously used to write assertions over Boolean signals for clocked systems, here the artifacts in the sequence expression are predicates over real variables (PORVs)~\cite{STL}, and the delays seperating individual events/PORVs are measured in real-time versus being in clock cycles as in SVA.

%The decision tree learning approach presented in this article is applied to various case studies and the mining process is validated through experiments. On small combinatorial circuits the miner was able to mine Boolean properties that cover the entire specification. 
The theory developed in this article has been implemented in a tool called the \textit{Prefix Sequence Inference Miner} (PSI-Miner), available at \url{https://github.com/antoniobruto/PSIMiner}. 
\change{The article is organized as follows. Section~\ref{sec:motivation} outlines the problem statement with a motivating example and presents the formal language for representing the properties mined. 
Section~\ref{sec:decisionPrelims} presents definitions for various structures and metrics used throughout this article.  
In Section~\ref{sec:concurrent}, we extend the standard decision tree metrics to incorporate time into the learning process and develop an algorithm for mining temporal sequence expressions to derive explanations for a given target event. Section~\ref{sec:ranking} introduces ranking metrics for properties.
Section~\ref{sec:stoppingPruningOverfitting} discusses measures employed to prevent over-fitting using various stopping conditions and pruning methods. Section~\ref{sec:multipleTraces} describes how we extend structures and metrics to operate over multiple time-series simultaneously. In Section~\ref{sec:experiments} we demonstrate the utility of the methodology through select case studies. 
Section~\ref{sec:related} discusses related work.
Section~\ref{sec:summary} concludes and summarizes the article.}

\section{Mining Explanation as Prefix Sequences}\label{sec:motivation}

\begin{figure}[t]
	\centering
	\includegraphics[height=2.8in]{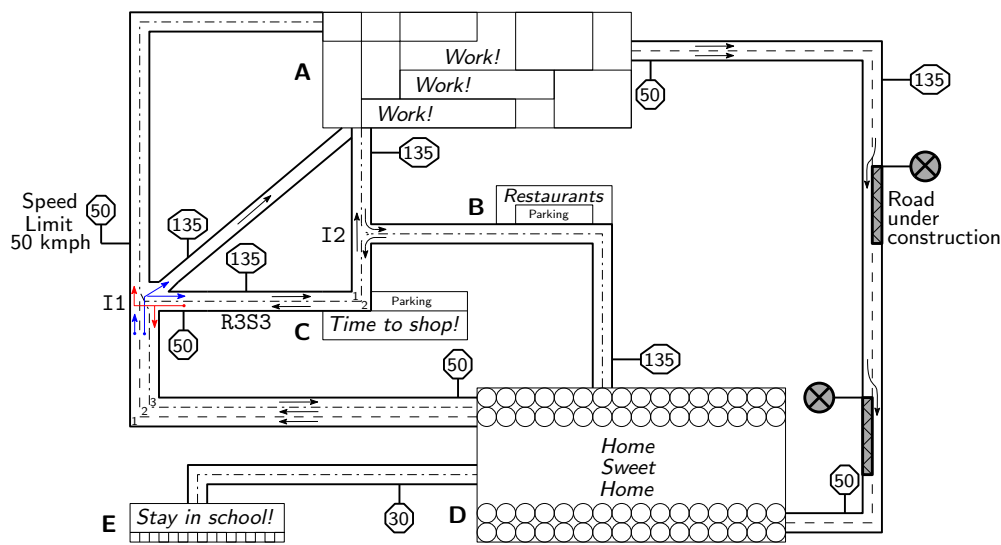}%{traffic-labeled.eps}%[width=2.8in, height=2.2in]{traffic-labeled.eps}
	\caption{Vehicle road-map: Routes with demarcations for direction and speed limits.}
	\label{fig:traffic}
\end{figure}

\change{We start with a motivating example. Figure~\ref{fig:traffic} shows a map of town X, depicting roadways for vehicular movement in two dimensions. Vehicles are tagged with GPS devices to monitor their movements. The data contains patterns describing routes vehicles follow and their speed. Congestion and delays are reported in various parts of the town and one wishes to determine the causes that lead to a delay in reaching the office.}

\change{We label those states as {\em delayed} from which the delay in reaching office is inevitable. Obviously, the cause for a delay is a sequence of movement events that lead to a delayed state. Once in a delayed state, the vehicle is always delayed. 
Some events may be common to all vehicles reaching the office, and such events
need to be separated out from those that contribute to the delay. Also, since
the traffic pattern evolves with the time of the day, the time delays separating 
the relevant events have a significant role in capturing the causal sequence
responsible for the delay.}

% Town-X wishes to use movement data of vehicles, both those involved in delays and those reporting non-delay trajectories to learn why delays occur, to then plan to make moving within the town delay-free.
%Prior to verification, engineers do not always have a complete specification of the system being designed, nor is it easy to determine the cause of failures or bugs when they occur. The problem we wish to address is to mine as many explanation patterns from the traces, to uncover potential reasons that could improve an engineer's understanding of why (or when) some event occurs.
\change{Mining causes from the data leads to the discovery of the potential sequence of events leading to a delay. One such event sequence is as follows:}
\change{
%\begin{tcolorbox}
\begin{mycenter}[0em]
%\begin{Verbatim}
%{\tt 22<=x<=24 \&\& 15<=y<=20 \#\#[0:0.369] v>100 \#\#[2.2:2.3] !(route==1) |-> crash}
{\tt I1 \#\#[0:40] !LANE2 \#\#[0:5] !LANE1 \&\& R3S3 |-> \#\#[0:30] DELAY}
%\end{Verbatim}
\end{mycenter}
%\end{tcolorbox}
}

%The formula reads as ``If the car is in the region $x\in[22:24]$ and $y \in [15:20]$ and if within the following 0h22m8s the velocity is above 100kmph and thereafter in the next 2h12m to 2h18m if the route is not Route-1 a crash occurs''. This is an example of a property mined from the data depicted in Figure~\ref{fig:traffic}. Observe that the formula describes a sequence of events and very finely grained time delays between the events. Mined patterns are always in the form of such sequences. The events may be mined or provided as inputs using domain knowledge. Time-delays are computed from the decision tree generated.

\change{The formula reads as, ``After being at Intersection-1 ({\tt I1}), if the vehicle is not in Lane-2 ({\tt !LANE2}) within the next 40 minutes, and is on road segment {\tt R3S3} but not in Lane-1 ({\tt !LANE1}) within the next 5 minutes, then within the next 30 minutes, the vehicle is delayed.'' On further examination, using the layout of roads, the town discovers that the vehicle was on the wrong lane while making the turn into {\tt R3S3} and ended up in the wrong lane in a high-speed zone. We call the language for describing the above property as
the \textit{Prefix Sequence Inferencing} language (PSI-L).}

\begin{comment}
The authorities also observe a different formula mined from the data:
\begin{mycenter}[0em]
{\tt !I1 \&\& R3S3 \#\#[0:15] LANE1 |-> \#\#[0:60] !DELAY}
\end{mycenter}
\noindent which reads as, ``When on {\tt R3S3}, if within the following 15mins the vehicle moves into Lane-1 then there is no delay experiences within the next 60mins.'' They also are aware that it takes less than 60mins to reach destinations reachable in the direction of Lane-1. The authorities use this information to improve road markers and place appropriate divides to guide vehicles into correct lanes and avoid delays.
\end{comment}

%\section{\change{Problem of Mining Prefixes}}
%Explaining using Sequences: Prefix Sequence Inferencing Language}
\label{sec:prefixMiningProblem}
\label{sec:PSI-L}\label{sec:psi-l_Syntax}\label{sec:psi-l_MatchSemantics}
%We express explanations in the form of a sequence of events or predicates (possibly over real-valued variables). In addition to an ordering between events, the timing between adjacent events is key. An explanation for a target event is observed as a prefix to the target. 

%\subsection{PSI-L Syntax}\label{sec:psi-l_Syntax}
%The language used to describe prefix sequences inferred from time-series data has the following general syntax:
\change{A prefix sequence inference (alternatively, a PSI-L formula or \newchange{PSI-L property}) has the general syntax,}
%\begin{mycenter}
%\begin{tabular}{r@{~~} c@{~~} l}
$\mathcal{S}${\tt |->}$E$;
%& or & $\mathcal{S}$ {\tt |->} $\tau_0 ~ E$\\
%\end{tabular}
%\end{mycenter}
%\noindent
where, $\mathcal{S}$ is a prefix sequence of the form $s_n ~\tau_{n} ~s_{n-1} ~\tau_{n-1}$ $\ldots \tau_{1}~s_0$, also known as a sequence expression.

%\pd{There are two $\tau_1$s here -- one before E, and one before $s_0$. Please correct in a way that is consistent with the rest of the paper.}

\change{A time interval $\tau_i$ is} of the form $[a:b]$, $a,b\in \mathbb{R}^{\geq 0}$, $a\leq b$, and each $s_i$ is a Boolean expression of \change{predicates.}
%PORVs 
%and events. 
The length of the sequence expression is $n$ (having at most $n$ \change{time intervals}).
%non-temporal sub-expressions).
\change{A special case arises when $s_0=$ \textit{true} and $\tau_1\neq \emptyset$. In such a case the prefix sequence inference is treated as the expression $s_n ~\tau_{n} ~s_{n-1} ~\tau_{n-1}$ $\ldots \tau_{2}~s_1~${\tt |->}$~\tau_1~E$.}

%\pd{But $s_0$ to $s_n$ consists of $n+1$ non-temporal sub-expressions, so not sure whether the above is correct.}

The 
\change{consequent}
$E$ in a PSI-L formula is assumed to be 
%known
\change{given}. It is in the context of $E$ that \change{the antecedent} $\mathcal{S}$ is learned. $E$ is called the {\em target} of the PSI-L \newchange{formula}. The notation $\mathcal{S}_i^j$ is used to denote the expression $s_j ~\tau_{j} \ldots \tau_{i+1}~s_i$, $0\leq i \leq j \leq n$. In general, $\mathcal{S} \equiv \mathcal{S}_0^n$.

%Note that here, an increase in the index indicates going backward in time, whereas for traces increasing the index indicates a movement forward in time.

%\subsection{PSI-L Semantics for Traces}\label{sec:psi-l_MatchSemantics}

For variable set $V$, the set $\mathbb{D} = \mathbb{R}^{\geq 0} \times \mathbb{R}^{|V|}$ is the domain of valuations of timestamps and variables in $V$. A data point is a tuple $(t,\eta) \in \mathbb{D}$, $t \in \mathbb{R}^{\geq 0}$ and $\eta \in \mathbb{R}^{|V|}$. The value of a variable $x \in V$ at the data point $(t,\eta)$ is denoted by $\eta[x]$. Boolean and real-valued variables are treated in the same way in the implementation. A Boolean value at a data point is either {\tt 1}  for {\em true} or {\tt 0} for {\em false}, and \{$0,1$\} $\subset \mathbb{R}$. \newchange{We use the notation $\eta \models s$ to denote satisfaction of a Boolean expression $s$ by a valuation $\eta$ at a data point.}

\begin{definition}\label{def:hybridTrace}
	\textbf{Time-Series (Trace):}
	A trace $\mathcal{T}$ is \newchange{a finite} ordered list of tuples $(t_1,\eta_1)$, $(t_2,\eta_2)$, $(t_3,\eta_3)\ldots(t_d,\eta_d)$, $\forall_{i \in \mathbb{N}_{d-1}} t_i < t_{i+1}$. The length of $\mathcal{T}$, the number of tuples in $\mathcal{T}$, expressed as $|\mathcal{T}|$, is $d$. The temporal length of $\mathcal{T}$, denoted $||\mathcal{T}||$, is $t_d-t_1$.
	
	$\mathcal{T}(i)$, $i\in\mathbb{Z}^{>0}$ denotes the $i^{th}$ data point $(t_i,\eta_i)$ in trace $\mathcal{T}$.
	
	A sub-trace $\mathcal{T}_i^j$ of $~\mathcal{T}$ is defined as the ordered list $(t_i,\eta_i)$, $(t_{i+1},\eta_{i+1})$, $\ldots(t_j,\eta_j)$; $i,j \in \mathbb{N}_{d}$ and $i\leq j$.
	\qed
\end{definition}

\begin{definition}\label{def:seqExprMatch}
\change{
\textbf{Match of a Sequence Expression and a PSI-L formula: }
The sequence expression $S_l^m ::= s_m ~\tau_{m}$ $s_{m-1} ~\tau_{m-1}$ $\ldots \tau_{l+1}~s_l$ has a match at $\mathcal{T}(j)$ in sub-trace $\mathcal{T}_i^j$ of trace $\mathcal{T}$, denoted $\mathcal{T}_i^j {\models} S_{l}^{m}$ iff:}
%sub-trace $\mathcal{T}_i^j$, $i\leq j$, of $\mathcal{T}$ matches  at $\mathcal{T}(j)$, denoted $\mathcal{T}_i^j \overset{m}{\vDash} S_{l}^{m}$

%\pd{Previously you defined:$\mathcal{S}_i^j$ is used to denote the expression $s_j ~\tau_{j} \ldots \tau_{i+1}~s_i$, $0\leq i \leq j \leq n$. But now you say $S_l^m ::= s_m ~\tau_{m}$ $s_{m-1} ~\tau_{m-1}$ $\ldots \tau_{l-1}~s_l$. Note the discrepancy in notation. Perhaps you mean:}

%\pd{The sequence expression $S_l^m ::= s_m ~\tau_{m}$ $s_{m-1} ~\tau_{m-1}$ $\ldots \tau_{l+1}~s_l$ has a match at $\mathcal{T}(j)$ in sub-trace $\mathcal{T}_i^j$ of trace $\mathcal{T}$, denoted $\mathcal{T}_i^j {\models} S_{l}^{m}$ iff:}
\change{
\begin{itemize}[topsep=0.1em]\setlength\itemsep{0.1em}
	\item $\eta_i \models s_m$, $\eta_j \models s_l$
	\item $\exists_{i \leq k \leq j} ~\mathcal{T}_{k}^j \models S_{l}^{m-1}$ $\bigwedge$ $t_k - t_i \in \tau_{m}$ if $(m-1)>l$
	%\pd{You need to check this after cleaning up theprevious issues.}
\end{itemize}}

\change{A PSI-L formula can have multiple matches in $\mathcal{T}$. The PSI formula $S${\tt|->}$E$ has a match in trace $\mathcal{T}$ at $\mathcal{T}(j)$ iff $\exists_{1\leq i\leq j}$ $\mathcal{T}_i^j \models S$ and $\eta_j \models E$.}
%$\exists_{i}~ {i \leq k}$, $\mathcal{T}_i^k \models S_0^m$, $t_k - t_j \in \tau_0$, and $\mathcal{T}(k) \models E$. The sub-trace $\mathcal{T}_i^j$ 
%\pd{(or should it be $\mathcal{T}_i^k$ ?)}
%is then a witness to the PSI-L property in trace $\mathcal{T}$.
\qed
\end{definition}
\begin{figure}[t]
    \centering
    \includegraphics[width=\textwidth]{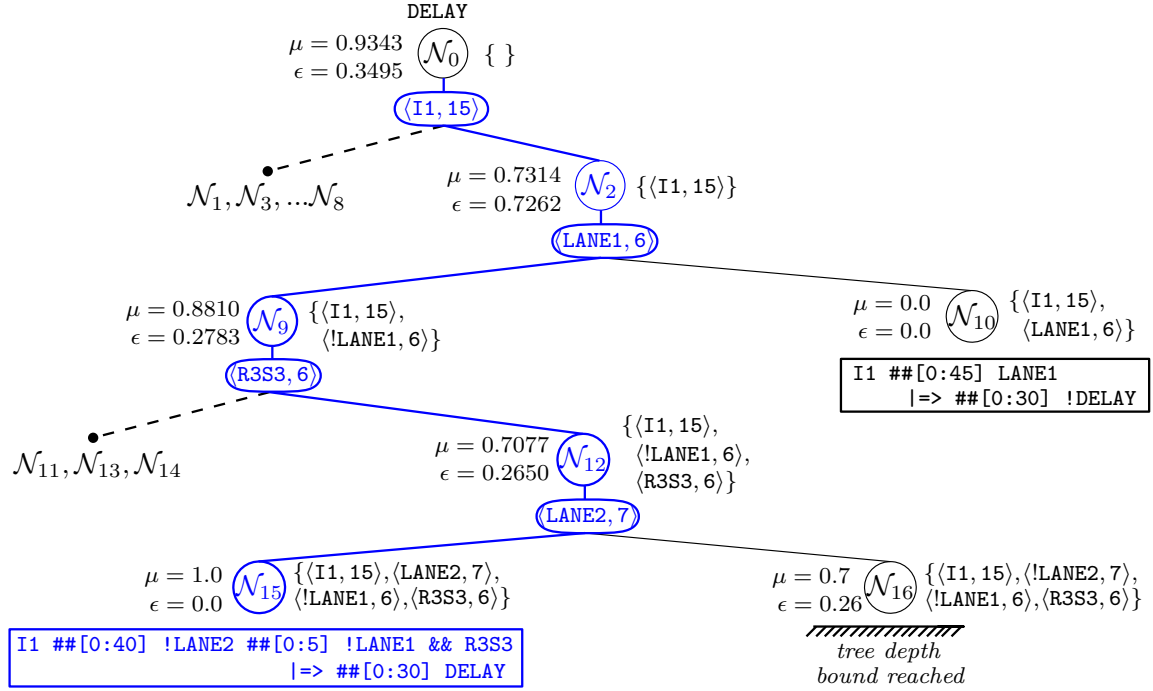}
    \caption{Decision Tree for mining causes of the delayed state {\tt DELAY} for Town-X. A node is represented as a circle, a decision of the form $\langle predicate, bucket \rangle$ is an oval, decisions constraining a node are indicated to the right of each node within braces, while metrics for the node are on its right.}
    \label{fig:decisionTree}
\end{figure}

%\section{Problem Definition}
\change{We consider a given predicate alphabet $\mathbb{P}$ and a predicate, $E \in \mathbb{P}$, that needs explanation, called {\em the target}. %, to be given as a PORV or event, $E$. 
Given a \newchange{finite} set of traces $\mathbb{T}$ %, trace $\mathcal{T}$ 
and a target $E$, we wish to find various prefix sequences that causally 
determine the truth of the target $E$. Each such prefix sequence produces 
a PSI-L formula, which is valid over all traces in $\mathbb{T}$.} %$\mathcal{T}$.
	
We assume a bound $n\in \mathbb{N}$, $n\geq 0$, on the length of the prefix.
\change{We also use a parameter, $k\in \mathbb{R}^{\geq 0}$, called 
{\em delay resolution,} representing an initial upper bound on the time separating 
adjacent predicates in the prefix sequence.}
%We also take as input a resolution $k\in \mathbb{R}$ as a maximum delay between sub-expression in a prefix sequence, that is,
\change{It is assumed that initially every time interval in the prefix sequence, $\tau_i = [0:k]$, $0\leq i\leq n$. The time intervals are refined as the PSI-L property is learned.}

\change{Before developing the theory behind mining PSI-L properties, we convey a high
level intuitive outline of the approach, which we believe will help in the
understanding of the notations and definitions that follow.}

\change{Given the target, $E$ (namely, the consequent), and the values of $n$ and $k$,
we create a template of the following form:
\begin{mycenter}[0em]
{\tt
 $\sqcup_n$ \#\#[0:k]  $\sqcup_{n-1}$ \#\#[0:k] $\ldots$ \#\#[0:k] $\sqcup_1$ \#\#[0:k] $\sqcup_0$ |-> E }
\end{mycenter}
where each $\sqcup_i$ is called a {\em bucket} and represents a placeholder
for a Boolean combination of predicates (denoted $s_i$) from $\mathbb{P}$. \newchange{Initially, therefore, given values for $n$ and $k$, the template describes a sequence of events spread over a time span of, at most, $n \times k$ time units before $E$.} Our algorithm uses  decision trees and works
on this template in two cohesive ways, namely:
\begin{enumerate}[topsep=0.1em]\setlength\itemsep{0.1em}
    \item It uses novel metrics based on information gain to choose the 
        combinations of predicates that go into a bucket. Not all buckets
        may be populated at the end -- the intervals preceding and succeeding 
        empty buckets are merged.
    \item The delay intervals between populated buckets are narrowed to optimize
        the influence of the sequence expression on the target.
\end{enumerate}
We use metrics based on interval arithmetic to compute the influence of predicates
on the target across time intervals. The arithmetic is elucidated by 
hypothetically moving the target backwards in time, so that information gain
metrics can be computed on individual time worlds.}

\change{An example of a decision tree produced by our algorithm for the property described earlier for the delayed state is shown in Figure~\ref{fig:decisionTree}. The  path in the decision tree leading to the node at which a property is found is indicated using bold blue lines. A node in the decision tree is named as $\mathcal{N}$ and given an index. When the error $\epsilon$ at a node is non-zero, a choice of predicate and bucket is made and two child nodes are generated. If the decision tree depth bound is reached, no more decisions can be made. Some nodes, such as $\mathcal{N}_{10}$ and $\mathcal{N}_{15}$ are nodes with zero error. For such nodes, a property can be constructed using predicates and bucket positions labeling the path from the node to the root. A predicate is false on the left branch and true on the right branch. For instance, the property described earlier is generated at $\mathcal{N}_{15}$, and consists of the buckets $\sqcup_{15} = \{{\tt I1}\}$, $\sqcup_{7} = \{{\tt LANE2}\}$ and $\sqcup_{6} = \{{\tt !LANE1, R3S3}\}$. A delay resolution of $5$ is used here. The delays between buckets is computed using this delay resolution and the bucket indexes. Refinement of delays may be possible in some instances, and we explore this later.}

%The user may provide a set of known predicates which form the predicate alphabet $\mathbb{P}$ used by the mining algorithm. The set $\mathbb{P}$ may also be extracted before hand using existing techniques~\cite{Guralnik1999} to learn \textit{interesting} events in the trace. Parameterized predicates may also be learned using parameter optimization techniques, however this is not the focus of this article. We assume, for now, for ease of understanding, that the set $\mathbb{P}$ is given.

\section{PSI-Arithmetic and Preliminary Definitions}\label{sec:decisionPrelims}

A summary of the methodology for mining prefix sequences is depicted in Figure~\ref{fig:toolflow}. 
Initially, \change{an} event $E$ (the consequent) is presented as the target. 
Prefix sequences that appear to {\em cause} $E$ are to be mined (these are the
potential antecedents). The antecedent and the consequent together define the
mined property. It is important to note that the non-existence of a 
counter-example in the data is a necessary but not sufficient condition for
a property to be mined.

The truth intervals of a predicate in a trace define a Boolean trace. The given
trace is initially replaced by the Boolean traces corresponding to 
the predicate alphabet $\mathbb{P}$.

We use interval arithmetic to represent and analyze truths of predicates over 
dense-time. We handle time arithmetically, instead of as a series of samples, 
making the methodology robust to variations in the mechanism used for sampling 
the data. This also allows us to parameterize time delays between sequenced 
\change{events}, and compute the trade-offs involved while varying the temporal 
positions of the events. \change{All definitions are with respect to a single 
trace for ease of explanation, but the methodology easily extends to dealing 
with multiple traces, as discussed later in Section~\ref{sec:multipleTraces}.}

\begin{definition}\label{def:intervalSet}
	\textbf{Interval Set of a predicate $P$ for trace $\mathcal{T}$: }
	The Interval Set of a predicate $P$ for trace $\mathcal{T}$, $\mathcal{I}_{\mathcal{T}}(P)$, is the set of all non-overlapping maximal time intervals, $[a,b); ~a,b\in\mathbb{R}_{\geq 0};~a< b$, in $\mathcal{T}$ where $P$ is true.	
	The interval $[t_i:t_j) \in \mathcal{I}_\mathcal{T}(P)$ iff $\forall_{i\leq k<j} \mathcal{T}(k) \models P$. 
	The length of the interval set $\mathcal{I}_{\mathcal{T}}(P)$, denoted $|\mathcal{I}_{\mathcal{T}}(P)|$ is defined as,\linebreak 
	$|\mathcal{I}_{\mathcal{T}}(P)| = \Sigma_{\forall I=[a,b) \in \mathcal{I}_{\mathcal{T}}(P)} (b-a)$ 
	\newchange{For an interval $I=[a,b)$, the left and right values of the interval are denoted $l(I)$ and $r(I)$, denoting $a$ and $b$ respectively.}
\qed
\end{definition}

\begin{figure}[t]
	\centering
	\includegraphics[scale=1.2]{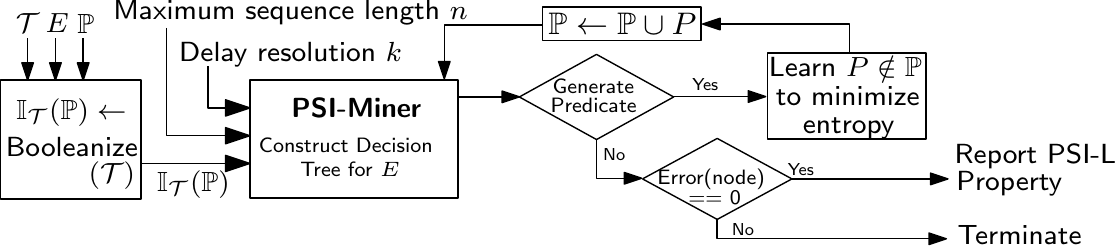}
	\caption{Prefix Sequence Property Mining Workflow}\label{fig:toolflow}
\end{figure}

%{\color{red}
%PD Comment: It is possibly better to avoid the following assumption at this place in the paper. For us the data is the holy grail. In a more appropriate place (perhaps where we actually look at the circuit domain test cases) we may write that the data may be made more accurate with respect to the predicate truths using constructs like cross events of Verilog-AMS.

%\noindent \textbf{\textit{Assumption}:} It is assumed that the trace is the result of a \textit{sufficiently} accurate sampling of the process under observation with respect to the choice of predicates in $\mathbb{P}$. This is possible to achieve during a simulation of a mixed-signal circuit. For more information, the interested reader may refer to Ref.~\cite{subhankar12a}.}

\begin{definition}\label{def:truthSet}
	\textbf{Truth Set for trace $\mathcal{T}$ and Predicate Set $\mathbb{P}$:}
	The Truth Set for $\mathbb{P}$ in trace $\mathcal{T}$, $\mathbb{I}_{\mathcal{T}}(\mathbb{P})$, is the set of all Interval Sets for the trace	$\mathcal{T}$ of all predicates $P \in \mathbb{P}$. $\mathbb{I}_{\mathcal{T}} = \{ \mathcal{I}_{\mathcal{T}}(P) | P \in \mathbb{P}\}$.\qed \label{def:TruthSet}
\end{definition}
\change{
Essentially, the trace $\mathcal{T}$ is translated into a \textit{Truth Set}, 
namely the set of all labeled interval sets for predicates in $\mathbb{P}$. 
The truth set acts a Booleanized abstraction of the trace $\mathcal{T}$ with
respect to $\mathbb{P}$.}

%{\color{red}
\change{Recall the following template of the mined properties as outlined in the previous section:
\begin{mycenter}[0em]
{\tt
 $\sqcup_n$ \#\#[0:k]  $\sqcup_{n-1}$ \#\#[0:k] $\ldots$ \#\#[0:k] $\sqcup_1$ \#\#[0:k] $\sqcup_0$ |-> E }
\end{mycenter}
where each $\sqcup_i$ is called a {\em bucket}.
We propose a decision tree learning methodology for mining prefix sequences.
Every path of the learned decision tree leads
to a true or false decision, representing the truth of the consequent. Each node
of the decision tree corresponds to a pair $\langle P, i\rangle$, namely a
chosen predicate and its position (the {\em bucket}) in the prefix sequence. 
Different branches correspond to different choices of predicates in different buckets. 
The accumulated choices along a path of the decision tree define a partial prefix 
sequence, where some of the buckets have been populated. These accumulated choices 
shall be referred to as a {\em constraint set}.%}
}
\begin{definition}\label{def:constraintSet}
	\textbf{Constraint Set: }
	A constraint is a pair, $\langle P, i\rangle$, consisting of a predicate, $P$, and 
	its position in the prefix-sequence, where $P\in \mathbb{P}$, $i\in[0:n]$, $n\in\mathbb{N}$. A constraint set $\mathcal{C}$ is a set of constraints at a node in 
	the decision tree obtained by accumulating the constraints at its ancestors in the
	tree. 
\qed
\end{definition}

\begin{definition}\label{def:bucket}
\textbf{Prefix-Bucket: } \change{For a constraint set $\mathcal{C}$, the prefix-bucket at position 
$i\in \mathbb{N}$, given as $\mathcal{B}_i(\mathcal{C})$, is the set of all predicates $P$, where
$\langle P, i\rangle \in \mathcal{C}$. The set of all buckets for a constraint set 
$\mathcal{C}$ is written as $\mathbb{B}(\mathcal{C})$ or simply $\mathbb{B}$ if constraint 
set $\mathcal{C}$ is known from context.}

\change{
The term prefix-bucket refers to the set of constraints in a bucket. When the constraint 
set $\mathcal{C}$ is known, we use the notation  $\mathcal{B}_i$ to mean  $\mathcal{B}_i(\mathcal{C})$. 
}

\change{The set of constraints in $\mathcal{C}$ define a partial prefix in PSI-L. In the prefix-sequence $s_n ~\tau_{n} ~s_{n-1} ~\tau_{n-1} \ldots \tau_{1}~s_0$, the sub-expression $s_i$, $0\leq i \leq n$, is formed by the conjunction of predicates in the bucket $\mathcal{B}_i(\mathcal{C})$. The partial prefix sequence formed from the constraint set $\mathcal{C}$ is denoted as $\mathcal{S}_\mathcal{C}$.
For a constraint set $\mathcal{C}$, the interval set for bucket $\mathcal{B}_i$, given as $\mathcal{I}_{\mathcal{T}}(\mathcal{B}_i)$, is the set of truth intervals where the constraints in $\mathcal{B}_i$ are all true.
}\qed
\end{definition}

\begin{comment}   computed as follows:	
\begin{equation}
\mathcal{I}_{\mathcal{T}}(\mathcal{B}_i(\mathcal{C})) = flatten\Biggl(~
\bigcup_{\substack{{I \in \mathcal{I}_{\mathcal{T}}(P), \langle P,i\rangle\in\mathcal{B}_i} \\ {I' \in \mathcal{I}_{\mathcal{T}}(Q), \langle Q,i\rangle\in\mathcal{B}_i}\\ P\neq Q } } \hspace*{0em}(I \cap I') \Biggr)
\label{eq:intervalSet}
\end{equation}	
where, $flatten(\mathcal{I})$ merges intervals that have an overlap. %For instance, let the interval set of $P$ be $\{[1.1:3),[5.2:5.6),[7.1:9.6)\}$, and let the interval set of $Q$ be $\{[0.5:3.4),[6:8.3)]\}
\end{comment}

The learning algorithm must place predicate and event constraints into various buckets. Some buckets may remain empty, resulting in the delays in the sequence that appear before and after it to merge.

\begin{definition}\label{def:intervalWorkSet}
	{\bf Interval Work-Set:}	
	\change{An interval work-set $\mathcal{W}_0^n$ is a set, 
	$\{{I}_{\mathcal{B}_0},I_{\mathcal{B}_1},{I}_{\mathcal{B}_2},...,{I}_{\mathcal{B}_n}|$ $ {I}_{\mathcal{B}_i} \in
	\mathcal{I}_\tau(\mathcal{B}_i), 0\leq i \leq n\}$, 
	of labeled truth intervals for the set of labelled buckets, $\mathbb{B} = \{\mathcal{B}_0$, $\mathcal{B}_1$, $\mathcal{B}_2$,..., $\mathcal{B}_n\}$, of constraint set $\mathcal{C}$.
	We also define $\mathcal{W}_i^k = \{{I}_{\mathcal{B}_j} | i \leq j \leq k\}$.}
	
	\change{For trace $\mathcal{T}$, different combinations of bucket truth intervals, produce unique interval work-sets. The set of all work sets that can be derived from $\mathcal{I}_\tau(\mathcal{B}_i), 0\leq i \leq n$, is given as $\mathbb{W}_\mathcal{C}$ or simply $\mathbb{W}$ when the context $\mathcal{C}$ is known.}
	\qed
\end{definition}

\change{Intuitively, each Interval Work-Set is constructed by choosing
one interval from each prefix bucket. This has been further illustrated in 
Example~\ref{ex:forwardInfluence}. }

%\change{We compute the \textit{Forward Influence} and \textit{Backward Influence} for a sequence expression $\mathcal{S}$. The forward influence is the set of time points looking forward in time, for which the possible outcomes are constrained by the sequence expression. On the other hand, the backward influence gives the time points looking backward in time that are constrained by the sequence expression. \pd{What do you mean by "outcomes" here? Please consider replacing this paragraph with the following one.}}

%{\color{red}
\change{We use the notion of \textit{Forward Influence} to find the set of time
points that qualify as a {\em match} for a prefix sequence following Definition~\ref{def:seqExprMatch}. Since the designated time of the match
is the time at which the match ends, we refer to these time points as
{\em end-match} times. End-match times can be spread over an interval,
and we shall refer to such intervals as {\em end-match intervals}.}
%}

\begin{definition}\label{def:forwardInfluenceMining}
{\bf Forward Influence $\mathds{F}(\mathcal{S},\mathcal{W}_0^n)$}:
% The forward influence is the set of time points for which the possible outcomes are constrained by the seq expression.
% The influences are the time points that affect the match outcomes of the sequence expression.
The forward influence for a prefix sequence expression $\mathcal{S} =  s_n~ \tau_{n}~ s_{n-1}~ ... ~\tau_{1}~ s_0$, 
given the interval work-set $\mathcal{W}_0^n = \{{I}_{\mathcal{B}_0}, ..., {I}_{\mathcal{B}_n}\}$, 
is an interval, recursively defined as follows:
\begin{center}
\begin{tabular}{lll}
$\mathds{F}(\mathcal{S},\mathcal{W}_i^i)$&$=$&${I}_{\mathcal{B}_i}$ \\	
$\mathds{F}(\mathcal{S},\mathcal{W}_i^j)$&$=$&$(\mathds{F}(\mathcal{S},\mathcal{W}_{i+1}^{j}) \oplus \tau_{i})~ \cap ~  {I}_{\mathcal{B}_i}$ , $0\leq i < j \leq n$
%$\mathds{F}(S,\mathcal{W}_1^n)$&$=$&$(\mathds{F}(S,\mathcal{W}_2^{n}) \oplus d_{1})~ \cap ~  {I}_{s_1} $
\end{tabular}
\end{center}
\change{where, $\oplus$ represents the Minkowski sum of intervals: $[\alpha:\beta] \oplus [a:b] = [\alpha+a:\beta+b]$.}
For $\mathcal{S}$, the set of intervals in $\mathds{F}(\mathcal{S},\mathcal{W}_0^n)$ are called the \textbf{end-match} intervals of $\mathcal{S}$. The set of all forward influence intervals over all work sets $\mathbb{W}$, is $\mathds{F}(\mathcal{S},\mathbb{W}) = \bigcup_{\substack{
\mathcal{W}_0^n \in \mathbb{W}
}} 
\mathds{F}(\mathcal{S},\mathcal{W}_0^n)$.
\qed
\end{definition}

\change{
\begin{example} \label{ex:forwardInfluence}
{
Consider the sequence expression \change{$\mathcal{S}$ $\equiv$} {\tt $s_2$ \#\#[1:4] $s_1$ \#\#[2:8] $s_0$}, and \change{bucket} truth interval sets \change{for a trace $\mathcal{T}$}, $\mathcal{I}_\mathcal{T}(\mathcal{B}_2) = \{[2:4]\}$, $\mathcal{I}_\mathcal{T}(\mathcal{B}_1) = \{[3:5], [7:9]\}$ and $\mathcal{I}_\mathcal{T}(\mathcal{B}_0) = \{[4:9], [12:19]\}$. 
There are $1\times2\times2 = 4$ interval work-sets.

The computation of forward influence using Definition~\ref{def:forwardInfluenceMining}, for each possible interval work-set is described in the form of a tree in Figure~\ref{fig:forwardInfluenceMining}. 
An interval work-set is the set of truth intervals of buckets encountered along a path from the root to a 
leaf node in the tree. The tree is rooted at a node corresponding to the truth interval $[2:4]$ for $\mathcal{B}_2$. 
Level $i$ in the tree corresponds to the computation of $\mathds{F}(\mathcal{S},\mathcal{W}_i^2)$.

The sequence expression, therefore, has four end-match intervals, namely
$[5:9]$, $[9:9]$, $[12:13]$, and $[12:16]$. These intervals are contributed
by different combinations of truth intervals of the constituting predicates
(that is, different interval work-sets), and may have overlaps.} \qed
\end{example}
}
\change{Even though a truth interval of a predicate in the work-set may contribute to a match of a sequence expression, {\em not all the points in that truth interval
may participate in the contribution}. In other words, it is quite possible \newchange{that only} some sub-interval of a truth interval actually \newchange{takes part in} the forward influence. Finding such sub-intervals eventually leads us to find the {\em begin-match intervals} for a sequence expression.}

\change{In order to find the sub-intervals of the intervals in the work-set, we use
a {\em backward influence} computation as defined below.
}

\begin{figure}
        \begin{subfigure}[b]{0.5\textwidth}
                \centering
                	\includegraphics[width=0.9\linewidth]{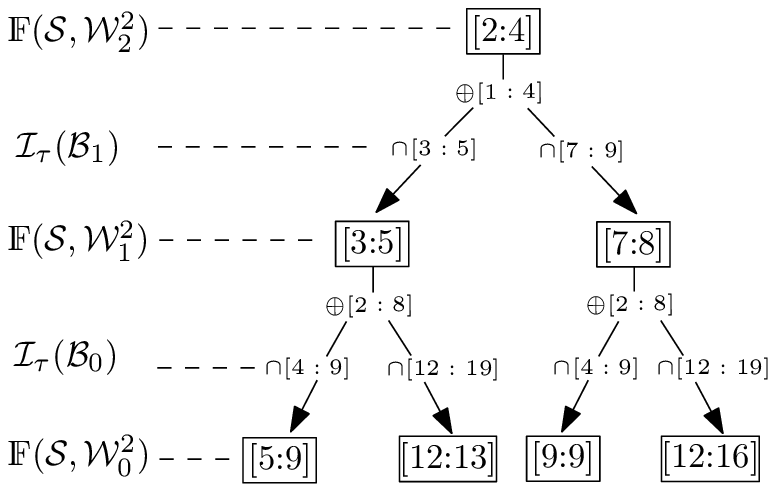}
                	\caption{Forward Influence computation }\label{fig:forwardInfluenceMining}
        \end{subfigure}%
        \begin{subfigure}[b]{0.5\textwidth}
                \centering
                \includegraphics[width=0.9\linewidth]{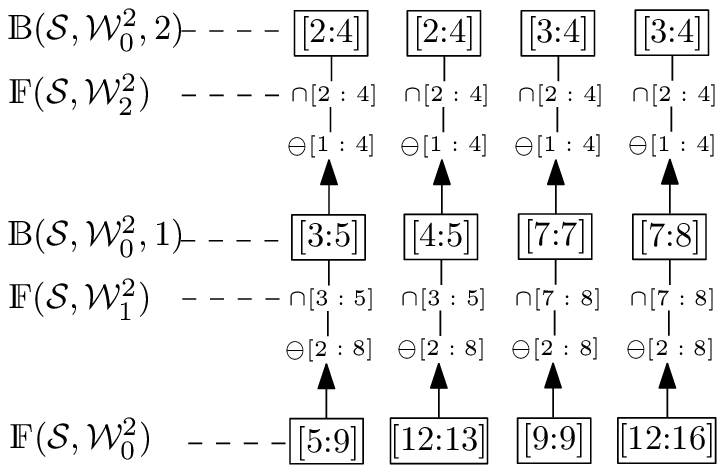}
               	\caption{Backward Influence computation}\label{fig:backwardInfluenceMining}
        \end{subfigure}%
        \caption{Influences computed for $\mathcal{B}_2 \#\# [1:4]~ \mathcal{B}_1 \#\#[2:8] ~\mathcal{B}_0$, with $\mathcal{I}_\tau(\mathcal{B}_2) = \{[2:4]\}$, $\mathcal{I}_\tau(\mathcal{B}_1) = \{[3:5],[7:9]\}$ and $\mathcal{I}_\tau(\mathcal{B}_0) = \{[4:9],[12:19]\}$}\label{fig:animals}
\end{figure}

\begin{definition}\label{def:backwardInfluenceMining}
{\bf Backward Influence $\mathds{B}(\mathcal{S},\mathcal{W}_0^n,i)$}:	
The backward influence, $\mathds{B}(\mathcal{S},\mathcal{W}_0^n,i)$, $i\in[0,n]$, for a sequence expression $S =  s_n~ \tau_{n}~ s_{n-1}~ ... ~\tau_{1}~ s_0$, 
given the interval work-set $\mathcal{W}_0^n = \{{I}_{\mathcal{B}_0},{I}_{\mathcal{B}_1}, ..., {I}_{\mathcal{B}_n}\}$, 
is an interval defined as follows:
\begin{center}
	\begin{tabular}{lll}
		$\mathds{B}(\mathcal{S},\mathcal{W}_0^n,0)$&$=$&$\mathds{F}(\mathcal{S},\mathcal{W}_0^n)$\\
		$\mathds{B}(\mathcal{S},\mathcal{W}_0^{n},i)$&$=$&$(\mathds{B}(\mathcal{S},\mathcal{W}_0^n,i-1) \ominus \tau_{i}) \cap  \mathds{F}(\mathcal{S},\mathcal{W}_i^n)$ , $0<i\leq n$\\ %\mathds{F}(S,\mathcal{W}_1^k)$,\\%{I}_{s_k}
		%&&\hspace{10em}$1\leq k < n$\\
	\end{tabular}
\end{center} 
\change{where, $\ominus$ represents the Minkowski difference of intervals: $[\alpha:\beta] \ominus [a:b] = [\alpha-b:\beta-a]$.} 
For $\mathcal{S}$, the set of intervals in $\mathds{B}(\mathcal{S},\mathcal{W}_0^{n},n)$ are called the \textbf{begin-match} intervals of $\mathcal{S}$. The set of all backward influence intervals at position $i$, over all work sets $\mathbb{W}$, is $\mathds{B}(\mathcal{S},\mathbb{W},i) = \bigcup_{\substack{
\mathcal{W}_0^n \in \mathbb{W}
}} 
\mathds{B}(\mathcal{S},\mathcal{W}_0^n,i)$.
\qed
\end{definition}

For a given prefix sequence expression $\mathcal{S}$ having at most $n$ time intervals, and interval work-set $\mathcal{W}_0^n$, we use the shorthand notation $\mathds{F}_0^i$ to represent $\mathds{F}(\mathcal{S},\mathcal{W}_0^i)$, and $\mathds{B}_0^i$ to represent $\mathds{B}(\mathcal{S},\mathcal{W}_0^n,i)$.  \change{Computing backward influences will, as we see later, also allow us to refine delay intervals between non-empty buckets of a learned property.}

\begin{example}\label{ex:backwardInfluence}
\change{
We continue with the example of Figure~\ref{fig:forwardInfluenceMining}. 
For each interval set, the backward influence is computed, using Definition~\ref{def:backwardInfluenceMining}. 
The computation begins with the leaves of the tree in Figure~\ref{fig:forwardInfluenceMining}, and proceeds backwards 
through the sequence expression to determine the intervals corresponding to each match, indicated as a bottom-up 
computation in Figure~\ref{fig:backwardInfluenceMining}.}

\change{In a sequence expression computing the forward influence is not sufficient to identify the sequence of intervals contributing to a match. We explain this using Figure~\ref{fig:backwardInfluenceMining}. 
Observe the second column in Figure~\ref{fig:backwardInfluenceMining} corresponding 
to the interval work-set $\mathcal{W}_0^2= \{[12:19]_{\mathcal{B}_0},[3:5]_{\mathcal{B}_1},[2:4]_{\mathcal{B}_2}\}$. From Figure~\ref{fig:forwardInfluenceMining}, the 
forward influence computes the influence intervals to be $[2:4]$, $[3:5]$, $[12:13]$. The interval $[3:5]$ 
corresponds to the forward influence match up to $\mathcal{B}_1$. The truth interval of $\mathcal{B}_0$ under consideration for this 
match is the interval $[12:19]$. Of the truth interval $[3:5]$ of $\mathcal{B}_1$, observe that $[3:4) \oplus [2:8] = 
[5:12)$,  $[5:12) \cap [12:19] = \emptyset$, which does not fall within the truth interval $[12:19]$ of 
$\mathcal{B}_0$, and thus truth interval $[3:4)$ cannot contribute to a match. On the other hand, $[4:5] \oplus [2:8] = [6:13]$, and $[6:13] 
\cap [12:19] = [12:13]$. Therefore, of the interval $[3:5]$, only $[4:5]$ contributes to a match.} \qed
\end{example}
%
%\pd{Instead of the following proposition, can we give a single theorem covering 
%what we achieve by using the combination of forward and backward influence?}
%
%\begin{proposition}
%For a potential infinite continuum of prefix sequence expression matches associated with an interval work-set, tight delay intervals between sub-expressions of the sequence can be computed using the backward influence.\qed
%\end{proposition}

%\pd{Please consider calling this an observation instead of a theorem. If you wish to keep it as a theorem, then you must provide a brief proof.}
\begin{observation}
\change{For a potential infinite continuum of prefix sequence expression matches associated with an interval work-set, the forward-influence computes the corresponding end-match time intervals, and the backward influence computes sub-intervals from work-sets that take part in matches.}
%\pd{backward influence computes delay intervals between buckets in the sequence-expression. $\leftarrow$ But backward influence does not seem to change the delays "between" buckets?}
\qed
\end{observation}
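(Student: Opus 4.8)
The plan is to make the informal Observation precise as a soundness-and-completeness statement relating the dense-time match semantics of Definition~\ref{def:seqExprMatch} to the interval arithmetic of Definitions~\ref{def:forwardInfluenceMining} and~\ref{def:backwardInfluenceMining}. Fix a work-set $\mathcal{W}_0^n$, which pins one truth interval $I_{\mathcal{B}_i}$ per bucket. Call a tuple of time-points $(u_n,\ldots,u_0)$ a \emph{match supported by} $\mathcal{W}_0^n$ if $u_i \in I_{\mathcal{B}_i}$ for every $i$ and $u_{i-1}-u_i \in \tau_i$ for $i=1,\ldots,n$; its end-match time is $u_0$. I would prove two facts. (F): $t \in \mathds{F}(\mathcal{S},\mathcal{W}_0^n)$ iff some match supported by $\mathcal{W}_0^n$ has end-match time $t$; and more generally $t \in \mathds{F}(\mathcal{S},\mathcal{W}_i^n)$ iff some prefix $(u_n,\ldots,u_i)$ obeying the separations $\tau_{i+1},\ldots,\tau_n$ has $u_i=t$. (B): $u \in \mathds{B}(\mathcal{S},\mathcal{W}_0^n,i)$ iff some match supported by $\mathcal{W}_0^n$ uses $u_i=u$. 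Taking unions over $\mathbb{W}$ then yields the global statements for $\mathds{F}(\mathcal{S},\mathbb{W})$ and $\mathds{B}(\mathcal{S},\mathbb{W},i)$, and since each $I_{\mathcal{B}_i}$ is a finite union of intervals, the uncountable continuum of supported matches is represented by finitely many end-match and begin-match intervals, which is exactly the content of the Observation.

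For (F) I would induct downward on $i$ from $n$ to $0$. The base case $\mathds{F}(\mathcal{S},\mathcal{W}_n^n)=I_{\mathcal{B}_n}$ is immediate, since a length-one prefix imposes no separation constraint. For the inductive step I would expand $\mathds{F}(\mathcal{S},\mathcal{W}_i^n)=(\mathds{F}(\mathcal{S},\mathcal{W}_{i+1}^n)\oplus \tau_{i+1})\cap I_{\mathcal{B}_i}$ (the $\oplus$ using the separation between the adjacent buckets $i{+}1$ and $i$) and use the two crucial facts about intervals: the Minkowski sum is \emph{exact}, i.e.\ a witness $t'\in\mathds{F}(\mathcal{S},\mathcal{W}_{i+1}^n)$ and $\delta\in\tau_{i+1}$ with $t=t'+\delta$ exist iff $t$ lies in the summed interval; and intersecting with $I_{\mathcal{B}_i}$ enforces exactly the membership $u_i\in I_{\mathcal{B}_i}$. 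The witness $t'$ is precisely the end-point $u_{i+1}$ of a shorter prefix delivered by the induction hypothesis, and $t-t'=\delta$ is the required separation $u_i-u_{i+1}\in\tau_{i+1}$. Exactness of $\oplus$ on connected sets is what makes both directions hold simultaneously; it also shows each $\mathds{F}(\mathcal{S},\mathcal{W}_i^n)$ is itself a single interval, justifying the word ``interval'' in the definition.

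For (B) I would induct upward on $i$ from $0$ to $n$, reusing (F). The base case $\mathds{B}(\mathcal{S},\mathcal{W}_0^n,0)=\mathds{F}(\mathcal{S},\mathcal{W}_0^n)$ is (F) at $i=0$. For the step I would read $\mathds{B}(\mathcal{S},\mathcal{W}_0^n,i)=(\mathds{B}(\mathcal{S},\mathcal{W}_0^n,i-1)\ominus \tau_i)\cap \mathds{F}(\mathcal{S},\mathcal{W}_i^n)$ as a conjunction of two requirements on a candidate $u=u_i$: membership in $\mathds{B}(\mathcal{S},\mathcal{W}_0^n,i-1)\ominus \tau_i$ says, by exactness of $\ominus$, that there is a $w=u_{i-1}\in \mathds{B}(\mathcal{S},\mathcal{W}_0^n,i-1)$ with $w-u\in\tau_i$, and by the induction hypothesis $w$ extends to a full match on the \emph{suffix} $s_{i-1},\ldots,s_0$; membership in $\mathds{F}(\mathcal{S},\mathcal{W}_i^n)$ says, by (F), that $u$ is the end of a valid \emph{prefix} $s_n,\ldots,s_i$. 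The core is a gluing step: because the prefix separations $\tau_{i+1},\ldots,\tau_n$ and the suffix separations $\tau_{i-1},\ldots,\tau_1$ involve disjoint sets of time-points, coupled only through the single link $u_{i-1}-u_i=w-u\in\tau_i$, the two partial matches combine into one full match supported by $\mathcal{W}_0^n$ with $u_i=u$; conversely any such full match splits at $u_i$ into a prefix and a suffix witnessing the two memberships.

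I expect the main obstacle to be this gluing argument in (B): one must verify that freely recombining a prefix witness from $\mathds{F}$ with an independently chosen suffix witness from the induction hypothesis never violates a constraint, which hinges on the fact that no single $\tau_l$ couples time-points on both sides of index $i$ other than $\tau_i$ itself. A secondary point requiring care is the passage from the per-work-set statement to the union over $\mathbb{W}$: distinct work-sets may contribute overlapping end-match intervals (as Example~\ref{ex:forwardInfluence} already exhibits), so I would phrase (F) and (B) as exact characterizations of the matches supported by a \emph{fixed} work-set and only afterwards union over $\mathbb{W}$, rather than claiming that the per-work-set intervals are disjoint.
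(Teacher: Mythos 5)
Your proposal is correct, and it supplies an actual proof where the paper offers none: the Observation is stated bare, resting implicitly on the recursive structure of Definitions~\ref{def:forwardInfluenceMining} and~\ref{def:backwardInfluenceMining} and on the worked computations of Examples~\ref{ex:forwardInfluence} and~\ref{ex:backwardInfluence}, which merely illustrate on one sequence expression that forward influence yields end-match intervals and backward influence recovers the contributing sub-intervals. What you do differently is to turn the claim into an exact soundness-and-completeness statement: you formalize a match supported by a fixed work-set as a tuple $(u_n,\ldots,u_0)$ with $u_i\in I_{\mathcal{B}_i}$ and $u_{i-1}-u_i\in\tau_i$, prove by downward induction that $\mathds{F}(\mathcal{S},\mathcal{W}_i^n)$ is exactly the set of end-points of constraint-respecting prefixes (using exactness of $\oplus$ on connected intervals), and prove by upward induction with a gluing step that $\mathds{B}(\mathcal{S},\mathcal{W}_0^n,i)$ is exactly the set of $i$-th points of full matches. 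The gluing is sound for precisely the reason you give --- the separation constraints are local to consecutive indices, so an independently chosen prefix witness from $\mathds{F}$ and suffix witness from the induction hypothesis interact only through $\tau_i$ --- and your per-work-set-then-union ordering correctly handles the overlapping end-match intervals that Example~\ref{ex:forwardInfluence} already exhibits. Your route buys several things the paper leaves tacit: each per-work-set influence is a single (possibly empty) interval, justifying the word ``interval'' in the definitions; $\mathds{B}(\mathcal{S},\mathcal{W}_0^n,i)\subseteq \mathds{F}(\mathcal{S},\mathcal{W}_i^n)\subseteq I_{\mathcal{B}_i}$, which is literally the ``sub-intervals \ldots that take part in matches'' clause; and completeness, i.e., no match of the uncountable continuum is missed by the finitely many computed intervals. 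Two points deserve explicit mention in a final write-up. First, you silently (and correctly) read the recursion in Definition~\ref{def:forwardInfluenceMining} as $\oplus\,\tau_{i+1}$ rather than the $\oplus\,\tau_{i}$ printed there; the printed index is an off-by-one slip, since $\tau_0$ does not exist and only the $\tau_{i+1}$ reading reproduces the numbers of Example~\ref{ex:forwardInfluence}, so the correction should be stated rather than assumed. Second, the paper's match semantics (Definition~\ref{def:seqExprMatch}) is phrased over discrete trace points, whereas your tuple semantics ranges over the dense truth intervals of the Booleanized trace; this is the intended dense-time reading, but it is a formalization choice and should be declared as such.
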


\change{Recall, that} we use the shorthand notation $\mathcal{S}_\mathcal{C}$ to represent the prefix sequence constructed from constraint-set $\mathcal{C}$. %It should be remembered that the prefix sequence expression $\mathcal{S}= s_n ~\tau_{n} ~s_{n-1} ~\tau_{n-1} \ldots \tau_{1}~s_0$ is constructed from the constraint set $\mathcal{C}$. The delay terms between sub-expressions in $\mathcal{S}$ are multiples of the resolution $k$ (as described in Section~\ref{sec:prefixMiningProblem}). Hence, we use $\mathcal{S}_\mathcal{C}$ to represent the prefix constructed from $\mathcal{C}$. If $\mathcal{C}$ is not known, we use $\mathcal{S}$ for the prefix.s \pd{$\leftarrow$ Please check this statement.}

\begin{definition}{\bf Influence Set} \label{def:InfluenceSet}
The influence set for a sequence expression for constraint set $\mathcal{C}$, \change{$S_\mathcal{C} =  s_n~ \tau_{n}~ s_{n-1}~ ... ~\tau_{1}~ s_0$},  given interval sets for each bucket, $\mathcal{I}_\mathcal{T}(\mathcal{B}_i)$, $0\leq i\leq n$, is defined as the union of end-match intervals over all possible work sets $\mathcal{W}_0^n \in \mathbb{W}$, defined  as follows:
$\mathcal{I}_\mathcal{T}(\mathcal{S}_\mathcal{C}) = \bigcup_{\substack{
\mathcal{W}_0^n \in \mathbb{W}
}} 
\mathds{F}(\mathcal{S}_\mathcal{C},\mathcal{W}_0^n)
$
\qed
\end{definition}

\begin{definition}
	\textbf{Length of a Truth Set:}
	The length of a Truth Set $\mathbb{I}_{\mathcal{T}_\mathcal{C}}(\mathbb{P})$, under constraint set $\mathcal{C}$, represented by $|\mathbb{I}_{\mathcal{T}_\mathcal{C}}(\mathbb{P})|$ is the length of the influence set, given as $|\mathcal{I}_\mathcal{T}(\mathcal{S}_\mathcal{C})|$.
	\qed 
	\label{def:TruthSetlength}
\end{definition}

%\pd{Use the same example to illustrate the above two definitions.}
\begin{example}
\change{
We continue with the example of Figure~\ref{fig:forwardInfluenceMining}. 
The set of all possible work sets, $\mathbb{W}$, in the figure, is $\mathbb{W}=\{
\{[4:9]_{\mathcal{B}_0},[3:5]_{\mathcal{B}_1},[2:4]_{\mathcal{B}_2}\},
\{[4:9]_{\mathcal{B}_0},[7:9]_{\mathcal{B}_1},[2:4]_{\mathcal{B}_2}\},
\{[12:19]_{\mathcal{B}_0},[3:5]_{\mathcal{B}_1},[2:4]_{\mathcal{B}_2}\},
\{[12:19]_{\mathcal{B}_0},[7:9]_{\mathcal{B}_1},[2:4]_{\mathcal{B}_2}\}\}$. Let $\mathcal{C}$ be the constraint set forming the buckets in the sequence-expression. The influence set for $\mathcal{S}_\mathcal{C}$, is computed in Example~\ref{ex:forwardInfluence} using forward influence. The influence set is $\mathcal{I}_\mathcal{T}(\mathcal{S}_\mathcal{C}) = \{[5:9], [9:9], [12:13], [12:16]\}$. The length of the truth set for $\mathcal{S}_\mathcal{C}$ is $|\mathcal{I}_\mathcal{T}(\mathcal{S}_\mathcal{C})| = 9-5+9-9+13-12+16-12 = 9$.} \qed
\end{example}

This section may be summarized as follows. Properties over dense real-time may match over a continuum of time points. We use time intervals to represent truth points for predicates (Definition~\ref{def:intervalSet}) and sets of predicates (Definition~\ref{def:truthSet}). \change{A prefix sequence is built from a set predicate constraints (Definition~\ref{def:constraintSet}), where each predicate occupies a fixed position (bucket) in the prefix sequence. Multiple predicates sharing the same position in the sequence form a prefix-bucket (Definition~\ref{def:bucket}). A predicate can be true over multiple disjoint time intervals. All predicates in the same bucket are conjuncted together, and hence a bucket of predicates may be true over a set of intervals. Choosing a combination of truth intervals, one truth interval from each bucket position, forms an interval work-set (Definition~\ref{def:intervalWorkSet}). For the prefix-sequence and a given work-set, the forward influence (Definition~\ref{def:forwardInfluenceMining}) provides a mechanism for computing the end time-points associated with the match of the prefix-sequence , while the backward influence (Definition~\ref{def:backwardInfluenceMining}) provides a mechanism for computing the begin time-points associated with the matching end time-points of the forward influence. The set of all end time-points for all work-sets of a prefix-sequence form a set of intervals where the prefix-sequence has influence, the influence set (Definition~\ref{def:InfluenceSet}). The choice of constraints $\mathcal{C}$ limits the length of the truth set (Definition~\ref{def:TruthSetlength}), and determines the decisions made for mining additional constraints.}

\newpage\section{Learning Decision Trees for Sequence Expressions}
\label{sec:concurrent}
{
\label{sec:immediate}
\change{
This section develops the methodology for learning decision trees treating the
target as the decision variable. Each path of the decision tree from the root to
a leaf will represent a prefix sequence for the target or its negation. }

The semantics of sequence expressions allow for both, {\em immediate causality} and {\em future causality} to be asserted. Immediate causality, expressed as $\mathcal{S}$ {\tt |->} $E$, is observed when the truth of sequence $\mathcal{S}$ at time $t$ causes the consequent $E$ to be true at time $t$. Future causality, expressed by the assertion {\tt $\mathcal{S}$ |-> \#\#[a:b] $E$} relates the truth of the sequence $\mathcal{S}$ at time $t$ with the truth of $E$ at time \change{$t' \in [t+a:t+b]$}. \change{In both these forms, $E$ is Boolean, and $\mathcal{S}$ is, by default, a temporal sequence expression.}

\change{An important special case of {\em immediate causal} relations consists of 
relations where $\mathcal{S}$ is strictly Boolean and does not contain any
delay interval. We first present decision making metrics available in standard texts~\cite{Mitchell1997} that we have adapted to interval sets for mining immediate causal relations of this type. Later, we demonstrate through examples that these metrics can lead to incorrect decisions when $\mathcal{S}$ is temporal. At the end, we present metrics and a decision tree algorithm to learn sequence expressions to express {\em future causality} relations.}

\subsection{Decision Metrics for learning Immediate Relations}\label{sec:NodeMetrics}
At each node of the decision tree, statistical measures of \textit{Mean} and \textit{Error} are used to evaluate the node. Standard decision tree algorithms use measures of \textit{Entropy} or \textit{Ginni-Index}~\cite{Aggarwal2015} to measure the disorder and chaos in the data. %The aim is to choose or design decision making metrics that the metric unambiguously distinguishes when a property has counter-examples verses when no counter-examples exist in the data.

For immediate relations of the form $S$ {\tt |->} $E$, where $\mathcal{S}$ and $E$ are Boolean expressions, \change{the property template is {\tt $\sqcup_0$ |-> E}}, and standard Shannon Entropy is used as a measure of disorder. Information gain is used to evaluate decisions at internal nodes of the decision tree. 

\begin{definition}\label{def:mean}
\textbf{Mean$_{\mathcal{T}_\mathcal{C}}$(E)~\cite{Ott2006}:} 
For the target $E$, the proportion of time in trace $\mathcal{T}$ that $E$ is true in the trace constrained by $\mathcal{C}$:
\[
Mean_{\mathcal{T}_\mathcal{C}}(E) =\frac{|\mathcal{I}_{\mathcal{T}}(E) \cap \mathcal{I}(\mathcal{S}_\mathcal{C})|} {|\mathcal{I}(\mathcal{S}_\mathcal{C})|}
\]
The mean represents the conditional probability of $E$ being true under the influence of the constraints in $\mathcal{C}$. 
For convenience, we use $\mu_{\mathcal{T}_\mathcal{C}}(E)$ to refer to $Mean_{\mathcal{T}_\mathcal{C}}(E)$. The mean is not defined when $|\mathcal{I}(\mathcal{S}_\mathcal{C})| = 0$.
\qed
\end{definition}

\change{Note that for this special case of immediate causality, the mean represents the conditional 
probability of $E$ being true under the {\em immediate} influence of the constraints in $\mathcal{C}$. Also, there is only one immediate bucket,
$\sqcup_0$, and therefore all members of $\mathcal{C}$ correspond to
the same bucket. This means $\mathcal{I}(\mathcal{S}_\mathcal{C})$ 
corresponds to those regions of the trace which satisfy all predicates in 
$\mathcal{C}$.}

\begin{lemma}
The mean, $\mu_{\mathcal{T}_\mathcal{C}}(E)$ is 1 if and only if $E$ is true wherever $\mathcal{S}_\mathcal{C}$ is true, and 0 if and only if $E$ is false wherever $\mathcal{S}_\mathcal{C}$ is true.
\end{lemma}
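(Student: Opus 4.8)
The statement to prove is an "if and only if" characterization of when the mean $\mu_{\mathcal{T}_\mathcal{C}}(E)$ equals its extreme values $1$ and $0$. By Definition~\ref{def:mean}, $\mu_{\mathcal{T}_\mathcal{C}}(E) = |\mathcal{I}_{\mathcal{T}}(E) \cap \mathcal{I}(\mathcal{S}_\mathcal{C})| / |\mathcal{I}(\mathcal{S}_\mathcal{C})|$, which is a ratio of interval-set lengths, assumed well-defined (i.e. $|\mathcal{I}(\mathcal{S}_\mathcal{C})| > 0$). The plan is to treat the claim as two separate biconditionals: one for the value $1$ and one for the value $0$, establishing each by chasing the length equality through to a set equality on the underlying truth intervals.

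\textbf{The value 1.} First I would prove the forward direction: if $E$ is true wherever $\mathcal{S}_\mathcal{C}$ is true, then $\mathcal{I}_{\mathcal{T}}(E) \cap \mathcal{I}(\mathcal{S}_\mathcal{C}) = \mathcal{I}(\mathcal{S}_\mathcal{C})$ as interval sets, so the numerator and denominator lengths coincide and $\mu_{\mathcal{T}_\mathcal{C}}(E) = 1$. For the converse, suppose $\mu_{\mathcal{T}_\mathcal{C}}(E) = 1$; then $|\mathcal{I}_{\mathcal{T}}(E) \cap \mathcal{I}(\mathcal{S}_\mathcal{C})| = |\mathcal{I}(\mathcal{S}_\mathcal{C})|$. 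Since $\mathcal{I}_{\mathcal{T}}(E) \cap \mathcal{I}(\mathcal{S}_\mathcal{C}) \subseteq \mathcal{I}(\mathcal{S}_\mathcal{C})$ and the interval-set length from Definition~\ref{def:intervalSet} is the total Lebesgue measure of the covered region, equality of lengths between a subset and a superset forces the set difference $\mathcal{I}(\mathcal{S}_\mathcal{C}) \setminus \mathcal{I}_{\mathcal{T}}(E)$ to have measure zero. Because these are finite unions of intervals with positive-length gaps, zero measure means the difference is empty (up to endpoints, which do not contribute to length). Hence $E$ holds throughout $\mathcal{I}(\mathcal{S}_\mathcal{C})$.

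\textbf{The value 0.} The argument for $\mu_{\mathcal{T}_\mathcal{C}}(E) = 0$ is symmetric. Forward: if $E$ is false wherever $\mathcal{S}_\mathcal{C}$ is true, then $\mathcal{I}_{\mathcal{T}}(E) \cap \mathcal{I}(\mathcal{S}_\mathcal{C}) = \emptyset$, so the numerator length is $0$ and the mean is $0$. Converse: $\mu_{\mathcal{T}_\mathcal{C}}(E) = 0$ with positive denominator forces $|\mathcal{I}_{\mathcal{T}}(E) \cap \mathcal{I}(\mathcal{S}_\mathcal{C})| = 0$, and again, a finite union of intervals of total length zero must be empty, so there is no point where both $E$ and $\mathcal{S}_\mathcal{C}$ are true; equivalently $E$ is false wherever $\mathcal{S}_\mathcal{C}$ is true. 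I would note that I may freely substitute $\mathcal{I}(\mathcal{S}_\mathcal{C})$ for $\mathcal{I}_\mathcal{T}(\mathcal{S}_\mathcal{C})$ here and invoke the remark following the definition, which identifies $\mathcal{I}(\mathcal{S}_\mathcal{C})$ in the immediate case with the region satisfying all predicates of $\mathcal{C}$.

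\textbf{Main obstacle.} The only subtlety — and the step I would handle most carefully — is the inference from equality of lengths to equality (or emptiness) of the interval sets. Length is defined as a sum of interval widths $\sum_{[a,b)} (b-a)$, so two interval sets can in principle have equal length without being identical as point sets if they differ only on a measure-zero set (isolated points or boundary endpoints). I would argue that this measure-zero discrepancy is irrelevant to the truth-everywhere / false-everywhere claim because the intervals in Definition~\ref{def:intervalSet} are maximal half-open intervals $[a,b)$ with $a<b$, so any genuine region where $E$ disagrees with $\mathcal{S}_\mathcal{C}$ would contribute strictly positive length. Thus the biconditional holds exactly under the standing assumption $|\mathcal{I}(\mathcal{S}_\mathcal{C})| > 0$, which I would state explicitly at the outset of the proof.
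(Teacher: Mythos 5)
Your proof is correct and takes essentially the same route as the paper's: the forward directions via the set identities $\mathcal{I}_{\mathcal{T}}(E) \cap \mathcal{I}(\mathcal{S}_\mathcal{C}) = \mathcal{I}(\mathcal{S}_\mathcal{C})$ (resp.\ $\emptyset$), and the converses by arguing that equality of interval-set lengths forces equality of the sets themselves. The only difference is one of rigor, not of method: where the paper simply asserts that $|\mathcal{I}_{\mathcal{T}}(E) \cap \mathcal{I}(\mathcal{S}_\mathcal{C})| = |\mathcal{I}(\mathcal{S}_\mathcal{C})|$ ``is possible only in the situation where'' the sets coincide, and dismisses the $0$ case with ``similarly,'' you supply the missing measure-theoretic justification (a nonempty finite union of half-open intervals $[a,b)$ with $a<b$ has positive length, so a measure-zero difference is empty) and write out the $0$ case explicitly.
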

\begin{proof}
If $E$ is true wherever $\mathcal{S}_\mathcal{C}$ is true, then $\mathcal{I}_{\mathcal{T}}(E) \cap \mathcal{I}(\mathcal{S}_\mathcal{C}) = \mathcal{I}(\mathcal{S}_\mathcal{C})$, therefore $\mu_{\mathcal{T}_\mathcal{C}}(E) = 1$. Conversely, if $\mu_{\mathcal{T}_\mathcal{C}}(E) = 1$, then $ |\mathcal{I}_{\mathcal{T}}(E) \cap \mathcal{I}(\mathcal{S}_\mathcal{C})| =  |\mathcal{I}(\mathcal{S}_\mathcal{C})|$, which is possible only in the situation where  $ \mathcal{I}_{\mathcal{T}}(E) \cap \mathcal{I}(\mathcal{S}_\mathcal{C}) =  \mathcal{I}(\mathcal{S}_\mathcal{C})$.
Similarly, it can be shown that E is false whenever $\mu_{\mathcal{T}_\mathcal{C}}(E) = 0$.
\end{proof}

%\noindent\textbf{Error:}\label{subsec:error}
%For a target $E$, the error is a measure of entropy in the data with regards to the classes $E$ and $\neg E$. It is a measure of how well the set of constraints $\mathcal{C}$ explain $E$. An error value of zero indicates that there is no disagreement in the class ($E$ or $\neg E$) under the constraint set $\mathcal{C}$. 

\begin{definition}\label{def:error}
\textbf{Error$_{\mathcal{T}_\mathcal{C}}(E)$\cite{Aggarwal2015}:} For the target class $E$, the error for the trace $\mathcal{T}$ constrained by $\mathcal{C}$ is defined as follows:
	\begin{align*}
	Error_{\mathcal{T}_\mathcal{C}}(E)
	& =	-\mu_{\mathcal{T}_{\mathcal{C}}}(E) \times log_2(\mu_{\mathcal{T}_{\mathcal{C}}}(E)) 
	- 
	\mu_{\mathcal{T}_{\mathcal{C}}}(\neg E) \times log_2(\mu_{\mathcal{T}_{\mathcal{C}}}(\neg E))
	\end{align*}
%The average deviation of truth of $E$ from Mean$_{\mathcal{T}_\mathcal{C}}$(E) is the error in  trace $\mathcal{T}$ under the constraints in $\mathcal{C}$. Error is a measure of the disorder at the node. Since the domain of values for a $E$ is $\{\top, \bot\}$, the error function may be expressed as follows:
	%\par \setlength{\parskip}{0pt plus 0pt minus 5pt}
	%\allowdisplaybreaks\begingroup
	%\addtolength{\jot}{0.5em}	
\noindent For convenience, we use $\epsilon_{\mathcal{T}_{\mathcal{C}}}(E)$ to refer to  $Error_{\mathcal{T}_\mathcal{C}}(E)$. %This measure may be simplified to the following:
%\begin{equation}
%\epsilon_{\mathcal{T}_{\mathcal{C}}}(E) = 2\times \mu_{\mathcal{T}_{\mathcal{C}}}(E) \times (1 - \mu_{\mathcal{T}_{\mathcal{C}}}(E))
%\end{equation}
\qed
\end{definition}

\begin{lemma}
%For an immediate property, the error, \textit{$\epsilon_{\mathcal{T}_\mathcal{C}}$(E)} for constraint set $\mathcal{C}$ and consequent $E$ is zero if and only if there are no counter-examples for $\mathcal{S}_{\mathcal{C}} ${\tt |->}$ [\neg]E$.
\change{For an immediate property, the error,
\textit{$\epsilon_{\mathcal{T}_\mathcal{C}}$(E)} for constraint set $\mathcal{C}$ and consequent $E$ is zero if and only if $\mathcal{S}_{\mathcal{C}}$ decides the
truth of $E$ or $\neg E$, that is, either there are no counter-examples for $\mathcal{S}_{\mathcal{C}} ${\tt |->}$ E$ or there are no counter-examples for
$\mathcal{S}_{\mathcal{C}} ${\tt |->}$ \neg E$. }
\end{lemma}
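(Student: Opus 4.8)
The plan is to reduce the statement to the well-known behaviour of the binary Shannon entropy function, combined with the preceding lemma that characterises when the mean equals $0$ or $1$. First I would observe that, since the mean is only defined when $|\mathcal{I}(\mathcal{S}_\mathcal{C})| > 0$, the two regions $\mathcal{I}_{\mathcal{T}}(E) \cap \mathcal{I}(\mathcal{S}_\mathcal{C})$ and $\mathcal{I}_{\mathcal{T}}(\neg E) \cap \mathcal{I}(\mathcal{S}_\mathcal{C})$ are disjoint and their union is exactly $\mathcal{I}(\mathcal{S}_\mathcal{C})$. Hence their lengths sum to $|\mathcal{I}(\mathcal{S}_\mathcal{C})|$, so by Definition~\ref{def:mean} we get $\mu_{\mathcal{T}_\mathcal{C}}(E) + \mu_{\mathcal{T}_\mathcal{C}}(\neg E) = 1$, and each mean lies in $[0,1]$ as a ratio of a subset-length to the total length.

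Writing $p = \mu_{\mathcal{T}_\mathcal{C}}(E)$, Definition~\ref{def:error} then reads $\epsilon_{\mathcal{T}_\mathcal{C}}(E) = -p\,\log_2 p - (1-p)\,\log_2(1-p)$, which is the binary entropy $H(p)$ on $[0,1]$ under the standard convention $0\,\log_2 0 = 0$. The next step is to establish that $H(p) = 0$ exactly when $p \in \{0,1\}$ and $H(p) > 0$ for $p \in (0,1)$. This follows because each summand is non-negative on $[0,1]$ (as $\log_2 p \leq 0$ there), so the sum vanishes only when both terms vanish, i.e. $p \in \{0,1\}$; and for $p \in (0,1)$ each term is strictly positive, forcing $H(p) > 0$. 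This yields the chain $\epsilon_{\mathcal{T}_\mathcal{C}}(E) = 0 \iff \mu_{\mathcal{T}_\mathcal{C}}(E) \in \{0,1\}$.

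Finally I would invoke the preceding lemma to interpret the two boundary cases. By that lemma, $\mu_{\mathcal{T}_\mathcal{C}}(E) = 1$ holds exactly when $E$ is true throughout the region where $\mathcal{S}_\mathcal{C}$ is true, i.e. when there are no counter-examples for $\mathcal{S}_\mathcal{C}$ {\tt |->} $E$; symmetrically, $\mu_{\mathcal{T}_\mathcal{C}}(E) = 0$ holds exactly when $E$ is false throughout that region, i.e. when there are no counter-examples for $\mathcal{S}_\mathcal{C}$ {\tt |->} $\neg E$. Chaining these equivalences gives that $\epsilon_{\mathcal{T}_\mathcal{C}}(E) = 0$ iff $\mathcal{S}_\mathcal{C}$ decides the truth of $E$ or of $\neg E$, which is the claim.

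The only delicate point I expect is the boundary convention for the entropy: I must be explicit that $0\,\log_2 0$ is taken to be $0$, so that $\epsilon_{\mathcal{T}_\mathcal{C}}(E)$ is well-defined and continuous at $p = 0$ and $p = 1$, and I should flag that the non-negativity of each term relies on $\mu_{\mathcal{T}_\mathcal{C}}(E),\mu_{\mathcal{T}_\mathcal{C}}(\neg E) \in [0,1]$. Everything else is a routine composition of the entropy characterisation with the mean lemma.
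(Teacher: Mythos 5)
Your proof is correct and follows essentially the same route as the paper's: both arguments reduce $\epsilon_{\mathcal{T}_\mathcal{C}}(E)=0$ to the mean taking a boundary value in $\{0,1\}$ and then translate those boundary cases into the absence of counter-examples for $\mathcal{S}_{\mathcal{C}}$ {\tt |->} $E$ or $\mathcal{S}_{\mathcal{C}}$ {\tt |->} $\neg E$. The only difference is packaging: you use complementarity ($\mu_{\mathcal{T}_\mathcal{C}}(E)+\mu_{\mathcal{T}_\mathcal{C}}(\neg E)=1$, which the paper invokes only parenthetically in its Part~B) to cite the standard binary-entropy characterization and the preceding mean lemma, whereas the paper re-derives those same facts inline from the interval-length definitions.
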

\begin{proof}
We assume that $|\mathcal{I}(\mathcal{S}_\mathcal{C})| > 0$, the sum of lengths of truth intervals for the expression describing $\mathcal{C}$ is non-zero. 

\noindent\textbf{Part A:}
Consider the property $\mathcal{S}_{\mathcal{C}} ${\tt |->}$ E$, where $|\mathcal{I}_{\mathcal{T}}(E) \cap \mathcal{I}(\mathcal{S}_\mathcal{C})| > 0$. 
Assume that there are counter-examples for   $\mathcal{S}_{\mathcal{C}} ${\tt |->}$ E$. 
The counter-examples would introduce a non-zero time interval when $\mathcal{S}_{\mathcal{C}}$ is true and $E$ is false.
Let one of the counter-example time-intervals be the interval $[a:b)$, where $b>a$. 
Therefore, $|\mathcal{I}_{\mathcal{T}}(\neg E) \cap \mathcal{I}(\mathcal{S}_\mathcal{C})|>0$, hence $\mu_{\mathcal{T}_\mathcal{C}}(\neg E) > 0$, and $log_2(\mu_{\mathcal{T}_\mathcal{C}}(\neg E)))<0$ (the mean is bounded between 0 and 1). Similarly the term, $\mu_{\mathcal{T}_\mathcal{C}}(E) \times log_2(\mu_{\mathcal{T}_\mathcal{C}}(E))) < 0$, since $|\mathcal{I}_{\mathcal{T}}(\neg E) \cap \mathcal{I}(\mathcal{S}_\mathcal{C})| > 0$ . Hence, from Definition~\ref{def:error}, $\epsilon_{\mathcal{T}_\mathcal{C}}(E) > 0$. 

Consider the property $\mathcal{S}_{\mathcal{C}} ${\tt |->}$ E$, where $|\mathcal{I}_{\mathcal{T}}(E) \cap \mathcal{I}(\mathcal{S}_\mathcal{C})| = |\mathcal{I}(\mathcal{S}_\mathcal{C})|$. Hence, $|\mathcal{I}_{\mathcal{T}}(\neg E) \cap \mathcal{I}(\mathcal{S}_\mathcal{C})| = 0$. Hence  $\mu_{\mathcal{T}_\mathcal{C}}(E) = 1$,  $\mu_{\mathcal{T}_\mathcal{C}}(\neg E) = 0$ and hence  $Error_\mathcal{T}(E) = 0$. 

The proof for when the property is of the form $\mathcal{S}_{\mathcal{C}} ${\tt |->}$ \neg E$ is identical.

\noindent\textbf{Part B:}
Conversely, if the error is non-zero, then both terms of Definition~\ref{def:error} are non-zero (the intervals for $E$ and $\neg E$ are compliments of each other). From Definition~\ref{def:mean}, $\mu_{\mathcal{T}_\mathcal{C}}(E) > 0$,  $\mu_{\mathcal{T}_\mathcal{C}}(\neg E) > 0$, and hence  $\mathcal{I}_{\mathcal{T}}(E) \cap \mathcal{I}(\mathcal{S}_\mathcal{C}) \neq \mathcal{I}(\mathcal{S}_\mathcal{C})$ and  $\mathcal{I}_{\mathcal{T}}(\neg E) \cap \mathcal{I}(\mathcal{S}_\mathcal{C}) \neq \mathcal{I}(\mathcal{S}_\mathcal{C})$. Therefore, there exists a non-empty interval $[a_1:b_1)$, $b_1>a_1$ where $\mathcal{S}_\mathcal{C}$ is true and $E$ is false, and there exists a non-empty interval $[a_2:b_2)$, $b_2>a_2$ where $\mathcal{S}_\mathcal{C}$ is true and $E$ is true, respectively representing counter-example intervals for $\mathcal{S}_{\mathcal{C}} ${\tt |->}$ E$ and $\mathcal{S}_{\mathcal{C}} ${\tt |->}$ \neg E$.
\end{proof}
\change{
We aim to construct a decision tree where the target, $E$, is the decision variable. Each node of the decision tree represents a predicate, which is chosen on the basis of its utility in separating the cases where $E$ is true from the cases where $E$ is false.
Each branch out of a node represents one of the truth values of the predicate representing that node.
The set of cases at an intermediate node of the decision tree are the time points at which the predicates from the root to that node have precisely the values representing the edges along that path
(recall the definition of a {\em constraint set}). 
}

This utility metric is called the \textit{gain}. While many gain metrics exist~\cite{Aggarwal2015}, we find that Information Gain works best for our two class application. The standard definition of Information Gain is given below.

\begin{definition}\label{def:gain}
\textbf{Gain~\cite{Quinlan1986}:} The gain (improvement in error) of choosing $P \in \mathbb{P}$ to add to constraint set $\mathcal{C}$, at a node having error $\epsilon_{\mathcal{T}_\mathcal{C}}(E)$ is as follows:
\[\pushQED{\qed} Gain = \epsilon_{\mathcal{T}_\mathcal{C}}(E) -  
\frac{
|\mathcal{I}(\mathcal{S}_{\mathcal{C} \cup \{P\}})|}{|\mathcal{I}(\mathcal{S}_\mathcal{C})|
} 
\times 
\epsilon_{\mathcal{T}_{\mathcal{C}\cup\{P\}}}(E)
- 
\frac{
|\mathcal{I}(\mathcal{S}_{\mathcal{C} \cup \{\neg P\}})|}{|\mathcal{I}(\mathcal{S}_\mathcal{C})|
} 
\times 
\epsilon_{\mathcal{T}_{\mathcal{C}\cup\{\neg P\}}}(E)\qedhere
\popQED\]

\end{definition}
}

Applying traditional decision tree learning using Definitions~\ref{def:error} and~\ref{def:gain} on the truth-set is suitable for mining those immediate properties where the antecedent, $\mathcal{S}$, is Boolean. 

To mine prefix sequences with delays, $E$'s truth must be tested with the truth of other predicates over past time points. In Section~\ref{sec:pseudoTargets}, we introduce the notion of {\em pseudo-targets} that allow us to evaluate constraints that have influence on $E$ over a time-span. 
%We show that standard measures used for decision making are not suitable for evaluating the {\em goodness} of a decision involving pseudo-targets. The classification task we deal with is aimed at classifying time points describing behaviours that explain when the target is true or when it is false. While this is similar to a standard two class classification task, it is not. 
%Due to the non-deterministic semantics of prefix-sequences, the two classes may share end-match time points\footnote{This is further elaborated in Example~\ref{ex:sharedTimePointsPseudoTargetTruths}.}, requiring an adapted definition of Shannon Entropy. Similarly, the decision made at each node of the decision tree splits the data-set in a manner that allows sharing of time-points between the branches of a split, thus requiring a special handling of gain. We therefore introduce variations of these measures enabling the best greedy decision be taken. The proposed measures are evaluated using a correlation-coverage metric to measure the proportion of the target's truth covered by the mined prefixes.
In Section~\ref{sec:newMetrics} we provide metrics for evaluating decisions involving pseudo-targets. Section~\ref{sec:gainVtime} provides insights into the proposed metrics and Section~\ref{sec:psi-miner-algo} incorporates them into an algorithm for learning prefixes. In Section~\ref{sec:psi-l-to-formulae} we explain how the learned decision tree is translated into PSI-L temporal logic properties.

\subsection{Pseudo-Targets for Sequence Expressions}\label{sec:pseudoTargets}

% Observe the truth set of predicates $P$ and $E$ in Figure~\ref{fig:pseudo-targets}. The horizontal green bands indicate time intervals where the predicate is true, while the red bands indicate intervals where the predicate is false. It is clear that the truths of $P$ and $E$ do not align. Our intention is to learn a temporal relation between $P$ and $E$ (if such a relation exists). When learning relations in the presence of an alphabet of predicates, it is expensive to examine the relationships between every pair of predicates in the alphabet and their relationship in turn with the target. Additionally, it is important to have domain knowledge about the system to determine the quantum of time in the past of $E$ that a predicate would be expected to have an influence on it, if at all. In order to scale such an analysis to large predicate alphabets we use \textit{pseudo-targets}. Pseudo-targets allow us to compute summary statistical measures that give clear indications of the existence or absence of such relations.  

\change{
We use the notion of {\em pseudo-targets} to compute summary statistical measures for the influence of predicates on the target across time. Recall from 
Section~\ref{sec:motivation} the template form:
\begin{mycenter}[0em]
{\tt
 $\sqcup_n$ \#\#[0:k]  $\sqcup_{n-1}$ \#\#[0:k] $\ldots$ \#\#[0:k] $\sqcup_1$ \#\#[0:k] $\sqcup_0$ |-> \#\#[0:k] E }
\end{mycenter}
The parameters used in the above are the {\em delay resolution}, $k$, and 
the length, $n$, of the sequence expression. Pseudo-targets are generated by 
stretching the target back in time across each of the $n$ delay intervals.
}

\begin{figure}[t]
\centering
\includegraphics[scale=0.8]{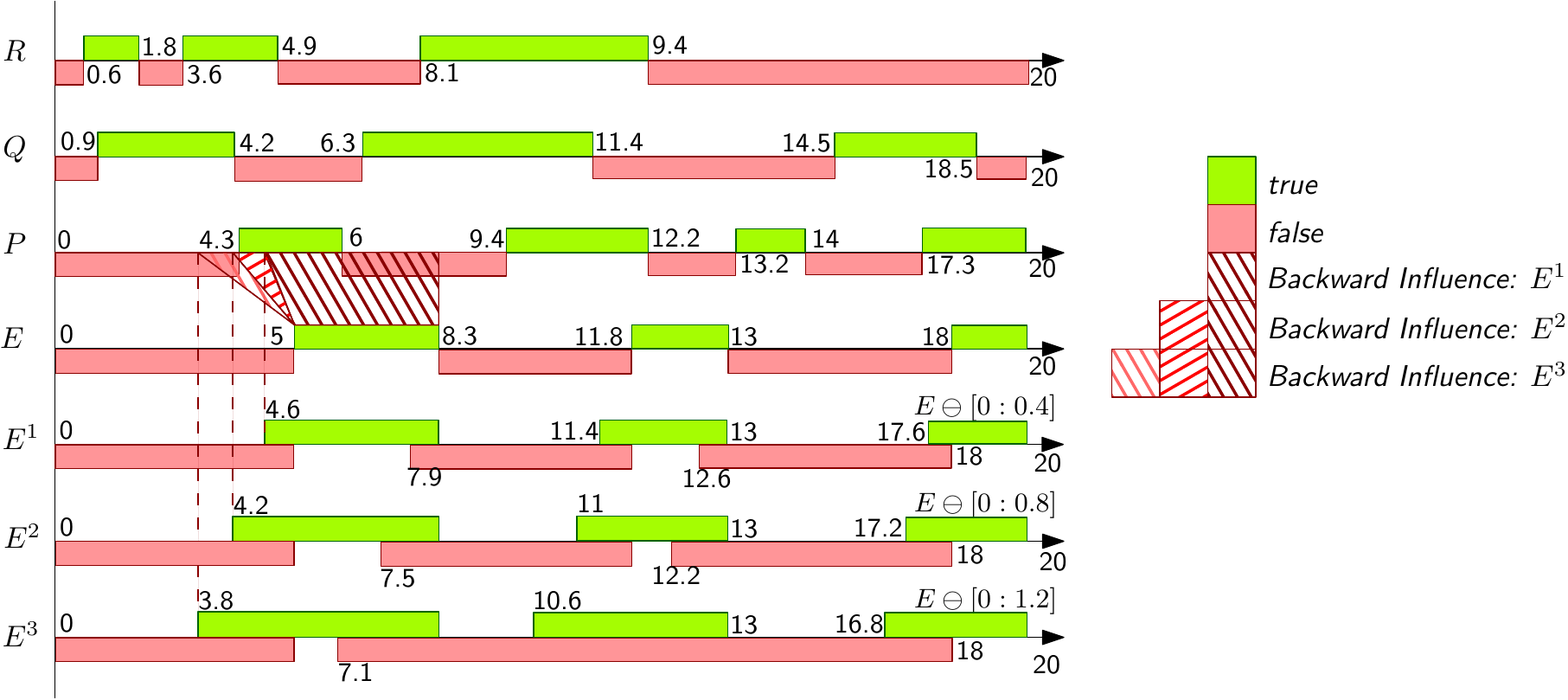}
\caption{Truths of Predicates=\{P,Q,R,E\} and Pseudo-Targets (n=3, k=0.4) for $E$.}\label{fig:pseudo-targets}
\end{figure}

\begin{definition}\label{def:pseudoTarget}
\textbf{Pseudo-Target:}
A pseudo-target is an artificially created target computed by stretching the truth of the target's interval set back in time by a multiple of the delay resolution $k$. The target $E$ stretched back in time by an amount $i\times k$ is denoted as $E^i$. The interval \change{set} for the pseudo-target $E^i$ in trace $\mathcal{T}$ is computed as follows:
\begin{equation}\label{eq:pseudo-target}
\mathcal{I}_\mathcal{T}(E^i) = \mathcal{I}_\mathcal{T}(E) \ominus [0:i\times k] = \bigcup_{I \in \mathcal{I}_\mathcal{T}(E)} I \ominus [0:i\times k]
\end{equation}
%and,
%\begin{equation}
%\mathcal{I}_\mathcal{T}(P) \ominus [a:b] = \bigcup_{I \in %\mathcal{I}_\mathcal{T}(P)} I \ominus [a:b]
%\end{equation}  
\noindent where, the $\ominus$ represents the Minkowski difference between intervals: $[\alpha:\beta] \ominus [a:b] = [\alpha-b:\beta-a]$. The truth intervals for the negated pseudo-targets are computed similarly.\qed
\end{definition}

For a prefix sequence with at-most $n$ \change{delay intervals} and a delay resolution $k$, we augment the truth set, $\mathbb{I}_\mathcal{T}(\mathbb{P})$ to  $\widehat{\mathbb{I}}_\mathcal{T}(\mathbb{P})$   as follows:

\begin{equation}
\widehat{\mathbb{I}}_\mathcal{T}(\mathbb{P}) = \mathbb{I}_\mathcal{T}(\mathbb{P}) \cup (\bigcup_{1\leq i \leq n} \mathcal{I}_\mathcal{T}(E^i))
\cup (\bigcup_{1\leq i \leq n} \mathcal{I}_\mathcal{T}(\neg E^i))
\label{eq:transformedTruthSet}
\end{equation}

\begin{example}\label{ex:sharedTimePointsPseudoTargetTruths}
In Figure~\ref{fig:pseudo-targets}, $n=3$ and $k=0.4$, the truth intervals of three pseudo-targets of predicate $E$ are shown. Pseudo-target $E^i$ is computed according to Equation~\ref{eq:pseudo-target}. For instance, given $\mathcal{I}_\tau(E) = \{[5:8.3),[11.8:13),[18:20)\}$ and $\mathcal{I}_\tau(\neg E) = \{[0:5),[8.3:11.8),[13:18)\}$, $\mathcal{I}_\tau(E^2)$ and $\mathcal{I}_\tau(\neg E^2)$ are computed to be as follows:
\begin{align*}
\pushQED{\qed}
\mathcal{I}_\tau(E^2) & =  \{[5:8.3),[11.8:13),[18:20)\} \ominus [0:0.8]\\
& = \{[4.2:8.3),[11:13),[17.2:20)\} \\
\mathcal{I}_\tau(\neg E^2) & =  \{[0:5),[8.3:11.8),[13:18)\} \ominus [0:0.8]\\
&= \{[0:5),[7.5:11.8),[12.2:18)\}\qedhere
\popQED
\end{align*}
\end{example}

\noindent Observe that in Figure~\ref{fig:pseudo-targets}, the true and false intervals for pseudo-targets are not complementary. We wish to mine prefixes to explain both $E$ and $\neg E$. Hence while generating pseudo-targets, \change{Equation~\ref{eq:pseudo-target}} %\pd{perhaps you mean Equation~\ref{eq:pseudo-target}??}
is also used to generate the pseudo-target truth intervals for when $E$ is false. 
% Due to the non-deterministic match semantics of PSI-L, described in Section~\ref{sec:psi-l_MatchSemantics}, the stretched portions of the targets intervals for its true state and false state overlap. This overlap creates confusion in computing the metrics for decision making at query nodes of the decision tree. We elaborate this in the following section.

\subsection{Effect of Pseudo-Targets on Decision Making}\label{sec:newMetrics}
%In the building of a traditional decision tree (as in Algorithm~\ref{algo:miner-immediate-relations}), at each decision node, statistical measures, $Mean_{\mathcal{T}_\mathcal{C}}(E)$ and $Error_{\mathcal{T}_\mathcal{C}}(E)$, are  computed consistently with respect to a single target, in this case $E$. 

%Here, our aim is to build {\em temporal sequences} of Boolean formulas that explain the target's truth.

At each decision node, we must decide which predicate best reduces the error in the resulting split, while simultaneously choosing a temporal position for the predicate in the $n$-length prefix sequence. We achieve the latter by choosing to test a predicate with each pseudo-target, to identify which pseudo-target (and therefore which position), given a possibly non-empty partial prefix, is most correlated with the predicate under test. The choice of predicate and position that gives the best correlation is then chosen. %We must still decide what measure is best to determine correlation. In the rest of this section, we describe two methods for computing error, the challenges involved, and demonstrate why one is superior to the other. 

At each query node of the decision tree, with constraint set $\mathcal{C}$, for target $E$, the following steps are carried out:
\begin{tcolorbox}
\begin{enumerate}[topsep=0.1em]\setlength\itemsep{0.1em}

\item Compute $Mean_{\mathcal{T}_\mathcal{C}}(\hat{E})$ and $Error_{\mathcal{T}_\mathcal{C}}(\hat{E})$, where $\hat{E}$ is the pseudo-target \textit{applicable} for constraint set $\mathcal{C}$. \label{step:computeNodeMeanError}

\item \label{step:computeGain}\textbf{For each $\langle P,i\rangle$, where $P\in \mathbb{P}, 1\leq i \leq n$, and $\langle P,i\rangle \notin \newchange{\mathcal{C}}$, $\langle \neg P,i\rangle \notin \newchange{\mathcal{C}}$}
\begin{enumerate}[topsep=0.1em]\setlength\itemsep{0.1em}

\item $\mathcal{C}_1 = \mathcal{C} \cup \langle P,i\rangle$, $\mathcal{C}_0 = \mathcal{C} \cup \langle \neg P,i\rangle$.

\item Compute $Mean_{\mathcal{T}_{\mathcal{C}_1}}(\tilde{E})$, $Error_{\mathcal{T}_{\mathcal{C}_1}}(\tilde{E})$, $Mean_{\mathcal{T}_{\mathcal{C}_0}}(\tilde{E})$, $Error_{\mathcal{T}_{\mathcal{C}_0}}(\tilde{E})$. Here, $\tilde{E}$ is a pseudo-target that may differ from $\hat{E}$.

\item Compute the gain for the choice $\langle P,i\rangle$, for constraint sets $\mathcal{C}_1$ and $\mathcal{C}_0$, with respect to $Error_{\mathcal{T}_C}(\hat{E})$ computed in Step 1.
\end{enumerate}

\item Report the arguments $\langle P^*,i^*\rangle$ that contribute the best gain from Step~\ref{step:computeGain}.
\end{enumerate}
\end{tcolorbox}

In the core steps of the above procedure, namely, Step~\ref{step:computeNodeMeanError} and Step~\ref{step:computeGain}, we compute the statistical measures of mean and error for pseudo-targets $\hat{E}$ and $\tilde{E}$. The definition of $Mean_{\mathcal{T}_\mathcal{C}}(\hat{E})$ and $Error_{\mathcal{T}_\mathcal{C}}(\hat{E})$ in Section~\ref{sec:NodeMetrics} assume that the true and false interval lists of $E$ are compliments of each other, however, for a pseudo-target $\hat{E}$ this is not true, hence the metrics cannot be directly applied. Furthermore, even though in each iteration of Step~\ref{step:computeGain}, a choice of $\langle P,i\rangle$ is made, it is unclear for which pseudo-target the measures must be computed. We first resolve the later and then address the computation of the statistical measures.

Note that in Step 2 (a), for a predicate $P$, once a pseudo-target position is determined, the split considers $P$ being true in one
% node
\change{branch}
and $P$ being false in the other, while the temporal position for $P$ and $\neg P$ remains the same for both child nodes of the split. 

\subsubsection{Choosing a Pseudo-Target for a Partial Prefix}
% The reader may recall that an assertion in PSI-L has the following syntax:
% \begin{mycenter}
% \begin{tabular}{r@{~~} c@{~~} l}
% $\mathcal{S}$ {\tt |->} $E$ & or & $\mathcal{S}$ {\tt |->} $\tau_1 ~ E$\\
% \end{tabular}
% \end{mycenter}
% where, the prefix sequence $\mathcal{S}$ of length $n$ is of the form $s_n ~\tau_{n} ~s_{n-1} ~\tau_{n-1} \ldots \tau_{1}~s_0$. In the prefix sequence each $s_i$ is a bucket at position $i$, a Boolean expression of PORVs and events. For a prefix sequence of maximum length $n$, at a query node with constraint set $\mathcal{C}$, some bucket positions may be empty (interpreted as the Boolean expression \textit{true}). 
% 
% In the case when $\mathcal{S}$ {\tt |->} $E$, the last sub-expression is $s_0$. The alternate syntax $\mathcal{S}$ {\tt |->} $\tau_1 ~ E$ is developed to take into account the cases wherein the last sub-expression is $s_i$, $i>0$. In such cases, the forward match of the sequence (Definition~\ref{def:forwardInfluenceMining}) must be further stretched by $i$ delay intervals to match with $E$. These delay intervals are coalesced into $\tau_1$. 

\change{
A path of a decision tree in the making is expanded by adding a constraint 
$\langle P, i \rangle$ into the constraint set $\mathcal{C}$ corresponding to the
path. The choice of the constraint, namely the predicate $P$ and its bucket $\sqcup_i$, is made by comparing the influence (on the target, $E$) of the prefix
sequences obtained by adding to $\mathcal{C}$ each such pair $\langle P, i \rangle$. 
At the heart of this approach are the metrics used to determine the goodness of
a partial prefix sequence in terms of its influence on the target $E$. Our
metrics are computed by stretching the target back in time as pseudo-targets. It is,
therefore, an important step to determine which of the pseudo-targets is to be used
for a given partial prefix sequence.}

\begin{proposition} \label{prop1}
The pseudo-target applicable for a given constraint set $\mathcal{C}$ is the smallest bucket position, $i$, among all non-empty buckets, $0\leq i\leq n$. \qed
\end{proposition}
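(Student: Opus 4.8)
The plan is to show that the \emph{applicable} pseudo-target is the unique one whose backward temporal shift matches the separation between the influence set of the partial prefix $\mathcal{S}_\mathcal{C}$ and the target $E$. Since $Mean_{\mathcal{T}_\mathcal{C}}(\hat{E})$ and $Error_{\mathcal{T}_\mathcal{C}}(\hat{E})$ are computed by intersecting the influence set $\mathcal{I}_\mathcal{T}(\mathcal{S}_\mathcal{C})$ (Definition~\ref{def:InfluenceSet}) with the chosen pseudo-target, the claim reduces to locating, in time, the end-match intervals of $\mathcal{S}_\mathcal{C}$ relative to $E$. I would first set $i^\ast = \min\{\, i : \mathcal{B}_i(\mathcal{C}) \neq \emptyset \,\}$, then establish that the end-match intervals sit at bucket position $i^\ast$, that this position is separated from $E$ by the merged delay $[0:i^\ast k]$, and finally invoke Definition~\ref{def:pseudoTarget} to identify the aligned target as $E^{i^\ast} = \mathcal{I}_\mathcal{T}(E) \ominus [0:i^\ast k]$.

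For the first step I would unfold the forward-influence recursion of Definition~\ref{def:forwardInfluenceMining}, whose outermost intersection binds $\mathds{F}(\mathcal{S}_\mathcal{C},\mathcal{W}_0^n)$ to the truth set of bucket $\mathcal{B}_0$, so that end-match times are anchored at the lowest bucket index. When buckets $0,\ldots,i^\ast-1$ are empty they impose no predicate constraint and hold everywhere, making their intersections in the expansion vacuous; the $i^\ast$ delay intervals spanning them, each of width $[0:k]$, then accumulate under Minkowski summation into a single interval $[0:k]\oplus\cdots\oplus[0:k]=[0:i^\ast k]$ that carries the influence forward from bucket $i^\ast$ to the target position. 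Hence $\mathcal{I}_\mathcal{T}(\mathcal{S}_\mathcal{C})$ consists of truth times located at bucket $i^\ast$ and separated from $E$ by exactly $[0:i^\ast k]$.

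The central step is to recognise this offset as a pseudo-target shift. A time $t\in\mathcal{I}_\mathcal{T}(\mathcal{S}_\mathcal{C})$ is consistent with an occurrence of the target precisely when $E$ holds somewhere in $[t:t+i^\ast k]$, which is equivalent to $t\in\mathcal{I}_\mathcal{T}(E)\ominus[0:i^\ast k]$. By Definition~\ref{def:pseudoTarget} this set is exactly $\mathcal{I}_\mathcal{T}(E^{i^\ast})$, so intersecting $\mathcal{I}_\mathcal{T}(\mathcal{S}_\mathcal{C})$ with $\mathcal{I}_\mathcal{T}(E^{i^\ast})$ retains precisely those end-match times genuinely compatible with $E$. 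This makes $Mean_{\mathcal{T}_\mathcal{C}}(E^{i^\ast})$ the correct conditional probability of $E$ under the partial prefix, identifying $\hat{E}=E^{i^\ast}$. I would finish with a uniqueness remark: any $E^{j}$ with $j\neq i^\ast$ shifts the target by $jk\neq i^\ast k$ and therefore misaligns $\mathcal{I}_\mathcal{T}(\mathcal{S}_\mathcal{C})$ with $E$, so $E^{i^\ast}$ is the only admissible choice.

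I expect the merging-of-delays argument to be the main obstacle, since it requires justifying both that empty leading buckets do not shift the forward-influence anchor away from $i^\ast$, and that the $i^\ast$ successive $[0:k]$ delays accumulate under $\oplus$ to exactly $[0:i^\ast k]$ with no spurious contribution from the vacated positions. Once that offset is pinned down, the remainder is a direct rewriting through the Minkowski-difference identity of Definition~\ref{def:pseudoTarget}.
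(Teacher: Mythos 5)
Your proposal is correct and takes essentially the same route as the paper, which in fact states Proposition~\ref{prop1} without a formal proof and supports it only through the delay-merging discussion of Example~\ref{ex:pseudo-target-choice}: empty buckets below $i^\ast=\min\{i:\mathcal{B}_i(\mathcal{C})\neq\emptyset\}$ impose vacuous constraints, their $[0:k]$ delays accumulate to $[0:i^\ast k]$, and Definition~\ref{def:pseudoTarget} then identifies the aligned target as $E^{i^\ast}=\mathcal{I}_\mathcal{T}(E)\ominus[0:i^\ast k]$. Your unfolding of the forward-influence recursion and the Minkowski rewriting simply make that informal justification rigorous, so no genuinely different idea is involved.
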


\begin{example}\label{ex:pseudo-target-choice}
\change{For $n=3$, and delay resolution $k=0.4$, we consider the template of the form:
$\sqcup_3$ {\tt \#\#[0:0.4]} $\sqcup_2$ {\tt \#\#[0:0.4]} $\sqcup_1$ {\tt \#\#[0:0.4]} $\sqcup_0$ {\tt |->} {\tt E}. Suppose the given constraint set is $\mathcal{C}=\{\langle Q,3\rangle, \langle P,1\rangle\}$. The prefix-buckets are therefore, 
$\mathcal{B}_3(\mathcal{C})=\{Q\}$, $\mathcal{B}_2(\mathcal{C})=\{\}$, $\mathcal{B}_1(\mathcal{C})=\{ P\}$ and $\mathcal{B}_0(\mathcal{C})=\{\}$. 
%$\mathcal{I}_\tau(\mathcal{B}_3) = \{[0.9:4.2),[6.3:11.4),[14.5:18.5)\}$, and $\mathcal{I}_\tau(\mathcal{B}_2) = \{[4.3:6),[9.4:12.2),[13.2:14),[17.3:20)\}$. 
The partial prefix sequence that results from $\mathcal{C}$ is $\mathcal{S} \equiv$ $Q$ {\tt \#\#[0:0.8]} $P$. This then asserts $Q$ {\tt \#\#[0:0.8]} $P$ {\tt |-> \#\#[0:0.4]} $E$. }

\change{Note that the delays separating placeholders $\sqcup_3$, $\sqcup_2$, and $\sqcup_1$ have merged into the interval {\tt \#\#[0:0.8]} because $\sqcup_2$ is
empty. Also, since $\sqcup_0$ is empty, the delay between $\sqcup_1$ and $\sqcup_0$
separates $\mathcal{B}_1(\mathcal{C})$ from $E$.
In this case, therefore, evaluating $\mathcal{C}$ requires using pseudo-target $E^1$ as shown in Figure~\ref{fig:pseudo-targets}. In other words, $E^1$ represents the expression {\tt \#\#[0:0.4]}$E$.} \qed
\end{example}

\subsubsection{Adapting Statistical Measures for Pseudo-Targets}
Standard decision tree algorithms assume that classes are independent and that no data point in the data set belongs to more than one class. However, for a pseudo-target, this is not the case, because, the pseudo-target can (non-deterministically \change{in time}) be
true \change{and} false at some times (see Figure~\ref{fig:pseudo-targets}). Hence a traditional error computation, such as in Definition~\ref{def:error}, misrepresents the relationships that exist. 

Furthermore, at each decision node, we make two decisions, deciding which predicate to pick, and deciding which temporal position (pseudo-target) gives the best gain for the chosen predicate. The two decisions are dependent and must be made together. Pseudo-targets help us to learn the most influential temporal position for a predicate. 
% We do not treat all pseudo-targets as part of the set of classes, otherwise, for $n$ pseudo-targets there would then be $2\times (n+1)$ classes. 

Example~\ref{eg:meanErrorBroken} demonstrates the lacunae of using the traditional
definitions of mean and error from Section~\ref{sec:NodeMetrics}.

\begin{figure}[t]
\centering
\includegraphics[scale=0.8]{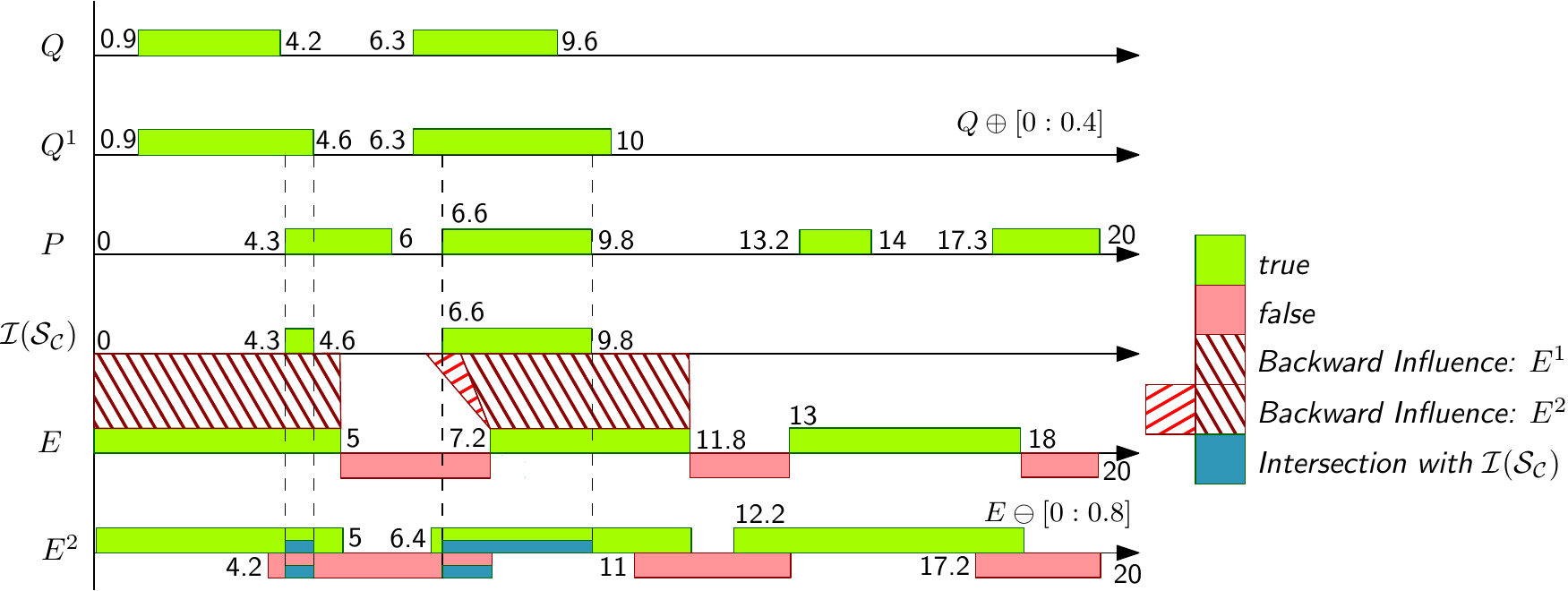}
\caption{\change{Predicates=\{Q,P,E\} and Pseudo-Targets (n=3, k=0.4) for $E$.  $\mathcal{C}=\{\langle Q,3\rangle, \langle P,2\rangle\}$ is a constraint set; while $\mathcal{I}(\mathcal{S}_{\mathcal{C}})$ is the set of end match influence time intervals.}} \label{fig:temporalSemantics}
\end{figure}

%\pd{COMMENT: The following example is good but somewhat convoluted. I suggest that you modify the example such that a property where $E$ is the consequent is missed (in this example, the one missed has $\neg E$ as the consequent. This may be possible by exchanging true and false intervals for $E$. Also it may not be necessary to keep both $H$ and $Q$. We can keep only the offending one and explain why the assertion is missed.Similarly for Example 7.}

\begin{example}\label{eg:meanErrorBroken}
\change{Consider the constraint set $\mathcal{C}=\{\langle Q,3\rangle, \langle P,2\rangle\}$, where the truth intervals of the predicates are shown in Figure~\ref{fig:temporalSemantics}. We demonstrate the computation of $Mean_{\mathcal{T}_{\mathcal{C}}}({E^2})$ and  $Error_{\mathcal{T}_{\mathcal{C}}}({E^2})$ for $\mathcal{C}$, following their standard definitions over time intervals, as given in Section~\ref{sec:NodeMetrics}.}
\begin{itemize}

\item \change{$\mathcal{S}_\mathcal{C}$ $\equiv$ $Q$ {\tt \#\#[0:0.4]} $P$. The influence set is computed following Definition~\ref{def:InfluenceSet}:
\begin{align*}
\mathcal{I}(\mathcal{S}_\mathcal{{C}}) & = \{[4.3:4.6),[6.6:9.8)\} \\ |\mathcal{I}(\mathcal{S}_{{\mathcal{C}}})| & = (4.6-4.3)+(9.8-6.6) = 3.5
\end{align*}}
\end{itemize}

\change{Since the smallest non-empty bucket is $\sqcup_2$ (containing $P$), we use pseudo-target $E^2$ in our computations. The mean and error with respect to $E^2$ are computed as follows:}
\change{\small
\begin{align*}
%----------------------FOR {C}---------------------------
\mu_{\mathcal{T}_{{\mathcal{C}}}}({E^2}) & = \frac{|\mathcal{I}_{{\mathcal{T}_{{\mathcal{C}}}}}(E^2) \cap \mathcal{I}(\mathcal{S}_{\hat{\mathcal{C}}})|}{|\mathcal{I}(\mathcal{S}_{{\mathcal{C}}})|} &
%-----------------------------------------------------
\mu_{\mathcal{T}_{{\mathcal{C}}}}({\neg E^2}) & = \frac{|\mathcal{I}_{{\mathcal{T}_{{\mathcal{C}}}}}(\neg E^2) \cap \mathcal{I}(\mathcal{S}_{{\mathcal{C}}})| }{|\mathcal{I}(\mathcal{S}_{{\mathcal{C}}})|}\\
%=====================================================
& = \frac{(0.3+3.2)}{(3.5)}  = \frac{(3.5)}{(3.5)}&
%----------------------------------------------
& = \frac{(0.3+0.8)}{(3.5)}  = \frac{(1.1)}{(3.5)}\\
%==============================================
& = {1.0} &
%-------------
& = {0.3143}\\
%=============
\epsilon_{\mathcal{T}_{{\mathcal{C}}}}({E^2}) & = -(0)-(-0.5238)& = 0.5238
\end{align*}
}
\change{Observe that for the prefix $\mathcal{S}_{\mathcal{C}}$, visually one observes no error with respect to \newchange{$E^2$} (indicated by the blue band overlayed on $E^2$). Hence, intuitively, the property $Q$ {\tt \#\#[0:0.4]} $P$ {\tt |-> \#\#[0:0.8]} $E$, having no error, should be reported.  However, since error exists with respect to {$\neg E^2$}, according to Definition~\ref{def:error} the property is would be set aside as requiring refinement, by adding predicates in some bucket.}\qed
\end{example}

From Example~\ref{eg:meanErrorBroken} it is clear that using the measures of mean and error from Section~\ref{sec:NodeMetrics}, it is possible to miss potential relations that may exist in the data presented in Figure~\ref{fig:temporalSemantics}. This is primarily due to the fact that the measures ignore that \change{there can be} an overlap of the truth intervals of a pseudo-target's true and false state. The measure of error counts the entropy contributed by the overlapping states twice. We introduce the definition of \textit{unified error} that takes this into account when computing the error.

\begin{definition}\label{def:jointError}
\textbf{Unified-Error$_{\mathcal{T}}(E^i,C)$:} For the target class $E^i$, the unified error, denoted as $UE_{\mathcal{T}_{\mathcal{C}}}(E^i)$, for the trace $\mathcal{T}$ constrained by $\mathcal{C}$ is defined as follows:
\[\pushQED{\qed}
	UE_{\mathcal{T}_{\mathcal{C}}}(E^i) =	\epsilon_{\mathcal{T}_{\mathcal{C}}}(E^i) 
	+
	\mu_{\mathcal{T}_{\mathcal{C}}}(E^i \wedge \neg E^i) \times log_2(\mu_{\mathcal{T}_{\mathcal{C}}}(E^i \wedge \neg E^i)) 
\qedhere\popQED\]
\end{definition}

While computing the unified error, the term $\mu_{\mathcal{T}_{\mathcal{C}}}(E^i \wedge \neg E^i) \times log_2(\mu_{\mathcal{T}_{\mathcal{C}}}(E^i \wedge \neg E^i)$ represents the entropy in the region of the overlap. By adding this term, we effectively ensure that the entropy from the overlap is considered only once when computing the unified error.

\begin{lemma}
For a \newchange{PSI-L property}, the unified error, $UE_{\mathcal{T}_{\mathcal{C}}}(E^i)$, for constraint set $\mathcal{C}$ and consequent pseudo-target $E^i$ is zero if and only if 
\change{$\mathcal{C}$ decides $E^i$ or $\neg E^i$, that is, there are no counter-examples for $\mathcal{S}_{\mathcal{C}} ${\tt |->}$ E^i$ or there are no counter-examples for $\mathcal{S}_{\mathcal{C}} ${\tt |->}$ \neg E^i$.}
\end{lemma}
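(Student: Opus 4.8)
The plan is to reduce the statement to a purely algebraic inequality about the three proportions that the pseudo-target carves out of the influence set, and then to settle that inequality by a short concavity argument. Write $p = \mu_{\mathcal{T}_{\mathcal{C}}}(E^i)$, $q = \mu_{\mathcal{T}_{\mathcal{C}}}(\neg E^i)$ and $r = \mu_{\mathcal{T}_{\mathcal{C}}}(E^i \wedge \neg E^i)$, all taken as fractions of $|\mathcal{I}(\mathcal{S}_\mathcal{C})|$, which we assume positive. Because $\mathcal{I}_\mathcal{T}(E)$ and $\mathcal{I}_\mathcal{T}(\neg E)$ partition the trace, every end-match point $t$ of $\mathcal{S}_\mathcal{C}$ lies in $\mathcal{I}_\mathcal{T}(E^i) \cup \mathcal{I}_\mathcal{T}(\neg E^i)$ (the window $[t:t+i\times k]$ must meet an $E$-true or an $E$-false region), so these two stretched sets cover the influence set. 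Inclusion--exclusion then yields the key identity $p + q - r = 1$. I would isolate this as the single non-trivial geometric fact; everything after it is algebra. Writing $f(x) = -x\log_2 x$ (with the convention $0\log_2 0 = 0$), Definitions~\ref{def:error} and~\ref{def:jointError} give $UE_{\mathcal{T}_{\mathcal{C}}}(E^i) = f(p) + f(q) - f(r)$ with $r = p+q-1$.

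Next I would record the reformulation and dispatch the easy direction. By the Mean lemma, ``no counter-examples for $\mathcal{S}_{\mathcal{C}}\,${\tt |->}$\,E^i$'' is exactly $p=1$, and ``no counter-examples for $\mathcal{S}_{\mathcal{C}}\,${\tt |->}$\,\neg E^i$'' is exactly $q=1$, so the claim becomes $UE = 0 \iff p=1 \text{ or } q=1$. The forward implication is a direct substitution: if $p=1$ then $r=q$ and the two $q$-terms of $UE$ cancel; if $q=1$ then $r=p$ and the two $p$-terms cancel; in either case $UE=0$ (this also covers the boundary cases $p=0$ or $q=0$, which $p+q\geq 1$ forces to be $q=1$ resp. $p=1$).

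For the converse I would exhibit $UE$ as a subadditivity gap. Put $a = 1-p$ and $b = 1-q$, both $\geq 0$, with $a+b = 1-r \in [0,1]$, and define $\phi(a) = f(1-a) = -(1-a)\log_2(1-a)$. Then $UE_{\mathcal{T}_{\mathcal{C}}}(E^i) = \phi(a) + \phi(b) - \phi(a+b)$, since $\phi(a+b) = f(1-a-b) = f(r)$. As $\phi'' < 0$ on $[0,1)$ and $\phi(0)=0$, the function $\phi$ is strictly concave through the origin, hence subadditive: writing $a$ and $b$ as convex combinations of $0$ and $a+b$ gives $\phi(a) \geq \tfrac{a}{a+b}\phi(a+b)$ and $\phi(b) \geq \tfrac{b}{a+b}\phi(a+b)$, so $UE \geq 0$; strict concavity makes each inequality strict unless its argument is $0$ or $a+b$, i.e. unless $a=0$ or $b=0$. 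Hence $UE=0$ forces $a=0$ or $b=0$, that is $p=1$ or $q=1$, which closes the equivalence.

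The main obstacle I anticipate is the geometric identity $p+q-r=1$: it rests on $\mathcal{I}_\mathcal{T}(E^i)\cup\mathcal{I}_\mathcal{T}(\neg E^i)$ covering the influence set, which needs the $E/\neg E$ partition property together with a little care about end-of-trace points where the window $[t:t+i\times k]$ could exit the trace. Once $UE$ is rewritten as $\phi(a)+\phi(b)-\phi(a+b)$ the remainder is routine, provided the $0\log_2 0 = 0$ convention is stated so that the degenerate values of $p$, $q$, and $r$ are handled uniformly.
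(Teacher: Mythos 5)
Your proposal is correct, and it takes a genuinely different route from the paper's. The paper reasons directly on interval sets in two parts: its Part~A shows that when $\mathcal{I}(\mathcal{S}_\mathcal{C})\subseteq\mathcal{I}_{\mathcal{T}}(E^i)$, the set $\mathcal{I}_{\mathcal{T}}(\neg E^i)\cap\mathcal{I}(\mathcal{S}_\mathcal{C})$ coincides with $\Upsilon\cap\mathcal{I}(\mathcal{S}_\mathcal{C})$ for the overlap $\Upsilon=\mathcal{I}_{\mathcal{T}}(E^i)\cap\mathcal{I}_{\mathcal{T}}(\neg E^i)$, so the $\neg E^i$ entropy term cancels the overlap correction; its Part~B argues the converse by extracting counter-example intervals from $0<\mu<1$. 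You instead isolate the structural identity $p+q-r=1$ (in your notation $p=\mu_{\mathcal{T}_\mathcal{C}}(E^i)$, $q=\mu_{\mathcal{T}_\mathcal{C}}(\neg E^i)$, $r=\mu_{\mathcal{T}_\mathcal{C}}(E^i\wedge\neg E^i)$) and recast $UE=f(p)+f(q)-f(r)$, with $f(x)=-x\log_2 x$, as the subadditivity gap $\phi(a)+\phi(b)-\phi(a+b)$ of the strictly concave $\phi(x)=-(1-x)\log_2(1-x)$ through the origin. This buys three things the paper's write-up does not have. First, nonnegativity $UE\geq 0$, which is not evident from Definition~\ref{def:jointError}. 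Second, a watertight hard direction: strict concavity shows $UE>0$ whenever $p<1$ \emph{and} $q<1$, whereas the paper's corresponding step infers $\epsilon_{\mathcal{T}_{\mathcal{C}}}(E^i) > -r\log_2 r$ merely from $r<\mu_{\mathcal{T}_\mathcal{C}}(\neg E^i)$, which does not follow because $-x\log_2 x$ is not monotone (indeed with $q=1$ and $0<p<1$ one has $r=p$ and $\epsilon=f(r)$ exactly), and its Part~B, read literally, establishes only the contrapositive of the easy direction. Third, a uniform treatment of $i=0$, where $r=0$ and your argument specializes to binary entropy, while the paper cases it out separately. Your one anticipated obstacle, the covering $\mathcal{I}(\mathcal{S}_\mathcal{C})\subseteq\mathcal{I}_\mathcal{T}(E^i)\cup\mathcal{I}_\mathcal{T}(\neg E^i)$, is easier than you fear: by Definition~\ref{def:pseudoTarget} the Minkowski difference with $[0:i\times k]$ only extends each interval backward in time, so $\mathcal{I}_\mathcal{T}(E)\cup\mathcal{I}_\mathcal{T}(\neg E)$, which covers every trace point, is already contained in the union, and every end-match point lies in the trace; no end-of-trace care is needed. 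Two details to make explicit in a final write-up: dispose of the degenerate case $a+b=0$ (i.e., $p=q=1$) before dividing by $a+b$, and note that equating ``no counter-examples'' with $p=1$ (resp.\ $q=1$) invokes the mean lemma at the paper's own level of rigor, where counter-examples are understood to occupy intervals of positive length.
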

\begin{proof}
We make the assumption that $|\mathcal{I}(\mathcal{S}_\mathcal{C})| > 0$, the sum of lengths of truth intervals for the expression describing $\mathcal{C}$ is non-zero. 
For the case where $i = 0$, since $\mu_{\mathcal{T}_{\mathcal{C}}}(E \wedge \neg E)= 0$,  $UE_{\mathcal{T}_{\mathcal{C}}}(E^0) =	\epsilon_{\mathcal{T}_{\mathcal{C}}}(E)$. We therefore, only consider the case when $i>0$.

\noindent\textbf{Part A:}
Consider the property $\mathcal{S}_{\mathcal{C}} ${\tt |->}$ E^i$, where $|\mathcal{I}_{\mathcal{T}}(E^i) \cap \mathcal{I}(\mathcal{S}_\mathcal{C})| > 0$. 
Assume that there are counter-examples for   $\mathcal{S}_{\mathcal{C}} ${\tt |->}$ E^i$. 
The counter-examples would introduce a non-zero time interval when $\mathcal{S}_{\mathcal{C}}$ is true and $E^i$ is false.
Let one of the counter-example time-intervals be the interval $[a:b)$, where $b>a$. 
Therefore, $|\mathcal{I}_{\mathcal{T}}(\neg E^i) \cap \mathcal{I}(\mathcal{S}_\mathcal{C})|>0$, hence $\mu_{\mathcal{T}_\mathcal{C}}(\neg E^i,C) > 0$, and $log_2(\mu_{\mathcal{T}_\mathcal{C}}(\neg E^i,\mathcal{C})))<0$ (the mean is bounded between 0 and 1). Similarly the term, $\mu_{\mathcal{T}_\mathcal{C}}(E^i,\mathcal{C}) \times log_2(\mu_{\mathcal{T}_\mathcal{C}}(E^i,\mathcal{C})) < 0$, since $|\mathcal{I}_{\mathcal{T}}(\neg E^i) \cap \mathcal{I}(\mathcal{S}_\mathcal{C})| > 0$ . Hence, from Definition~\ref{def:error}, $Error_\mathcal{T}(E^i,\mathcal{C}) > 0$. For $i>0$, $\mu_{\mathcal{T}_\mathcal{C}}(E^i \wedge \neg E^i)>0$, however $\mu_{\mathcal{T}_\mathcal{C}}(E^i \wedge \neg E^i)< \mu_{\mathcal{T}_\mathcal{C}}(\neg E^i)$. Therefore, $\epsilon_{\mathcal{T}_{\mathcal{C}}}(E^i) > \mu_{\mathcal{T}_{\mathcal{C}}}(E^i \wedge \neg E^i) \times log_2(\mu_{\mathcal{T}_{\mathcal{C}}}(E^i \wedge \neg E^i))$, and hence $UE_{\mathcal{T}_{\mathcal{C}}}(E^i) > 0$ . 

Consider the property $\mathcal{S}_{\mathcal{C}} ${\tt |->}$ E^i$, where $|\mathcal{I}_{\mathcal{T}}(E^i) \cap \mathcal{I}(\mathcal{S}_\mathcal{C})| = |\mathcal{I}(\mathcal{S}_\mathcal{C})|$. Let $\Upsilon = \mathcal{I}_{\mathcal{T}}(E^i) \cap \mathcal{I}_{\mathcal{T}}(\neg E^i)$ represent the overlapping intervals of $E^i$ and $\neg E^i$.
Since,  $|\mathcal{I}_{\mathcal{T}}(E^i) \cap \mathcal{I}(\mathcal{S}_\mathcal{C})| = |\mathcal{I}(\mathcal{S}_\mathcal{C})|$, $\mathcal{I}(\mathcal{S}_\mathcal{C}) \subseteq \mathcal{I}_{\mathcal{T}}(E^i)$. Therefore, $\mathcal{I}_{\mathcal{T}}(\neg E^i) \cap \mathcal{I}(\mathcal{S}_\mathcal{C}) \subseteq \mathcal{I}_{\mathcal{T}}(E^i)$ and hence $\mathcal{I}_{\mathcal{T}}(\neg E^i) \cap \mathcal{I}(\mathcal{S}_\mathcal{C}) = \Upsilon \cap \mathcal{I}(\mathcal{S}_\mathcal{C})$. 
For $i>0$, $\Upsilon \neq \emptyset$, and $0 \leq |\Upsilon \cap \mathcal{I}(\mathcal{S}_\mathcal{C})| < |\mathcal{I}(\mathcal{S}_\mathcal{C})|$. Hence, for the case where $|\Upsilon \cap \mathcal{I}(\mathcal{S}_\mathcal{C})|=0$, $\mu_{\mathcal{T}_\mathcal{C}}(\neg E^i,C) = \mu_{\mathcal{T}_\mathcal{C}}(E^i \wedge \neg E^i,C) =  0$. On the other hand, when $|\Upsilon \cap \mathcal{I}(\mathcal{S}_\mathcal{C})|>0$, $\mu_{\mathcal{T}_\mathcal{C}}(\neg E^i,C) = \mu_{\mathcal{T}_\mathcal{C}}(E^i \wedge \neg E^i,C)$ and the term for $\neg E$ in $\epsilon_{\mathcal{T}_{\mathcal{C}}}(E^i)$ balances out the term for $\Upsilon$ from $UE_{\mathcal{T}_{\mathcal{C}}}(E^i)$, yielding a \textit{unified error} of zero.

The proof for when the property if of the form $\mathcal{S}_{\mathcal{C}} ${\tt |->}$ \neg E^i$ is identical.

\noindent\textbf{Part B:}
Conversely, if the unified error is non-zero, then $0<\mu_{\mathcal{T}_\mathcal{C}}(E^i,\mathcal{C})<1$ and $0<\mu_{\mathcal{T}_\mathcal{C}}(\neg E^i,\mathcal{C})<1$. If any one term was zero or one, the same argument, using overlapping truth intervals, $\Upsilon$, from \textit{Part A} of the proof would render the error zero. From Definition~\ref{def:mean}, $0<\mu_{\mathcal{T}_\mathcal{C}}(E^i,\mathcal{C})<1$,  $0<\mu_{\mathcal{T}_\mathcal{C}}(\neg E^i,\mathcal{C})<1$, and hence  $\mathcal{I}_{\mathcal{T}}(E^i) \cap \mathcal{I}(\mathcal{S}_\mathcal{C}) \neq \mathcal{I}(\mathcal{S}_\mathcal{C})$ and  $\mathcal{I}_{\mathcal{T}}(\neg E^i) \cap \mathcal{I}(\mathcal{S}_\mathcal{C}) \neq \mathcal{I}(\mathcal{S}_\mathcal{C})$. Therefore, there exists a non-empty interval $[a_1:b_1)$, $b_1>a_1$ where $\mathcal{S}_\mathcal{C}$ is true and $E^i$ is false, and there exists a non-empty interval $[a_2:b_2)$, $b_2>a_2$ where $\mathcal{S}_\mathcal{C}$ is true and $E^i$ is true, respectively representing counter-example intervals for $\mathcal{S}_{\mathcal{C}} ${\tt |->}$ E^i$ and $\mathcal{S}_{\mathcal{C}} ${\tt |->}$ \neg E^i$.
\end{proof}

\begin{example}\label{ex:jointEntropy}
\change{In Example~\ref{eg:meanErrorBroken} on considering the overlap component and computing the Unified Error for the constraint set $\mathcal{C}$, we get the following:
{\small
\begin{align*}
\mu_{\mathcal{T}_{{\mathcal{C}}}}({E^2 \wedge \neg E^2}) & = \frac{|\mathcal{I}_{{\mathcal{T}_\mathcal{C}}}(E^2\wedge\neg E^2) \cap \mathcal{I}(\mathcal{S}_\mathcal{C})| }{|\mathcal{I}(\mathcal{S}_\mathcal{C})|} &
%-------------------------------------------------
UE_{\mathcal{T}_{\mathcal{C}}}(E^2) & = \epsilon_{\mathcal{T}_{{\mathcal{C}}}}({E^2}) + 0.3143\times(log_2(0.3143))\\
%==============================================
& = \frac{(0.3+0.8)}{(3.5)} = \frac{(1.1)}{(3.5)} ~~ & 
%-------------------------------------------------
& = 0.5238~+~(-0.5238)\\  
%=============
& = {0.3143} &  
%-------------
& = {0}
\end{align*}}
From this example, we see that the use of Unified Error reveals the association in the data that was earlier set aside when using the error metric of Definition~\ref{def:error}.
}
\qed
\end{example}

\change{
The non-determinism resulting out of reasoning with time intervals presents another fundamental difference with the traditional decision tree learning algorithm. Each
node in the traditional decision tree splits the data points into disjoint subsets,
which is leveraged by the information gain metric. As the following example shows,
some time points in the time-series data may be relevant for both branches of a node
in the decision tree.
}

%Similar to the challenges posed by the overlapping of classes, when considering a predicate and temporal position for splitting a node, the two nodes that result from such a split can have overlaps in the data points they represent. 

\begin{example}\label{ex:jointGain}
\change{Consider the split of the data-set resulting from considering $Q$ at bucket $\sqcup_3$ to augment the constraint set $\{\langle P,2\rangle\}$ to obtain ${\mathcal{C}}_0=\{\langle \neg Q,3\rangle, \langle P,2\rangle\}$ and ${\mathcal{C}}_1=\{\langle Q,3\rangle, \langle P,2\rangle\}$. Due to the non-deterministic semantics of the temporal operator {\tt \#\#[$a:b$]}, the forward influence (end-matches) of ${\mathcal{C}}_0$ and ${\mathcal{C}}_1$ overlap.
\begin{flalign*}
\mathbb{F}(\mathcal{S}_{{\mathcal{C}}_0},\mathcal{W}_2^3) & =  \{[4.3:6),[6.6:6.7),[9.6:9.8),[13.2:14),[17.3:20)\}\\
\mathbb{F}(\mathcal{S}_{{\mathcal{C}}_1},\mathcal{W}_2^3) & =  \{[4.3:4.6),[6.6:9.8)\}
\end{flalign*}
The two sets overlap at all time-points in the intervals $[4.3:4.6)$, $[6.6:6.7)$ and $[9.6:9.8)$. 
%Also note that the union of the two sets is the set $\mathcal{I}(\mathcal{S}_{\{\langle P,2\rangle\}})$. 
Hence, the sum of the weights in the definition of Gain in Definition~\ref{def:gain} exceeds $1$, that is:  
\[ \frac{|\mathbb{F}(\mathcal{S}_{{\mathcal{C}}_0},\mathcal{W}_2^3)| }{ |\mathcal{I}(\mathcal{S}_{\{\langle P,2\rangle\}})|} +
\frac{|\mathbb{F}(\mathcal{S}_{{\mathcal{C}}_1},\mathcal{W}_2^3)| }{ |\mathcal{I}(\mathcal{S}_{\{\langle P,2\rangle\}})|} >1 \]
Furthermore, the influence set of intervals for the constraint set $\{\langle P,2\rangle\}$ is: 
\[ \mathcal{I}(\mathcal{S}_{\{\langle P,2\rangle\}})= \{[4.3:6),[6.6:9.8),[13.2:14),[17.3:20)\} \] 
Since the temporal position of $Q$ is earlier than that of $P$, $\mathbb{F}(\mathcal{S}_{{\mathcal{C}}_0},\mathcal{W}_2^3) \cup \mathbb{F}(\mathcal{S}_{{\mathcal{C}}_1},\mathcal{W}_2^3) = \mathcal{I}(\mathcal{S}_{\{\langle P,2\rangle\}})$, however, this need not always be the case. If $Q$ is placed later than $P$ in the sequence, the forward influence list can change substantially, since it would now be contained in the interval list of $Q$.}
\qed 
\end{example}

Due to both, potential overlapping data-points between the child nodes of a split and the data-points after the split being potentially different from those of the parent node, we present a revised definition of gain, called \textit{unified gain}.

\change{
\begin{definition}\label{def:jointGain}
\textbf{Unified-Gain:} The gain (improvement in error) of choosing to split on $P \in \mathbb{P}$, at bucket position $b$, given the existing constraint set $\mathcal{C}$, at a node having error $\epsilon$, is as follows:
\[ UG(\epsilon,\mathcal{C},P,b) = \epsilon -  
\alpha(\mathcal{C},P,b)
\times 
\epsilon_{\mathcal{T}_{\mathcal{C}\cup\{\langle P,b\rangle\}}}(E)
- 
\alpha(\mathcal{C},\neg P,b)
\times 
\epsilon_{\mathcal{T}_{\mathcal{C}\cup\{\langle \neg P,b\rangle\}}}(E)\]
\noindent where, \nolinebreak
\[\pushQED{\qed}\alpha(\mathcal{C},P,b) = %{ |\mathbb{F}(\mathcal{S}_{\mathcal{C}\cup\{\langle P,b\rangle\}},\mathcal{W}_i^j)|
\frac{
|\mathcal{I}_{\mathcal{T}}(\mathcal{S}_{\mathcal{C}\cup\{\langle P,b\rangle\} })|
}
{ 
|\mathcal{I}_{\mathcal{T}}
(\mathcal{S}_{\mathcal{C}\cup\{\langle P,b\rangle\}})|
+
|\mathcal{I}_{\mathcal{T}}
(\mathcal{S}_{\mathcal{C}\cup\{\langle \neg P,b\rangle\}})|
}
\qedhere\popQED\]
%}{ |\mathbb{F}(\mathcal{S}_{\mathcal{C}\cup\{\langle P,b\rangle\}},\mathcal{W}_i^j)| + |\mathbb{F}(\mathcal{S}_{\mathcal{C}\cup\{\langle \neg P,b\rangle\}},\mathcal{W}_i^j)|}\]
%and $i$ and $j$ are respectively the smallest and largest non-empty bucket indexes in $\mathcal{C}\cup\{\langle P,b\rangle\}$.\pd{$\leftarrow$ There is something odd here.
%I dont see any $i$ and $j$ in the definition.}
\end{definition}
}
\subsection{Variations of \newchange{Unified} Gain with Temporal Positions}\label{sec:gainVtime}
Given an existing constraint set $\mathcal{C}$ (possibly empty - at the root of the tree), a candidate predicate $P$, and its temporal position $i$ in the sequence, the Gain is dependent on the resulting pseudo-target association. 

\begin{theorem}\label{thm:gainVtemporalPosition}
The Unified Gain, of placing predicate $P$ in bucket index $i$, \change{is monotonically non-decreasing with increasing values of $i$}. %\pd{$\leftarrow$ Since you have the plateaus, do you mean "monotonically non-decreasing"?}
\end{theorem}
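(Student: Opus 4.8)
The plan is to hold the base error $\epsilon$ in Definition~\ref{def:jointGain} fixed — it is computed once, in Step~1 of the node procedure, for the pseudo-target applicable to $\mathcal{C}$ and is therefore independent of the candidate bucket $i$ — and reduce the claim to showing that the weighted post-split unified error
\[
\alpha(\mathcal{C},P,i)\,UE_{\mathcal{T}_{\mathcal{C}\cup\{\langle P,i\rangle\}}}(\tilde E) + \alpha(\mathcal{C},\neg P,i)\,UE_{\mathcal{T}_{\mathcal{C}\cup\{\langle\neg P,i\rangle\}}}(\tilde E)
\]
is monotonically non-increasing in $i$. By Proposition~\ref{prop1} the applicable pseudo-target is $\tilde E = E^{\min(i,j)}$, where $j$ is the smallest occupied bucket of $\mathcal{C}$, so the argument splits naturally into two regimes: for $i\le j$ increasing $i$ \emph{stretches} the pseudo-target, whereas for $i\ge j$ the pseudo-target is frozen at $E^{j}$ and increasing $i$ only \emph{dilates} the influence set. (When $\mathcal{C}$ is empty this degenerates to the pure stretching regime, which is the cleanest case.)

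The engine of the first regime is a monotonicity lemma for the unified error of Definition~\ref{def:jointError} under stretching, proved over a \emph{fixed} region $R$. Writing $a=\mu_R(E^i)$, $b=\mu_R(\neg E^i)$ and $c=\mu_R(E^i\wedge\neg E^i)$, I would first note that since $\mathcal{I}_\mathcal{T}(E)$ and $\mathcal{I}_\mathcal{T}(\neg E)$ partition the trace and Definition~\ref{def:pseudoTarget} stretches both back by the same amount, their stretched versions still cover all of $R$; hence $a+b-c=1$, i.e. $c=a+b-1$. Substituting into $UE=-a\log_2 a-b\log_2 b+c\log_2 c$ yields $\partial UE/\partial a=\log_2\frac{a+b-1}{a}\le 0$ and $\partial UE/\partial b=\log_2\frac{a+b-1}{b}\le 0$, both ratios being $\le 1$ because $a,b\le 1$. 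Because $[0:ik]\subseteq[0:(i+1)k]$ and subtracting a larger delay extends each interval further back, $\mathcal{I}(E^i)\subseteq\mathcal{I}(E^{i+1})$ and $\mathcal{I}(\neg E^i)\subseteq\mathcal{I}(\neg E^{i+1})$, so $a$ and $b$ are non-decreasing in $i$; combined with the non-positive partials, $UE$ is non-increasing in $i$ over a fixed region.

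For the regime $i\ge j$ the pseudo-target is fixed and the sole effect of increasing $i$ is to widen a delay interval of $\mathcal{S}_\mathcal{C}$ by $k$. Since the Minkowski sum is monotone ($A\subseteq A\oplus[0:k]$) and the forward influence $\mathds{F}$ of Definition~\ref{def:forwardInfluenceMining} is built from nested Minkowski sums and intersections — both monotone in their operands — the influence set $\mathcal{I}(\mathcal{S}_{\mathcal{C}\cup\{\langle P,i\rangle\}})$ grows with $i$ whenever $P$ sits at the furthest-back end of the partial prefix. I would then aim to show that the extra end-match points contributed by this additional slack are ones already decided by the frozen pseudo-target, so they cannot increase the weighted unified error; finally I would verify continuity at the junction $i=j$ and glue the two regimes.

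The main obstacle is the second regime when $P$ falls \emph{between} two occupied buckets of $\mathcal{C}$: moving $P$ one bucket back simultaneously narrows the delay on its earlier side and widens the delay on its later side, so the forward-influence set is no longer guaranteed to grow monotonically and the clean Minkowski-containment argument breaks. Resolving this is where the real work lies; I expect to need a structural induction on the forward-influence recursion (peeling buckets from the front), sharpened by the begin-match refinement of Definition~\ref{def:backwardInfluenceMining}, to show the net effect on the weighted unified error is still non-positive. Reconciling this with the region-dependence of $UE$ — the fixed-region lemma does not transfer automatically once the region itself varies with $i$ — is the delicate point the proof must pin down.
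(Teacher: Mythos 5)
Your skeleton coincides with the paper's: by Proposition~\ref{prop1} the applicable pseudo-target is $E^{\hat b}$ with $\hat b=\min(i,b)$ ($b$ the smallest occupied bucket of $\mathcal{C}$), and the paper's proof splits into exactly your two regimes — $b>i$, and $b\le i$ (subdivided by whether $\sqcup_i$ is already occupied). Your fixed-region lemma for the stretching regime is actually sharper than what the paper offers: exploiting $\mathcal{I}_\mathcal{T}(E^i)\cup\mathcal{I}_\mathcal{T}(\neg E^i)$ covering the trace to get $c=a+b-1$, and then $\partial UE/\partial a=\log_2\frac{a+b-1}{a}\le 0$, makes rigorous what the paper only asserts in prose (``a larger fraction of the truth of $P$ would be covered by the pseudo-target \ldots leading to a reduced entropy'').

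However, the two places you flag as unresolved are precisely where the paper's proof does its work, so the proposal as written has a genuine gap rather than a finished alternative. In the regime $i<b$ the paper never holds the region fixed: it observes that the end-match of $\mathcal{C}\cup\{\langle P,i\rangle\}$ is obtained by adding $[0:(b-i)\times k]$ to the end-match of $\mathcal{C}$ (then intersecting with the truth of $P$), hence shrinks monotonically as $i$ increases, and it couples your two competing effects — pseudo-target stretching versus region variation — through this single containment plus the claim that a wider end-match ``has higher potential for counter-examples.'' In your problematic mid-sequence case ($h>i>j$) the paper's resolution is the explicit Minkowski containment $\Omega\subseteq\Theta$, where $\Theta=(\mathcal{I}(\mathcal{B}_h)\oplus[0:(h-j)\times k])\cap\mathcal{I}(\mathcal{B}_j)$ and $\Omega$ interposes the extra intersection with $\mathcal{I}(\mathcal{B}_i\cup\{P\})$; note this compares the node with $P$ against the node without $P$, not position $i$ against $i+1$, and it closes with the same informal ``reduced end-match has lesser potential for entropy'' step. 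So the structural induction you anticipate is not carried out in the paper either; what the paper supplies and you lack are these two containment arguments, and what neither of you supplies is a rigorous bridge from ``smaller end-match set'' to ``smaller weighted unified error'' (the weights $\alpha(\mathcal{C},P,i)$ also vary with $i$, which both treatments leave untouched). Finally, be careful with your regime-two observation that the influence set \emph{grows} with $i$ when $P$ sits at the furthest-back end: it is true, but it runs opposite to the paper's shrinkage-based intuition, and your plan to show the extra end-match points are ``already decided by the frozen pseudo-target'' is stated as an aim, not established — that step does not follow from anything in Definitions~\ref{def:jointError} or~\ref{def:jointGain} and would need its own argument.
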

\begin{proof}
\change{Following Proposition~\ref{prop1}}, the pseudo-target association is dependent on the smallest index among the non-empty buckets in the constraint lists resulting from the split. The split results in two nodes, one with the constraint list $\mathcal{C} \cup \{\langle P,i\rangle\}$  and the other with the constraint list $\mathcal{C} \cup \{\langle \neg P,i\rangle\}$. The smallest non-empty bucket is, therefore, the minimum of $i$ and the index of the smallest index non-empty bucket in $\mathcal{C}$. Let the index of the smallest index non-empty bucket in $\mathcal{C}$ be $b$. Let $\hat{b}$ be the lesser of $i$ and $b$. The length of interval list for the pseudo-target, $|\mathcal{I}(E^{\hat{b}})|$, becomes larger with larger values of $\hat{b}$ (From Definition~\ref{def:pseudoTarget}). 

\begin{figure}[t]
	\centering
	\includegraphics[scale=1]{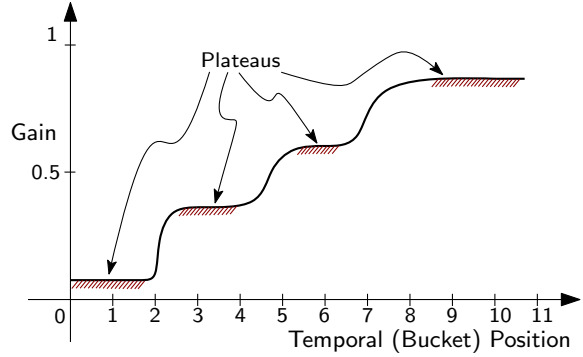}
	\caption{Gain for a candidate predicate, with varying temporal distances from the target predicate.}\label{fig:gainVtarget}
\end{figure}

Initially, at the root of the decision tree, the constraint set, $\mathcal{C}$, is empty. Hence, as $\hat{b}$ increases, a larger fraction of the truth of $P$ would be covered by the pseudo-target $E^{\hat{b}}$. This would cause the quantum of counter-examples for both true and false states of $P$ and it's association with $E$ to reduce, leading to a reduced entropy, and therefore an increase in Gain. Hence as $\hat{b}$ increases, the Gain would monotonically increase \change{or} remain stagnant at a plateau. This is depicted in Figure~\ref{fig:gainVtarget}.

When the constraint set, $\mathcal{C}$, is non-empty, i.e. there is at least one element $\langle Q,j \rangle \in \mathcal{C}$, the following cases arise:
\begin{enumerate}

\item \textbf{[$b>i$]} (Figure~\ref{fig:movingPredicatePosition_bGTi}) \textbf{:} The end-match of the sequence $C \cup \langle P,i\rangle$ is computed by adding $[0:(b-i)\times k]$ to the end-match of $\mathcal{C}$. While maintaining $i<b$, as $i$ increases, i.e. $i$ is a bucket further from the target, but closer to the end-match of $\mathcal{C}$ (depicted in Figure~\ref{fig:movingPredicatePosition_bGTi}), the length of the end-match of the resulting constraint-set $C \cup \langle P,i\rangle$, monotonically decreases. For larger differences between $b$ and $i$ (smaller values of $i$), the end-match is wider, hence the potential for counter-examples (with respect to the target) is higher. Therefore, as $i$ increases, the entropy monotonically decreases. 

\begin{figure}[t]
\centering
\includegraphics[scale=1]{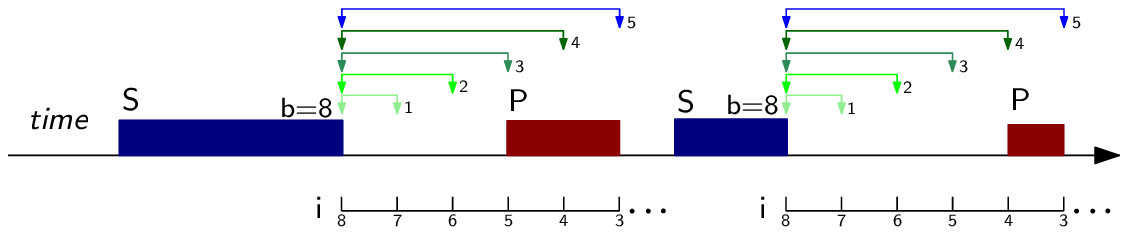}
\caption{Relative Position of Predicate $P$ with respect to the end-match interval list for a constraint set, with the end-match represented as $\mathcal{S}$. The index of the minimum index non-empty bucket, $b$, is 8. The index of the bucket where $P$ may be placed in the sequence is $i$.}\label{fig:movingPredicatePosition_bGTi}
\end{figure}

\begin{figure}[t]
\centering
\includegraphics[scale=1]{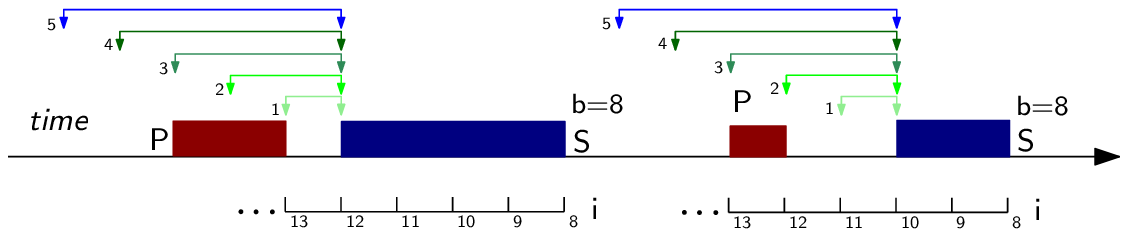}
\caption{Relative Position of Predicate $P$ with respect to the end-match interval list for a constraint set, with the end-match represented as $\mathcal{S}$. The index of the minimum index non-empty bucket, $b$, is 8. The index of the bucket where $P$ may be placed in the sequence is $i$.}\label{fig:movingPredicatePosition_bLEQi}
\end{figure}

\item \textbf{[${b} \leq i$]} (Figure~\ref{fig:movingPredicatePosition_bLEQi}) \textbf{:} For constraint-set $\mathcal{C}$, either $\mathcal{B}_i$ is empty or non-empty. When $P$ is placed in bucket $i \geq b$, the new bucket $\mathcal{B}'_{i} = \mathcal{B}_i \cup \{P\}$. We have the following two cases:

\begin{enumerate}
\item \textbf{$\mathcal{B}_i$ is non-empty:} On adding $P$ to bucket $i$, let The interval list of $P$ is intersected with the interval list of $\mathcal{B}_i$ and therefore the resulting bucket has the interval list $\mathcal{I}(\mathcal{B}'_i) \subseteq \mathcal{I}(\mathcal{B}_i)$.

\item \textbf{$\mathcal{B}_i$ is empty:} Let $h$ be the smallest bucket index, $h>i$, such that $\mathcal{B}_h \neq \emptyset$, and let $j$ be the largest bucket index, $j<i$, such that $\mathcal{B}_j \neq \emptyset$. If such a index $h$ does not exist, then $i$ is the largest index non-empty bucket in the prefix, while the case that $j$ does not exist is not possible under the present case ($b\leq i$).

If $\mathcal{B}_h \neq \emptyset$, then the forward influence of $\mathcal{B}_h$ on $\mathcal{B}_j$ is computed as follows:
\[\Theta = (\mathcal{I}(\mathcal{B}_h) \oplus [0:(h-j)\times k] )\cap \mathcal{I}(\mathcal{B}_j)\] 
However, on adding $P$ in bucket $i$, $h>i>j$, the forward influence of  $\mathcal{B}_h$ on $\mathcal{B}_j$ is computed as follows: \[
\Omega= ((\mathcal{I}(\mathcal{B}_h) \oplus [0:(h-i)\times k] )\cap \mathcal{I}(\mathcal{B}_i)) \oplus [0:(i-j)\times k] )\cap \mathcal{I}(\mathcal{B}_j)\]

Hence, $\Omega \subseteq \Theta$. The potentially reduced forward influence on $\mathcal{B}_j$ similarly propagates toward reducing the end-match for the sequence having $P$ in bucket $i$. A reduced end-match has lesser potential for entropy, and therefore  yields a higher gain, or leaves the gain unchanged. \qedhere
\end{enumerate}
\end{enumerate}\end{proof}

\begin{comment}

\begin{figure}[t]
    \centering
    \begin{subfigure}[t]{0.45\textwidth}
		\centering
		\includegraphics[scale=0.45]{sridhar/xgtec16.eps}
		\caption{$x\geq 16$ }
		\label{fig:temporal:p1}
	\end{subfigure}%
	%\\~\\
	~~~~~~~~
	\begin{subfigure}[t]{0.45\textwidth}
		\centering
		\includegraphics[scale=0.45]{sridhar/xgtec23.eps}
		\caption{$x\geq 23$ }
		\label{fig:temporal:p2}
	\end{subfigure}
	\\~~~~~ %\hspace{-5em}
	\hspace{-2em}\begin{subfigure}[t]{0.45\textwidth}
		\centering
		\includegraphics[scale=0.45]{sridhar/ygtec22.eps}
		\caption{$y\geq 22$}
		\label{fig:temporal:p3}
	\end{subfigure}%
	%\\~\\
	~~~~~~~~
	\begin{subfigure}[t]{0.45\textwidth}
		\centering
		\includegraphics[scale=0.45]{sridhar/ygtec71.eps}
		\caption{$y \geq 71$}
		\label{fig:temporal:p4}
	\end{subfigure}
    
    %\frame{\includegraphics[scale = 0.4]{sridhar/xgtec16.eps}}
    %\frame{\includegraphics[scale = 0.4]{sridhar/xgtec23.eps}}
    %\frame{\includegraphics[scale = 0.4]{sridhar/ygtec22.eps}}
    %\frame{\includegraphics[scale = 0.4]{sridhar/ygtec71.eps}}
    \caption{Variation in gain (vertical axis) with respect to increasing bucket positions (horizontal axis) for a vehicular position dataset shown in Figure~\ref{fig:traffic}.}
    \label{fig:bucket_vs_gain}
\end{figure}
\end{comment}

%We experimentally demonstrate Theorem~\ref{thm:gainVtemporalPosition} using four predicates, each chosen to depict four varieties of gain curves. The curves are shown in Figure~\ref{fig:bucket_vs_gain}.

\subsection{A Miner for Prefix Sequences}\label{sec:psi-miner-algo}

The prefix sequence inference mining algorithm is presented as Algorithm~\ref{algo:nk-psi-miner-temporal}. The length of the sequence $n$, and the delay resolution $k$ are meta-parameters of the algorithm. Every choice of $n$ and $k$ yields a different instance of the algorithm. The choice of values of the meta-parameters must come from the domain. The algorithm learns a  decision tree for PSI properties for the truth set $\widehat{\mathbb{I}}_\mathcal{T}(\mathbb{P})$ for a choice of $n$ and $k$. 

\begin{algorithm}[t!]
	{\small
		\KwIn{Truth Set $\mathbb{I}_{\mathcal{T}}(\mathbb{P})$ for trace $\mathcal{T}$, Predicate List $\mathbb{P}$, Target $E$, Constraint List $\mathcal{C}$}%, Temporal split position $sp$.}
		\KwOut{PSI-L properties $\mathbb{A}$, structured as the decision tree.}
		
		\lIf{stoppingCondition($\mathbb{I}_{\mathcal{T}}(\mathbb{P}),\mathbb{P}, E, \mathcal{C}$)}{  
		%\lIf{stoppingCondition($\mathbb{I}_{\mathcal{T}}(\mathbb{P}),\mathbb{P}, E, \mathcal{C},sp$)}{  
		return\label{line:stop}}
		
		$b \leftarrow$ Smallest non-empty bucket position in $\mathcal{C}$;\label{line:getSmallestBucket}\\
		
		$P_{best} \leftarrow \emptyset$; 
		$i_{best} \leftarrow -1$; 
		$g_{best} \leftarrow 0$;\\
		
		\For{$0\leq i < n$}{\label{line:loopTarget}
			\ForAll{$P \in \mathbb{P}, \langle P,i\rangle \notin \mathcal{C}$}{\label{line:loopPredicate}
				%$\hat{b} = min(b,i)$;\label{line:getNewSmallestBucket}\\
				$g \leftarrow UG( UE_{\mathcal{T}_{\mathcal{C}}}(E^b),\mathcal{C},P,i)$;\label{line:computeGain}
				%- 
				%\epsilon_{\mathcal{T}_{\mathcal{C}\cup{\langle P,i \rangle}}}(E^{\hat{b}}) -
				%\epsilon_{\mathcal{T}_{\mathcal{C}\cup{\langle \neg P,i \rangle}}}(E^{\hat{b}})$;\label{line:computeGain}
				
				\lIf{$g > g_{best}$}{\label{line:computeBestGain}
					$P_{best} \leftarrow P$;
					$i_{best} \leftarrow i$;
					$g_{best} \leftarrow g$
				}
			}
		}	
	%{\tt nk-PSI-Miner}{$(\mathbb{I}_{\mathcal{T}}(\mathbb{P}),\mathbb{P}, E, \mathcal{C}\cup\{\langle P_{best},i_{best}\rangle \},i_{best})$};\label{line:leftTree}\\
	\lIf{$i_{best}<0$}{  
		return\label{line:noGain}}
	{\tt nk-PSI-Miner}{$(\mathbb{I}_{\mathcal{T}}(\mathbb{P}),\mathbb{P}, E, \mathcal{C}\cup\{\langle P_{best},i_{best}\rangle \})$};\label{line:leftTree}\\
	{\tt nk-PSI-Miner}{$(\mathbb{I}_{\mathcal{T}}(\mathbb{P}),\mathbb{P}, E, \mathcal{C}\cup\{\langle \neg P_{best},i_{best}\rangle \})$};\label{line:rightTree}
	%{\tt nk-PSI-Miner}{$(\mathbb{I}_{\mathcal{T}}(\mathbb{P}),\mathbb{P}, E, \mathcal{C}\cup\{\langle \neg P_{best},sp_{best}\rangle \},sp_{best})$};\label{line:rightTree}
	
	\caption{{\tt nk-PSI-Miner}: Mining n-length, k-resolution Prefix Sequences}
	\label{algo:nk-psi-miner-temporal}
}\end{algorithm}

In Algorithm~\ref{algo:nk-psi-miner-temporal}, Line~\ref{line:stop} tests the current node for termination. One of the criteria for termination is that the node is homogeneous with respect to $E$, that is, the error at the node is zero. Other stopping conditions are described later in Section~\ref{sec:stoppingPruningOverfitting}. 

In Line~\ref{line:getSmallestBucket} the smallest non-empty bucket index is identified from the constraint set $\mathcal{C}$, and is used in Line~\ref{line:computeGain} to compute the error for $\mathcal{C}$. The loop at Line~\ref{line:loopTarget} iterates over every pseudo-target position, while Line~\ref{line:loopPredicate} chooses a predicate from the predicate alphabet $\mathbb{P}$. Any predicate and pseudo-target position combination already present in $\mathcal{C}$ are ignored in Line~\ref{line:loopPredicate}. The Unified Gain for the choice of predicate and pseudo-target is computed in Line~\ref{line:computeGain} and the best gain, and its associated arguments are determined in Line~\ref{line:computeBestGain}. The computation of Unified Gain uses the smallest non-empty bucket index for the choice of pseudo-target, with respect to which the Unified Entropy and Unified Gain are computed. The loop beginning at Line~\ref{line:loopTarget} implicitly chooses the smallest bucket position for the \textit{``best predicate"} that has the maximum gain. As shown in Section~\ref{sec:gainVtime}, the gain plateaus at various points and we choose the earliest point on the plateau that has the highest gain. Line~\ref{line:noGain} terminates exploration if for the current node, no remaining predicate and bucket position pair can improve the solution. If a best predicate and bucket position pair is found, lines~\ref{line:leftTree} and~\ref{line:rightTree} branch on new constraint sets. In the following section, we describe an approach for translating a decision tree constructed using Algorithm~\ref{algo:nk-psi-miner-temporal} into properties in PSI-L.

\subsection{From Decision Trees to PSI-L Formulae}\label{sec:psi-l-to-formulae}
The nodes in the decision tree at which the error is zero are the leaf nodes of the tree and have homogeneous data with respect to the truth of $E$. We call these nodes \textit{PSI nodes} and the labels along the path from a PSI node to the root (the constraint set $\mathcal{C}$) form a \newchange{PSI-L property} template. A template consists of predicates and their relative sequence position from the target. Concretizing the PSI template involves computing the relative positions of predicates from each other. We do this by grouping predicates that fall in the same relative temporal position into a \textit{bucket}, and then compute tight time delays that separate buckets in order of their temporal distance from the target $E$. The computation of tight separating intervals between buckets assumes an {\em any-match} (may) semantic.

At a PSI node, the set of constraints $\mathcal{C}$ is known. Recall, that a PSI node is a node at which the entropy is zero. \change{We use the notation $\mathcal{B}_i$, to denote the set of predicates having influence on the target with a step size of $[0:i\times k]$, while $s_i$ is the conjunction of predicates in $\mathcal{B}_i$. The \newchange{PSI-L property} has one of the following forms:}

\begin{center}
	\begin{tabular}{ll}	
		\change{$s_n ~ \tau_n ~s_{n-1} ~\tau_{n-1} \ldots \tau_1~ s_0 ${\tt |->} $E$} & {, when $\mathcal{B}_0 \neq \emptyset$}\\
		\change{$s_n ~ \tau_n ~s_{n-1} ~\tau_{n-1} \ldots \tau_2~ s_1 ${\tt |-> }$ \tau_1 ~E$} &, otherwise
	\end{tabular}
\end{center}

We wish to compute \textit{tight} intervals, $\tau_i$, $1 \leq i \leq n$. 

\begin{definition}
\textbf{Tight delay separation:} For trace $\mathcal{T}$ and constraint set $\mathcal{C}$, a delay separation $[a:b]$, $a\leq b$, between buckets $\mathcal{B}_i$ and $\mathcal{B}_j$, $i>j$, is tight with respect to the match semantics of Definition~\ref{def:seqExprMatch} iff the following  conditions hold,
\begin{itemize}\pushQED{\qed}
\item Maximality: $\exists t_i,t'_i \in \mathcal{I}_{\mathcal{T}_\mathcal{C}}(\mathcal{B}_{i}),$ $t_j, t'_j \in \mathcal{I}_{\mathcal{T}_\mathcal{C}}(\mathcal{B}_{j})$: $t_i+a = t_j$ and $t'_i+b = t'_j$
\item Left-Tight: If $a>0$, $\forall t\in [0:a);~ t_i \in \mathcal{I}_{\mathcal{T}_\mathcal{C}}(\mathcal{B}_i);~t_j\in \mathcal{I}_{\mathcal{T}_\mathcal{C}}(\mathcal{B}_j)$: $t_i+t \neq t_j$ 
\item Right-Tight: If b$<$(i$-$j)$\times$k, $\forall t\in($b,(i$-$j)$\times$k$]$; $t_i \in \mathcal{I}_{\mathcal{T}_\mathcal{C}}(\mathcal{B}_i);~t_j\in \mathcal{I}_{\mathcal{T}_\mathcal{C}}(\mathcal{B}_j)$: $t_i+t \neq t_j$  \qedhere \popQED
\end{itemize}
\end{definition}

We use the standard interval widening operation for a set of intervals $\mathcal{I}$.
\begin{definition}
\textbf{Widening over a set of intervals}: For a set of intervals $\mathcal{I}$, the widening over the intervals in $\mathcal{I}$ is defined to be the following interval:
\[\pushQED{\qed}
	\mathcal{W}(\mathcal{I}) = [ \min_{I\in\mathcal{I}}l(I)~:~ \max_{I\in\mathcal{I}} r(I) ]\qedhere \popQED
\]
\end{definition}

In the remainder of this section we describe how tight delay separations, or simply separations, are computed.
\change{
For the property $\mathcal{S} ${\tt |->}$ E^i$ we compute separations between buckets. The interval set for the $j^{th}$ bucket, $\mathcal{B}_j$, for the constraint set $\mathcal{C}$, of intervals that take part in the prefix $\mathcal{S}_\mathcal{C}$, is as follows:
\begin{mycenter}[0.1em]
	$\dot{\mathcal{I}}_{\mathcal{T}_\mathcal{C}}(\mathcal{B}_j) = \mathds{B}(\mathcal{S},\mathbb{W},j)$%\{I~|~ I= \bigcap_{P_j \in \mathcal{B}_i} I_j$ for some $I_j \in \mathcal{I}_{\mathcal{T}_\mathcal{C}}(P_j)  \}$
\end{mycenter}
Note, that for the bucket with the smallest index (closest to the target), the intervals taking part in the prefix $\mathcal{S}_\mathcal{C}$ are the end-match intervals.

We compute the separation interval, $Sep_\mathcal{C}(\mathcal{B}_j,E)$, for $j>0$, of $\mathcal{B}_j$ from $E$ as follows:
\begin{itemize}[topsep=0em]\setlength\itemsep{0em}
\item For each interval $\dot{I}_{\mathcal{B}_j}$ in $\dot{\mathcal{I}}_{\mathcal{T}_\mathcal{C}}(\mathcal{B}_j)$, and each truth interval $I_{E^i}$ of $E$, compute the influence of $\dot{I}_{\mathcal{B}_j}$ on $I_{E}$. For some $\dot{I}_{\mathcal{B}_j}$ and $I_{E}$, this is given as, $\mathcal{F}^+ =  (\dot{I}_{\mathcal{B}_j}\oplus[0:j\times k])\cap I_{E}$. 
\item  For a given $\dot{I}_{\mathcal{B}_j}$ and $I_{E}$, and therefore a value of $\mathcal{F}^+$, we compute the separation of $\mathcal{F}^+$ from $\dot{I}_{\mathcal{B}_j}$ as, $\mathcal{D} = \mathcal{F}^+ \ominus \dot{I}_{\mathcal{B}_j}$.
\item We compute $\mathcal{D}$ over all combinations of $\dot{\mathcal{I}}_{\mathcal{B}_{j}}$ and $I_{E}$, and widen over the resulting set of intervals.
\item It is possible for $\mathcal{D}$ to be larger than $[0:j\times k]$ due to the semantics of the Minkowski operators. We, therefore, bound the widened set by $[0:j\times k]$. In our implementation, intervals in an interval set are sorted according to their timestamps. For any two intervals $[\alpha:\beta]$ and $[a:b]$, we compute the Minkowsky difference between them only if $\beta>=a$.
\end{itemize}

The separation, $Sep_\mathcal{C}(\mathcal{B}_j,E)$, for $j>0$, of $\mathcal{B}_j$ from $E$ as follows:
\begin{center}
	$Sep_\mathcal{C}(\mathcal{B}_j,E) = \mathcal{W}(I~| I = ((\dot{I}_{\mathcal{B}_j}\oplus[0:j\times k])\cap I_E) \ominus \dot{I}_{\mathcal{B}_j}) \cap [0:j\times k]$\\ 
	\flushright{for all $\dot{I}_{\mathcal{B}_j} \in \dot{\mathcal{I}}_{\mathcal{T}_\mathcal{C}}(\mathcal{B}_j)$ and $I_E \in \mathcal{I}_{{\mathcal{T}}}(E))$}  
\end{center}

Note that $Sep_\mathcal{C}(\mathcal{B}_j,E)$ computes the separation between a bucket $\mathcal{B}_j$ and the target $E^i$. To form a prefix sequence, delay intervals separating adjacent buckets must be computed.
The separation $\tau_j$ between $\mathcal{B}_j$ and $\mathcal{B}_{j-1}$ is iteratively computed as follows:
\[
\tau_j = 
	\begin{cases}
	%Sep_\mathcal{C}(\mathcal{B}_j,B) \ominus Sep_\mathcal{C}(\mathcal{B}_{j-1},E) & 0 < j \leq n\\
	Sep_\mathcal{C}(\mathcal{B}_i,E) & j=i\\
	Sep_\mathcal{C}(\mathcal{B}_j,B_{j-1}) & 0 \leq i < j \leq n\\
	%Sep_\mathcal{C}(\mathcal{B}_j,E) \ominus Sep_\mathcal{C}(\mathcal{B}_{j-1},E) & i=1, \mathcal{B}_0 \neq \emptyset\\
	%Sep_\mathcal{C}(\mathcal{B}_i,E) & i=1, \mathcal{B}_0 = \emptyset
	\end{cases}
\]

Recall that $i$ is the smallest index of a non-empty bucket in $\mathcal{S}_\mathcal{C}$, and that some buckets may be empty, and the separation between adjacent non-empty buckets $\mathcal{B}_j$ and $\mathcal{B}_l$ is computed in a similar manner. 

\begin{proposition}
Two non-empty buckets $\mathcal{B}_j$ and $\mathcal{B}_l$ are adjacent iff $\forall m \in (j:l),~\mathcal{B}_m = \emptyset$. We define the predicate Adj(j,l) to be true iff $\mathcal{B}_j$ and $\mathcal{B}_l$ are adjacent.
\end{proposition}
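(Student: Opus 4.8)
The plan is to treat this as a structural fact about how the prefix sequence $\mathcal{S}_\mathcal{C}$ is assembled from the buckets $\mathcal{B}_0,\ldots,\mathcal{B}_n$ of a constraint set $\mathcal{C}$. Recall that only non-empty buckets contribute a sub-expression $s_i$ to $\mathcal{S}_\mathcal{C}$, while each empty bucket is a mere placeholder that causes the two delay intervals flanking it to merge (as noted when the template was introduced and as illustrated in Example~\ref{ex:pseudo-target-choice}). First I would fix the operational meaning of adjacency that is implicit in the separation computation preceding the proposition: two non-empty buckets $\mathcal{B}_j$ and $\mathcal{B}_l$ are adjacent precisely when they occupy consecutive positions in the reduced sequence $\mathcal{S}_\mathcal{C}$, i.e. they are separated by a single (possibly merged) delay interval $\tau$ with no non-empty bucket lying between them. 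With this meaning pinned down, the proposition becomes the claim that ``no non-empty bucket between them'' is the same condition as ``every bucket strictly between the indices $j$ and $l$ is empty.'' Assume without loss of generality that $j<l$, so that the intermediate indices are exactly $(j:l)=\{m : j<m<l\}$.

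For the forward direction, suppose $\mathcal{B}_j$ and $\mathcal{B}_l$ are adjacent. By the operational definition, no non-empty bucket appears between them in $\mathcal{S}_\mathcal{C}$. Any bucket $\mathcal{B}_m$ with $m\in(j:l)$ sits, by its index, strictly between $\mathcal{B}_j$ and $\mathcal{B}_l$ in the template; were it non-empty it would contribute a sub-expression $s_m$ occurring between $s_j$ and $s_l$ in the reduced sequence, contradicting adjacency. Hence every such $\mathcal{B}_m$ is empty, which is exactly $\forall m\in(j:l),~\mathcal{B}_m=\emptyset$.

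For the reverse direction, suppose $\mathcal{B}_m=\emptyset$ for all $m\in(j:l)$. Then none of the intermediate placeholders contributes a sub-expression, and by the delay-merging semantics the delay intervals $\tau_{j+1},\ldots,\tau_l$ flanking these empty buckets collapse into a single delay interval $\tau$ that directly separates $\mathcal{B}_j$ from $\mathcal{B}_l$ in $\mathcal{S}_\mathcal{C}$. Since $\mathcal{B}_j$ and $\mathcal{B}_l$ are non-empty and nothing non-empty survives between them, they are consecutive in the reduced sequence, hence adjacent. Defining $\mathrm{Adj}(j,l)$ to hold exactly in this situation then records the equivalence as a usable predicate.

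The content here is essentially definitional, so I do not expect a genuine obstacle; the only care needed is to make the ``merging of flanking delays around an empty bucket'' precise, namely that collapsing one empty $\sqcup_m$ replaces the fragment $\tau_{m+1}\,\sqcup_m\,\tau_m$ by the Minkowski-summed interval $\tau_{m+1}\oplus\tau_m$, and to confirm that iterating this collapse over a maximal run of empty buckets is well defined and order-independent, so that ``consecutive in $\mathcal{S}_\mathcal{C}$'' is unambiguous. Once that reduction step is stated cleanly, both directions are immediate and the index bookkeeping under the convention $j<l$ is routine.
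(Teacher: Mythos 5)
Your proposal is correct, and it matches the paper's treatment: the paper states this proposition without any proof precisely because the ``iff'' serves as the definition of adjacency (and of $Adj(j,l)$), which is exactly the definitional reading you identify. Your unpacking via the delay-merging semantics (empty $\sqcup_m$ collapsing $\tau_{m+1}\,\sqcup_m\,\tau_m$ into $\tau_{m+1}\oplus\tau_m$, consistent with the paper's $[0:(h-j)\times k]$ merged intervals) is a faithful elaboration of the same idea, not a different route.
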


\noindent The separation \newchange{in terms of} adjacent buckets is then computed as follows:
\[
\tau_j = 
	\begin{cases}
	%Sep_\mathcal{C}(\mathcal{B}_j,E) \ominus Sep_\mathcal{C}(\mathcal{B}_{l},E) & Adj(j,l),~ 0\leq l < j \leq n\\
	Sep_\mathcal{C}(\mathcal{B}_j,B_l) & Adj(j,l),~ 0\leq l < j \leq n\\
	Sep_\mathcal{C}(\mathcal{B}_j,E) & j=i%\forall_{ 0\leq j <l} ~\neg Adj(j,l)
	\end{cases}
\]

The first statement computes the separation between two non-empty adjacent buckets, and the second indicates that the $j^{th}$ bucket is the non-empty bucket in the prefix sequence having the smallest index. Therefore, we use the separation between the $j^{th}$ bucket and the target as-is. The separation between the last non-empty bucket and the target would appear as a delay constraint in the consequent. For a prefix with the largest index bucket being $h$, the second to the smallest being $m$, and the smallest being $i$, the prefix sequence would be of the form, $s_h$ $\tau_h$ $\ldots$ $\tau_{m}$ $s_i$ {\tt |->} $\tau_i$ $E$.

\begin{example}
Consider the set of intervals for two non-empty buckets, $\mathcal{B}_2$ and $\mathcal{B}_3$, that take part in the match of the prefix, and the interval set for target $E$, to be given as follows:
\begin{align*}
\dot{\mathcal{I}}_{\mathcal{T}_\mathcal{C}}(\mathcal{B}_2) & = \{[4.3:4.6),[6.6:9.8)\} \\ \dot{\mathcal{I}}_{\mathcal{T}_\mathcal{C}}(\mathcal{B}_3) & = \{[3.9:4.2),[6.3:6.4)\} \\
\mathcal{I}_{\mathcal{T}}(E) & = \{[4.6,5),[6.6:6.9),[13:18)\}%\\
%\mathcal{I}_{\mathcal{T}}(E^2) & = \{[3.8:.5),[5.8:11.8),[12.2:18)\} 
\end{align*}

The delay resolution $k=0.4$. The separation intervals are computed as follows:
\begin{itemize}[topsep=0em]\setlength\itemsep{0em}
    \item For $\mathcal{B}_2$ we compute $Sep_\mathcal{C}(\mathcal{B}_2,E)$ as follows:
    \begin{align*}
    \mathcal{F}^+ & = \dot{\mathcal{I}}_{\mathcal{T}_\mathcal{C}}(\mathcal{B}_2)\oplus[0:2\times 0.4]\cap \mathcal{I}_{\mathcal{T}}(E)\\
    & = \{[4.3:5.4),[6.6:10.6)\} \cap {\mathcal{I}}_{\mathcal{T}}(E)\\ 
    & = \{[4.6:5),[6.6:6.9)\} \\
    (\mathcal{F}^+ \ominus I_{\mathcal{B}_2}) \cap \{[0:0.8]\} & = \{[0:0.7),[0:0.3)\}\\
    \mathcal{W}(\{[0:0.7),[0:0.3)\}) & = [0:0.7]
    \end{align*}
    \item For $\mathcal{B}_3$ we compute $Sep_\mathcal{C}(\mathcal{B}_2,\mathcal{B}_3)$ as follows:
    \begin{align*}
    \mathcal{F}^+ & = (\dot{\mathcal{I}}_{\mathcal{T}_\mathcal{C}}(\mathcal{B}_3)\oplus[0:1\times 0.4])\cap \dot{\mathcal{I}}_{\mathcal{T}_\mathcal{C}}(\mathcal{B}_2)\\
    & = \{[3.9:4.2),[6.3:6.4)\} \oplus [0:0.4] \cap \dot{\mathcal{I}}_{\mathcal{T}_\mathcal{C}}(\mathcal{B}_2)\\ 
    & = \{[4.3:4.6),[6.6:6.8)\} \\
    (\mathcal{F}^+ \ominus I_{\mathcal{B}_3}) \cap \{[0:0.4]\} & = \{[0.1:0.4),[0.2:0.4)\}\\
    \mathcal{W}(\{[0.1:0.4),[0.2:0.4)\}) & = [0.1:0.4]
    \end{align*}
\end{itemize}

The property obtained, on integrating the delay separation time intervals, is as follows:\linebreak {\tt $s_3$ \#\#[0.1:0.4] $s_2$ |-> \#\#[0:0.7] E}. \qed
\end{example}
}
%\pd{Please consider giving an example here. It will be great if one of the paths of the decision tree from Section 2 can be used here, and then the time separation computation is shown on the property.} 

\section{Quality of Mined PSI-L Properties} \label{sec:ranking}
The decision tree learned by Algorithm~\ref{algo:nk-psi-miner-temporal} can compute several prefixes. It is important to rank these in terms of those that are likely to be causal relations and those that are not. Furthermore, it is also important to understand how mined prefixes are related to the trace. 

We measure the goodness of a \newchange{PSI-L property} $\mathcal{S} ${\tt |->}$ E^i$, where $i$ is the smallest non-empty bucket index in $\mathcal{S}$, using heuristic metrics of Support, and Correlation. We also measure how much of the trace is covered for the set of PSI properties generated.

\change{Recall that $E^i$ is the $i^{th}$ pseudo-target, that is the target's truth stretched back in time by an amount of $i\times k$, where $k$ is the delay resolution. In the definitions, while the support only considers $\mathcal{S}$, the correlation and coverage deal with $E^i$. To be consistent, we re-enforce in all definitions that we relate the truth of the prefix $\mathcal{S}$ forward in time with target $E$, and hence write $\mathcal{S} ${\tt |->}$ E^i$.
%\pd{You need to recall what $E^i$ means, and explain why we define the goodness of $\mathcal{S} ${\tt |->}$ E^i$ instead of $\mathcal{S} ${\tt |->}$ E$, and whether they are the same.} 

%When using data-mining techniques, one of the major problems most techniques suffer from is their inability to judge

\begin{definition}
\textbf{Support:} For a property $\mathcal{S} ${\tt |->}$ {E^i}$, the quantum of time for which $\mathcal{S}$ is true in the trace $\mathcal{T}$ is the support of $\mathcal{S} ${\tt |->}$ E^i$.
\[Support(\mathcal{S} \text{\tt |->} E^i) = \frac{|\mathcal{I}_\mathcal{T}(\mathcal{S})|}{||\mathcal{T}||}\]
\noindent where $\mathcal{I}_\mathcal{T}(\mathcal{S})$ is the influence interval list for the sequence $\mathcal{S}$ computed according to Definitions~\ref{def:InfluenceSet} and~\ref{def:forwardInfluenceMining}, while $||\mathcal{T}||$ is the length of the trace given in Definition~\ref{def:hybridTrace}.
\qed
\end{definition}
A high support for \newchange{PSI-L property} $\mathcal{S} ${\tt |->}$ {E^i}$ is indicative of $\mathcal{S}$ being frequently true in the trace. However, a low support need not indicate that the property is incorrect, and could indicate a corner case behaviour. 

\begin{definition}
\textbf{Correlation:} For the assertion $\mathcal{S} ${\tt |->}$ {E^i}$, correlation indicates how much of $E^i$'s truth is associated with $\mathcal{S}$, that is the quantum of the consequent, $E^i$'s truth, that the antecedent $\mathcal{S}$ contributes to. 
\[\pushQED{\qed}Correlation(\mathcal{S} \text{\tt |->} {E^i}) = \frac{
|(\mathcal{I}_\mathcal{T}(\mathcal{S}) \oplus [0:i\times k]) \cap \mathcal{I}_\mathcal{T}(E^i)|}{|\mathcal{I}_\mathcal{T}(E^i)|}\qedhere\popQED\] 

%\pd{In the above, it will be great if add an intermediate term using $\mathcal{I}(E^i)$ in the numerator instead of $\mathcal{I}(E)$ and then have the RHS as shown.}
\end{definition}

\begin{definition}
\textbf{Trace Coverage:}\label{sec:coverage}
Trace coverage quantifies the fraction of the trace that is explained by the properties generated by the miner.

Given a trace $\mathcal{T}$ of length $\mathcal{L}$, and the mined property set $\mathbb{A}$, the coverage interval list of $\mathcal{T}$ by $\mathbb{A}$, denoted $Cov(\mathcal{T},\mathbb{A})$, is computed as follows:
\begin{center}
	$Cov(\mathcal{T},\mathbb{A}) = \bigcup_{
	(\mathcal{S}\text{{\tt |->}} \tilde{E}^i)\in\mathbb{A}
	}	
	(\mathcal{I}_\mathcal{T}(\mathcal{S})\oplus [0:i\times k]) \cap \mathcal{I}_\mathcal{T}(\tilde{E})$
\end{center}
where $\tilde{E}\in\{E,\neg E\}$. The percentage of coverage is then given by $\frac{|Cov(\mathcal{T},\mathbb{A})|}{\mathcal{L}}\times 100$.
\qed
\end{definition}
}
%\pd{Can we give an example here demonstrating these metrics?}
\change{
\begin{example}
We compute the three metrics introduced in this section for  the interval sets from the prefix and target from Figure~\ref{fig:temporalSemantics}. 
\begin{align*}\pushQED{\qed} 
Support(\mathcal{I}_\mathcal{T}(\mathcal{S}) \text{\tt |->} E^2) &= \frac{3.5}{20} = 17.5\%\\
\\
Correlation(\mathcal{I}_\mathcal{T}(\mathcal{S}) \text{\tt |->} E^2) &= \frac{|\{[4.3:5.4),[6.6:10.6)\}\cap\{ [0:5),[6.4:11.8),[12.2:18)\}|}{|\{ [0:5),[6.4:11.8),[12.2:18)\}|} \\
& = \frac{|\{[4.3:5),[6.6:10.6)\}|}{|\{ [0:5),[6.4:11.8),[12.2:18)\}|}\\
& = \frac{4.7}{16.2} = 29.01\%\\
\\
Cov(\mathcal{T},\mathbb{A}) & = \mathcal{I}_\mathcal{T}(\mathcal{S}) \oplus [0:2\times 0.8] \cap \mathcal{I}_\mathcal{T}(E)\\
& = \{[4.3:5),[6.6:10.6)\}\\
\\
Percentage~of~coverage & =  \frac{|\{[4.3:5),[6.6:10.6)\}|}{20}\\
& = \frac{4.7}{20} = 23.5\%\qedhere
\popQED
\end{align*}
%\qed
\end{example}
}
%\section{Mining Postfix Sequences}

\section{Stopping Conditions, Over-fitting and Pruning}\label{sec:stoppingPruningOverfitting}

%\subsection{Stopping Criteria}
%\textbf{Stopping Criteria:}
While building the decision tree, for prefixes, there are two {\em stopping conditions} that we employ to terminate the growth of the tree. 
\begin{enumerate}
\item \textbf{Purity of a Node:} When the constraint set $\mathcal{C}$ completely determines the truth of the target $E$, the decision tree node is 100\% pure and further growth is terminated. A node with constraint set $\mathcal{C}$, and minimum bucket position $b$, is considered pure if the unified error at the node is zero, that is $UE_{\mathcal{T}_{\mathcal{C}}}(E^b) = 0$.
\item \textbf{Depth Constraints:} It is also possible to define a depth threshold, $\alpha_d$, and stop the tree from growing if the length of the current exploration path crosses $\alpha_d$.
\end{enumerate}

%\subsection{Over-fitting and Pruning}

Decision trees are known to suffer from problems of {\em over-fitting}. Over-fitting involves fitting a learned model so closely to the data, that the model becomes specific in explaining the peculiarities in the data, making it overly defined and specific to the data over which learning is performed. This is exceptionally problematic with data that is discrete. In this case, when dealing with dense real-time data, and arbitrary resolution of time, depending on the time precision in the data, over-fitting can lead to a large number of properties being generated, leaving the designer with an overwhelmingly large number of prefix properties and rendering the mining as an ineffective aid to designers. 

To prevent over-fitting, we employ pruning and abstraction mechanisms controlled via meta-parameters. The measures employed are as follows: 
\begin{enumerate}[topsep=0em]\setlength\itemsep{0em}
\item \textbf{Using a support threshold:} A threshold $\alpha_s$ is defined to indicate the minimum support below which a prefix is being over-fit to the data. If a node with constraint set $\mathcal{C}$ has a support below $\alpha_s$, further splitting of the node is terminated. 
\change{Note that, due to the dense interpretation of time, in some situations, a low support is expected when explaining targets concerning corner case behaviours.} 
\item \textbf{Using a correlation threshold:} A threshold $\alpha_c$ is defined to indicate the minimum correlation below which a prefix is being over-fit to the target. If a node with constraint set $\mathcal{C}$ has a correlation below $\alpha_c$ with its associated target, further splitting of the node is terminated. 
\item \textbf{Prefix Grouping:} We use interval arithmetic to mine prefix sequences. This allows overly constrained prefixes to be grouped into sequences that share common event orderings. Hence a prefix mined by  Algorithm~\ref{algo:nk-psi-miner-temporal} may be representative of an infinite number of distinct prefixes that have similar event orderings but dissimilar in delay intervals separating every adjacent pair of events.
\item \textbf{Constraint Set Limits:} Limits on the depth of the decision tree impose an implicit upper bound on the number of constraints used to build a prefix. 
\end{enumerate}

%\indent When multiple time-series are used for training, the prefixes learned from one can be used to perform pruning of the decision tree learned from another if the prefix sequences infer opposing target truths. Beyond this, for a single time-series pruning is performed a-priori during the growth of the tree, using the stopping conditions.  

The \newchange{constraints on tree-depth} $\alpha_d$, \newchange{support} $\alpha_s$ and \newchange{correlation} $\alpha_c$, are treated as meta-parameters of the decision tree learning algorithm. 
\change{
\section{Handling Multiple Traces}\label{sec:multipleTraces}
In this article, for simplicity, all metrics have been defined for a single time-series. When considering a set $\mathbb{T}$ of time-series, any pair of time-series may share a common time-line. For instance, consider two autonomous vehicles {\tt V1} and {tt V2} that have their state recorded over time. The recording for {\tt V1} begins at 09:48:07AM and is recorded up to 04:02:23PM. The recording for {\tt V2} begins at 08:32:30AM and is recorded up to 04:32:43PM. Hence given a time $t$ in the duration from 09:48:07AM to 04:02:23PM, it is possible that at $t$, we observe two conflicting state entries for a vehicle. Considering truth intervals over predicates, therefore, at any given time, due to differences in the state between {\tt V1} and {\tt V2}, it is possible that a predicate has conflicting values of truth. To resolve this, it is not possible to simply use a timestamp offset and concatenate the two time-series. The concatenation in time could introduce unintended temporal relationships in the data. It is therefore important to treat each time-series as being independent. In this section, we redefine core metrics used in our learning framework for dealing with multiple time-series. %\pd{Give an example -- say the time-series for each car travelling to office at 9:00 AM are sharing a common time line. Also since properties are dealing with relative time intervals and not absolute time, is it equivalent to concatenate the traces into a single trace as far as the miner is concerned?} 

In our framework, the definition of $Mean(.)$ is a fundamental metric, and it is sufficient to re-define this metric to ensure that evidence is considered from all time-series. To handle a set of time-series, the definition of mean is extended as follows:
\begin{definition}\label{def:mean:MultipleTraces}
\textbf{Mean$_{\mathbb{T}_\mathcal{C}}$(E):} For the target class $E$, the proportion of time that $E$ is true in  the set of traces $\mathbb{T}$  constrained by $\mathcal{C}$ :
\[
Mean_{\mathbb{T}}(E,\mathcal{C}) = \frac{ \sum_{\mathcal{T}\in\mathbb{T}}|\mathcal{I}_{\mathcal{T}}(E) \cap \mathcal{I}(\mathcal{S}_\mathcal{C})| }{ \sum_{\mathcal{T}\in\mathbb{T}} |\mathcal{I}(\mathcal{S}_\mathcal{C})|}
\]
For convenience, $\mu_{\mathbb{T}_\mathcal{C}}(E)$ may be used in place of $Mean_{\mathbb{T}}(E,\mathcal{C})$. The mean is not defined when $\sum_{\mathcal{T}\in\mathbb{T}} |\mathcal{I}(\mathcal{S}_\mathcal{C})| = 0$. 
\qed
\end{definition}

Similarly, the normalization factor $\alpha$ used in Definition~\ref{def:jointGain} is extended to consider multiple traces as follows:
\[\alpha(\mathcal{C},P,b) = %{ |\mathbb{F}(\mathcal{S}_{\mathcal{C}\cup\{\langle P,b\rangle\}},\mathcal{W}_i^j)|
\frac{
\sum_{\mathcal{T}\in\mathbb{T}}|\mathcal{I}_{\mathcal{T}}
(\mathcal{S}_{\mathcal{C}\cup\{\langle P,b\rangle\}})|
}{ \sum_{\mathcal{T}\in\mathbb{T}}(|\mathcal{I}_{\mathcal{T}}(\mathcal{S}_{\mathcal{C}\cup\{\langle P,b\rangle\}})|+|\mathcal{I}_{\mathcal{T}}(\mathcal{S}_{\mathcal{C}\cup\{\langle \neg P,b\rangle\}})|)
}
\]

Using the measures in this section to compute unified entropy and unified gain and associated metrics allows us to incorporate information from multiple time-series.
}

\section{Experimental Results}\label{sec:experiments}
We use a selection of examples to demonstrate the utility of PSI-Miner. The miner was used on a standard laptop with a 2.40GHz Intel Core i7-5500U CPU % with two cores each supporting two parallel threads 
with 8GB of RAM.%\pd{$\leftarrow$ Why do you mention the number of threads. Does your miner use multiple threads?} 

For each example, we choose meta-parameters $n$, the number of \change{intervals in the antecedent}, and $k$, the initial time delay between buckets. The target event, being explained, and the predicate alphabet used for mining are also known. The prefixes learned are validated against the data \change{and prior knowledge that was not available to the miner}.

\change{
\begin{example}
%\pd{In this example, you need to clarify that separate sets of traces are provided to PSI-Miner, one route at a time, and that the first three properties are learned from those sets, and the fourth one from the union of those sets. It is important to mention that sets of traces can be used as a conjoined single trace. Also, why does it choose only D as the antecedent, and not the other way point predicates mentioned in Section 2? Do we have other properties over events like delays?}
This example focuses on PSI-Miner's ability to learn and generalize timing intervals across multiple traces. We use position information from multiple vehicles in Town-X (from Section~\ref{sec:motivation}, Figure~\ref{fig:traffic}) to learn which of the routes from location {\tt D} to {\tt A} is the fastest. 

There are three routes, namely $DA_1$, $DA_2$ and $DA_3$, in the direction from {\tt D} to {\tt A}. We pick time-series of nine vehicles, three from each route. We then run PSI-Miner separately for each route, on its set of three time-series, providing predicates for all way-points.  A delay-resolution of $k=2min$ and a maximum sequence length of $n=15$ are used in the experiments. Since a vehicle may spend very little time at the source/destination in relation to total time, we allow small support percentages.

\begin{table}[th]
	\centering
	\small
	\begin{tabular}{clcc}
		\toprule
		\multicolumn{4}{c}{\textbf{Time to travel from D to A}} \\
		\midrule
		\textbf{Route} &\textbf{Property} & \textbf{Support (\%)} & \textbf{Correlation (\%)}	\\
		\midrule
		%$DA_1$ & {\tt D|-> \#\#[15.73:16] E} & 33{\small$\times 10^{-5}$} & 100 \\
		$DA_1$ &  {\tt D|-> \#\#[26.90:28] A} & 78.56{\small$\times 10^{-2}$} & 100 \\
		$DA_2$ & {\tt D|-> \#\#[14.21:16] A} & 26.42{\small$\times 10^{-2}$} & 100 \\
		$DA_3$ & {\tt D|-> \#\#[18.82:20] A} & 37.66{\small$\times 10^{-2}$} & 100 \\
		All & {\tt D|-> \#\#[14.21:28] A} & 51.2{\small$\times 10^{-2}$} & 100 \\
		
		\bottomrule
	\end{tabular}
	\caption{PSI-L Properties describing the time to travel from D to A, along three paths and a summary property for all paths. Time intervals are measured in minutes, with decimals interpreted as a fraction of a minute. }\label{table:traffic-D-A}
\end{table}

The first three properties in Table~\ref{table:traffic-D-A} describe the time to travel along the three different routes. Note that PSI-Miner picked only one predicate, the one for location {\tt D} in the antecedent. This is because, for instance, for route $DA_1$, {\tt D}, at the bucket position of 14 (14min$\times$2 = 28min), was sufficient to construct a property. We observe that route $DA_2$ is the fastest available route; vehicles leaving {\tt D} reach {\tt A} from between 14min:12sec to 16min, while route $DA_1$ takes the longest time, that is between 26min:54sec to 28min. 
%for  We first learn how long it takes for vehicles to move from location {\tt D} to {\tt A} in Figure~\ref{fig:traffic} for individual routes and use learned timings to determine which one is the fastest.  PSI-Miner evaluates time-series from multiple vehicles to determine movement times on each route separately. 

We also evaluate if PSI-Miner is able to correctly generalize this timing information when given time-series from all routes. We provide PSI-Miner traces from all three routes simultaneously and use the same values of meta-parameters $k$ and $n$. We observe from the fourth property that it is correctly able to generalize across all traces and learn the minimum and maximum times for travel correctly.
\qed
\end{example}
}

\change{
\begin{example}\label{ex:disease}
Disease has been reported among passengers arriving by ship. The origin point of all passengers is the same, however, the routes the ships take may differ. Routes may share common way-points and paths. A map of these movements is shown in Fig.~\ref{fig:diseaseControl}.  In the map, passengers arrive at two locations, one is the labelled {\tt SOURCE}, and the other is a labelled {\tt HARBOUR}. We hope to discover locations that could be disease hot-spots. For each passenger, their movement data is tagged as risky or non-risky. A non-risky tag is used on passenger movements not carrying disease, while movements of passengers reporting disease symptoms are tagged as risky. 
\begin{figure}
    \centering
    \includegraphics[width=4in]{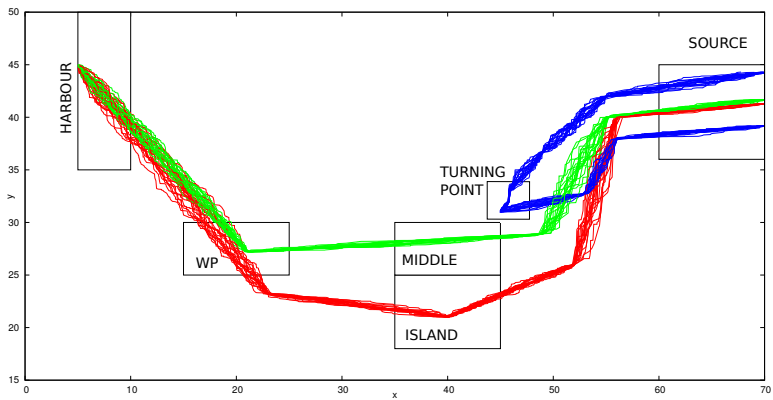}
    \caption{Passenger Travel Surveillance}
    \label{fig:diseaseControl}
\end{figure}

We assume that we are given as predicates locations for way-points ships pass through or stop at along their route. In Fig.~\ref{fig:diseaseControl}, predicates for way-points are marked with rectangles and labelled. Position information from 100 passengers is analyzed using PSI-Miner. It is known that a ship passes through at most five intermediate way-points. We, therefore, use a sequence length of $n=5$. On average, the time to move between way-points is known to be 70mins. We use a time delay of 70mins between events in the sequence. We examine using the target {\tt RISKY}, representing passengers reported having the disease. We use predicates for way-points and the target. The following prefix sequences are learned:

\begin{mycenter}[0em]
{\tt \small ISLAND \#\#[0:140] !TURNING-POINT |-> \#\#[0:70] RISKY}
\end{mycenter}
\hspace*{\fill}{Support = 17.63\%
Correlation = 52.5\%}

\begin{mycenter}[0em]
{\tt \small !ISLAND \&\& MIDDLE \#\#[0:135.23] !TURNING-POINT |-> \#\#[0:70] !RISKY}
\end{mycenter}
\hspace*{\fill}{Support =  7.53\%, Correlation = 11.33\%}

From the prefixes learned, visiting the island has a 52\% correlation with passengers marked risky (that are diagnosed with the disease). Those not visiting the island but passing through the middle are non-risky, with a correlation of 11.33\%. 

The predicate {\tt !TURNING-POINT} is true for all positions other than within the rectangular region marked as {\tt TURNING-POINT}, which is why it appears in all prefixes. However, we also want to know what happens to ships turning back? We explore further using the predicates {\tt TURNING-POINT} and {\tt RISKY} and learn the following prefixes:

\begin{mycenter}[0em]
{\tt \small TURNING-POINT |-> \#\#[0:70] !RISKY}
\end{mycenter}
\hspace*{\fill}{Support = 18.98\%, Correlation = 28.58\%}

In a time-series, for a risky (or non-risky) passenger, every time-position record is labelled as risky (or non-risky). Each prefix can explain (correlate with) only time-points after the prefix is satisfied, possibly a small portion of the time-series, that is labelled risky (or non-risky). Therefore the prefixes reported do not have a high correlation. Since, we never report a property unless it has 100\% confidence, in this case, the correlation values are acceptable. The time-delay of {\tt \#\#[0:70]} appears in the consequent of the PSI-L properties for the same reason.

From the mined properties, we learn that passengers visiting the island carry the disease, whereas passengers in ships turning around or passing through the middle region without visiting the island do not have disease symptoms. Authorities may then curtail visits to the island and inform travellers accordingly.
\qed
\end{example}
}

\begin{comment}
\begin{figure}[t]
	\centering
	\includegraphics[scale=0.3]{traffic-markup.png}
	\caption{Multi-Route 2-dimensional traffic: Blue lines indicate object movements, paths indicated in blue, green and pink, sources/destinations are marked as black boxes, way-points are indicated as red circles.}\label{fig:trafficMovement}
\end{figure}
\end{comment}

%Two industrial transistor netlist circuit models were simulated and the traces produced were analyzed by PSI-Miner. We explore the application of {\tt PSI-Miner} on these traces in Examples~\ref{ex:ldo1} and~\ref{ex:ldo2}. In the examples, {\tt PSI-Miner} is used with meta-parameter values for sequence length $N=15$, and delay resolution $K=10^{-6}s$.
\change{
\begin{example}\label{ex:ldo1}
	A low-dropout voltage regulator (LDO), is a voltage regulator used  to accurately maintain stable voltages for devices containing micro-electronics, such as processing units with varying power levels. During the design of the regulator, simulations produce behaviours that can be difficult to understand. We use PSI-Miner on the simulation of an LDO circuit to understand its operation. We use PSI-Miner to analyze the simulation depicted in Figure~\ref{fig:ldo}. Since the occurrence of a circuit event can span short periods of time, we use a low support threshold ($10^{-4}\%$). Time intervals in the prefixes are measured in seconds.
	
	\begin{figure}
		\centering
		\includegraphics[scale=0.18]{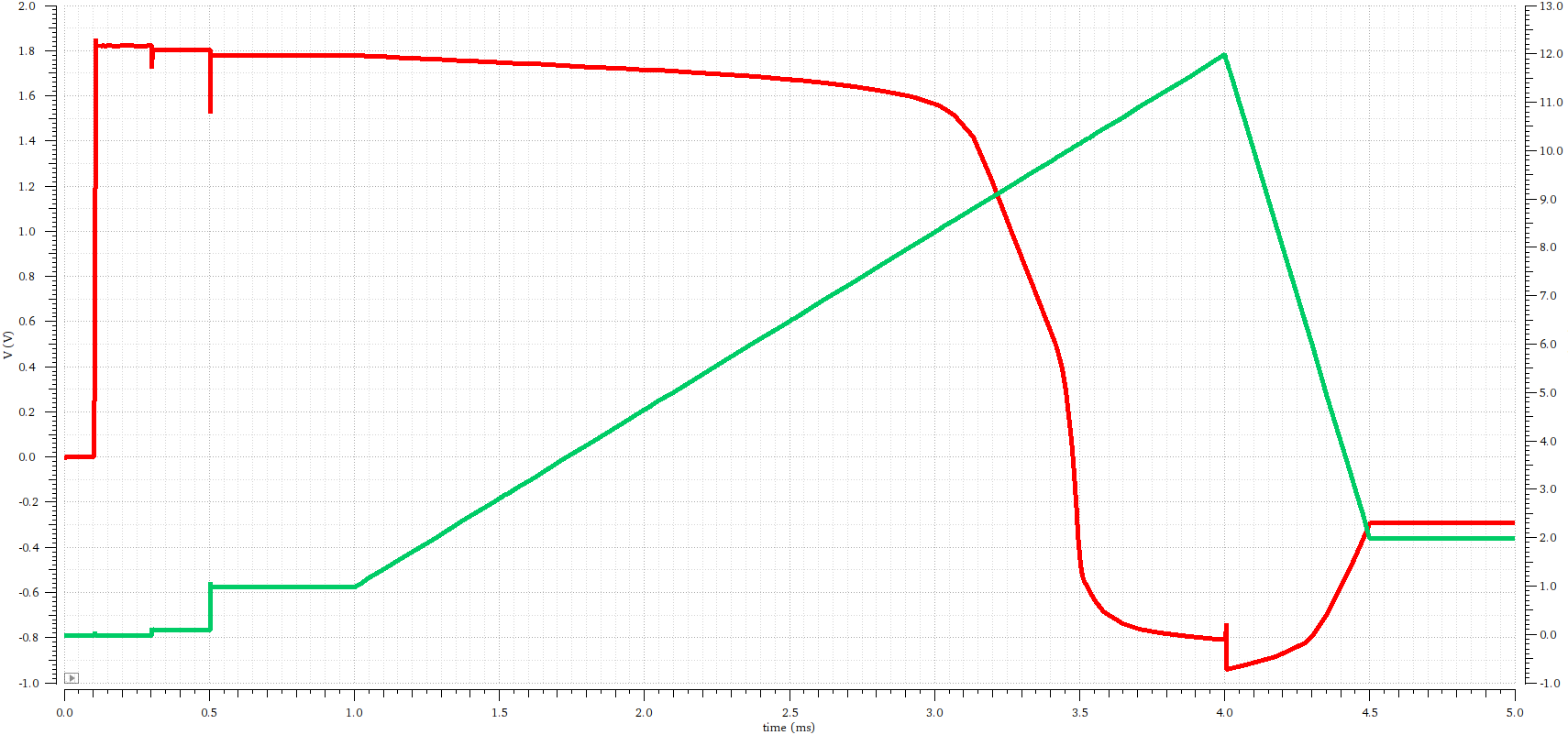}
		\caption{Waveform of voltage (volts) in red, and current (mA) in green, versus time (ms) depicting the behaviour of a low-dropout voltage regulator circuit.}\label{fig:ldo}
	\end{figure}
	
	%mputing resources such as explanations for rated voltage levels and various voltage threshold crossings (such as the 90\% crossing) are mined for the trace depicted in Figure~\ref{fig:ldo1}. In the figure, the red plot describes the voltage at the output of the LDO. It initially rises to its rated level of 1.8V and slowly falls. As the voltage drops, the current, the green plot, starts rising at a sharper pace. Beyond a point the voltage starts falling as the current continues rising. The voltage never rises back to its rated value again and both voltage and current are maintained at low threshold levels. The predicate alphabet (expert inputs) used by PSI-Miner along with aliases for Boolean expressions over predicates is shown in Table~\ref{table:predAlphabetLDO}. Properties mined are presented using the aliases in the table.

	The following are a selection of the properties that are mined:	
	
	\noindent``When at 10\% of the rated voltage ({\tt\small VLowerBand}: {\tt \small 0.18<=v<=0.19}), while not in a state of short circuit ({\tt \small InShrtCkt}: {\tt \small i>=0.0085}), then sometime in the next 4.06$\mu s$ to 4.09$\mu s$ the terminal voltage \textbf{rises to} 90\% of its rated value ({\tt\small VUpperBand}: {\tt \small 1.62<=v<=1.63})."
	\begin{mycenter}[0em]
	\noindent {\tt \small !InShrtCkt \&\& VLowerBand 
		|-> \#\#[4.06e-06:4.09e-06] VUpperBand}
	\end{mycenter}
	\hspace*{\fill}(Support 0.0002\%, Correlation: 0.023\%)\\
	The property has a very low correlation because the prefix containing the lower band (10\%) crossing lasts for a very small amount of time. However, this property is significant, and provides insight into the rise-time of the LDO.\hspace*{\fill}	
	
	\noindent``When not in a state of short circuit, if the terminal voltage is at 90\% of its rated value, then for some time between 0.4$\mu s$ to 0.63$\mu s$ it will not reach the rated value ({\tt\small VStable}: {\tt\small 1.5<=v<=1.85})."
	\begin{mycenter}[0em]
	{\tt \small !InShrtCkt \&\& VUpperBand |-> \#\#[ 4.0e-7:6.3e-7 ] !VStable}
	\end{mycenter} 
	\hspace*{\fill}(Support: 1\%, Correlation: 0.02\%)\\
	%\noindent``When the LDO's terminal voltage is at its rated value and a short circuit event takes place within 20$\mu s$, then the circuit current must be above the short circuit threshold of 8.5mA."	
	%\begin{mycenter}[0em]
	%	{\tt \small VStable \#\#[0:2.0e-05 ] ShrtCktEvent |-> InShrtCkt}
	%\end{mycenter}
	%\hspace*{\fill}(Support: 1.08\%, Correlation: 4.76\%)\\	
	\noindent``When the short circuit event ({\tt\small ShrtCktEvent}: {\tt\small 0.0085<=i<=0.009}) isn't in play, but the circuit is in a state of short circuit (current is above the 8.5mA threshold), then sometime in the next 20$\mu s$ the terminal voltage is not at its rated value."
	\begin{mycenter}[0em]
	\noindent {\tt \small InShrtCkt \&\& !ShrtCktEvent |-> \#\#[0.0:2.0e-05] !VStable}
	\end{mycenter} 
	\hspace*{\fill}(Support: 23.19\%, Correlation: 82.29\%)\\
\qed
\end{example}

\begin{example}\label{ex:ldo2}
A comparator suffers from a glitch that occurs on a digital port. We use  PSI-Miner to mine correlations that may exist, looking for glitches that may have occurred on other lines within a short window of time. It is likely that if these occurred before the observed glitch on the digital port, they could have carried a disturbance across. {\tt PSI-Miner} is provided with  a list of Boolean expressions over a predicate alphabet (Table~\ref{table:predAlphabetComp}) and asked to explain the expression {\tt \small nrst1V8Band}. The alias {\tt \small nrst1V8Band} represents valid voltages for the digital port which may have been violated.

\begin{table}[th]
	\centering
	\footnotesize
	\begin{tabular}{cc}
		\toprule
		\textbf{Predicate/Boolean Expression} & \textbf{Alias}	\\
		\midrule
		%{\tt v >= 1.5 \&\& v <= 1.85} & {\tt VStable}\\
		%\tt v >= 0.18 \&\& v <= 0.19} & {\tt VLowerBand}\\
		%{\tt v >= 1.62 \&\& v <= 1.63} & {\tt VUpperBand}\\
		%{\tt i >= 0.0085} & {\tt IShrtCkt}\\
		%{\tt i >= 0.0085 \&\& i <= 0.009} & {\tt ShrtCktEvent} \\
		%\midrule
		{\tt nrst1V8 <= 0.05 || nrst1V8 >= 1.5 } & {\tt nrst1V8Band}\\
		{\tt uvlo<=0.0041 || uvlo>=0.740} & {\tt uvloBand}\\
		{\tt ovlo<=0.0041 || ovlo>=1.5} & {\tt ovloBand}\\	
		{\tt enOsc <= 0.019} & {\tt enOscBand }\\		
		{\tt vreflow <=0.0270} & {\tt vreflowBand }\\	
		{\tt vrefhigh >=1.21} & {\tt vrefhighBand}\\		
		{\tt supptri<=0.00499} & {\tt suppBand }\\		
		{\tt clkrun<=-0.04} & {\tt clkBand }\\
		\bottomrule
	\end{tabular}
	\caption{Predicate Alphabet used for PSI-Miner for the Comparator}\label{table:predAlphabetComp}
\end{table}

\begin{figure}
	\centering
	\includegraphics[width=\textwidth]{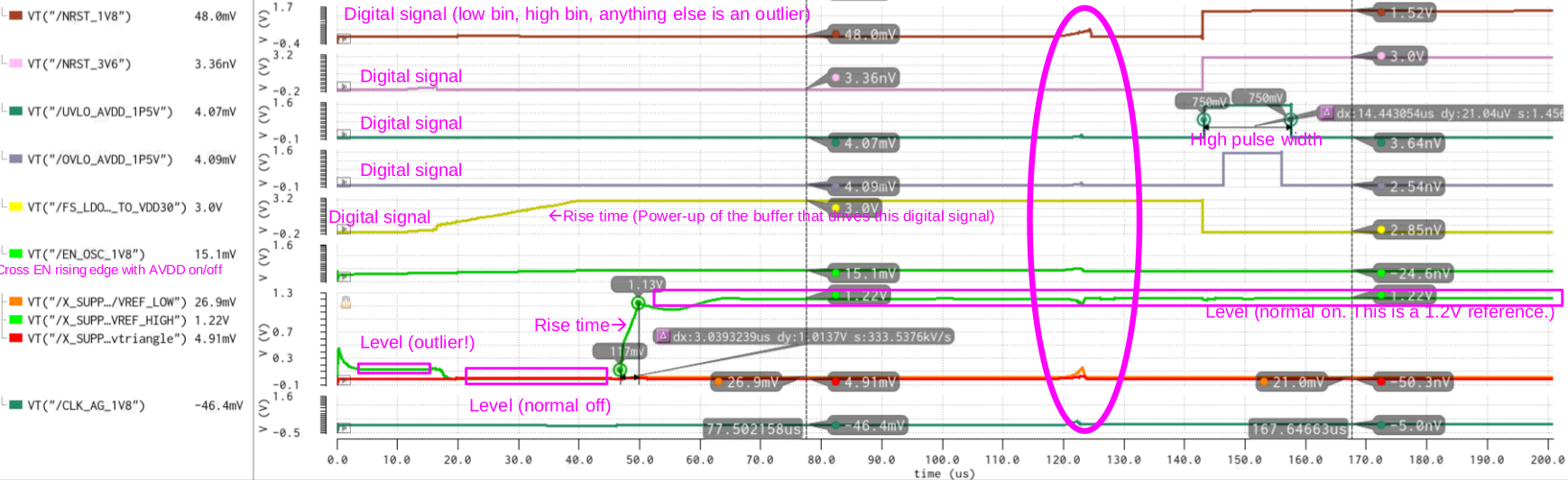}
	\caption{Glitch behaviour in an Comparator Circuit}\label{fig:glitchyLDO}
\end{figure}

\noindent``When the clock level and the UVLO are not within acceptable thresholds, and if within 6$\mu s$ the OVLO is also outside acceptable thresholds, then within 9$\mu s$ the {\tt nstr1V8Band} also falls outside acceptable thresholds."
\begin{mycenter}[0em]
{\tt \small!clkBand \&\& !uvloBand \#\#[0.0:6.0e-06] !ovloBand 
		|-> \#\#[0.0:9.0e-06] !nrst1V8Band}
\end{mycenter}
\hspace*{\fill}(Support: 3.01\%, Correlation: 24.95\%)\\
\noindent``When the clock and oscillator enable are not within acceptable thresholds and the UVLO is within its acceptable threshold, within the next 4.8$\mu s$  the {\tt nstr1V8Band} falls outside acceptable thresholds."
\begin{mycenter}[0em]
{\tt \small !clkBand \&\& uvloBand \&\& !enOscBand 
		|-> \#\# [0.0:4.80e-06] !nrst1V8Band
}
\end{mycenter}
\hspace*{\fill}(Support: 0.14\%, Correlation: 12.85\%)\\
\noindent The prefix sequences mined help identify that it is possible that the when the {\tt clkBand} expression becomes false (there is jitter on the clock port), this leads to a cascading of jitters on other ports. In one case, the UVLO and OVLO both fall out of acceptable bands of operation, while in another the oscillator enable falls outside its acceptable band of operation. Such information may be useful in a root cause analysis and for monitoring for similar anomalies. 
\qed
\end{example}
}
\change{PSI-Miner was able to compute the decision tree in under a second in our experiments. The computation time varies with the size of the interval set. The trend for CPU-Time as $n$ and $k$ vary is depicted in Figure~\ref{fig:cpu-time}. For every increasing incremental change in $n$, the number of pseudo-targets increases accordingly. This then increases the number of pseudo-targets against which unified gain is computed for every predicate. For a fixed predicate alphabet $\mathbb{P}$, for an increase in $n$ by one, a fixed number, $|\mathbb{P}|$, of new computations of unified gain are introduced. 
To demonstrate this, we use the data-set of the LDO of Example~\ref{ex:ldo1}. We choose this example since this analog circuit has behaviours that occur in a scale of micro-seconds, while the trace itself is $5ms$ long, orders of magnitude longer than mined behaviours. We also use predicates having small truth intervals (a few micro-seconds long). In Figure~\ref{fig:cpu-time}, we record the CPU-Time to process the trace and generate the pseudo-targets (Input Processing), and to generate the decision tree (Tree Generation). Other constraints on the decision tree are the same as in Example~\ref{ex:ldo1}.

We first vary $n$ in increments of 10, from 10 to 100, using a fixed delay resolution of $10^{-6}$ and use all the predicates from Example~\ref{ex:ldo1} as part of the predicate alphabet. We use the predicate {\tt VUpperBand} as the target. We observe that as $n$ increases, the time to generate pseudo-targets varies around only marginally around $1.5ms$. On the other hand, we see a clear trend in the time to generate the decision tree. In general, we observe that the CPU-Time increases linearly with $n$. 

Next, we vary $k$ in multiples of 2, starting with $k=10^{-6}$, while using a fixed prefix length of $n=10$, and all the predicates from Example~\ref{ex:ldo1}. The target used is the same as earlier. We observe that as $k$ increases, the time to generate pseudo-targets varies marginally, around $1.5ms$, while in general the CPU-Time decreases with an increase in $k$. There seems to be no clear relationship between $k$ and CPU-time, beyond what is already stated. The decreasing trend of CPU-time with increasing values of $k$ is explained by the fact that although $k$ increases, the number of pseudo-targets remains constant. However, the number of truth intervals for the $i^{th}$ pseudo target may decrease. This is owed to the use of Minkowski difference with larger intervals, leading to the merging of truth intervals for the target. A decrease in the size of the interval set for a pseudo-target results in fewer set operations, resulting in a decrease in CPU-Time.

\begin{figure}[t]
        \begin{subfigure}[b]{0.49\textwidth}
            \includegraphics[width=0.96\linewidth]{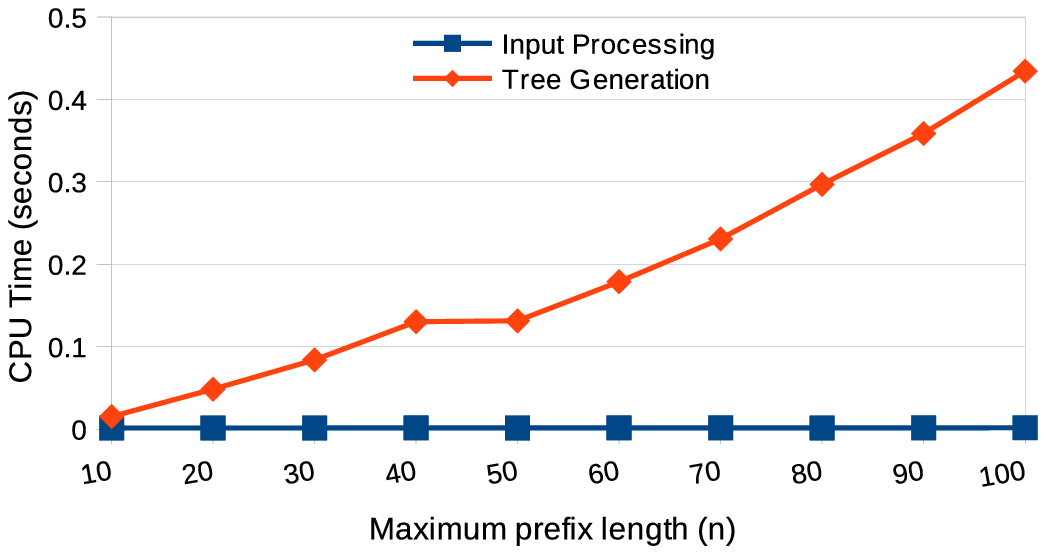}
            \caption{CPU-Time versus prefix length (n)}\label{fig:varying-n}
        \end{subfigure}%
        ~~
        \begin{subfigure}[b]{0.49\textwidth}
            \centering
            \includegraphics[width=0.96\linewidth]{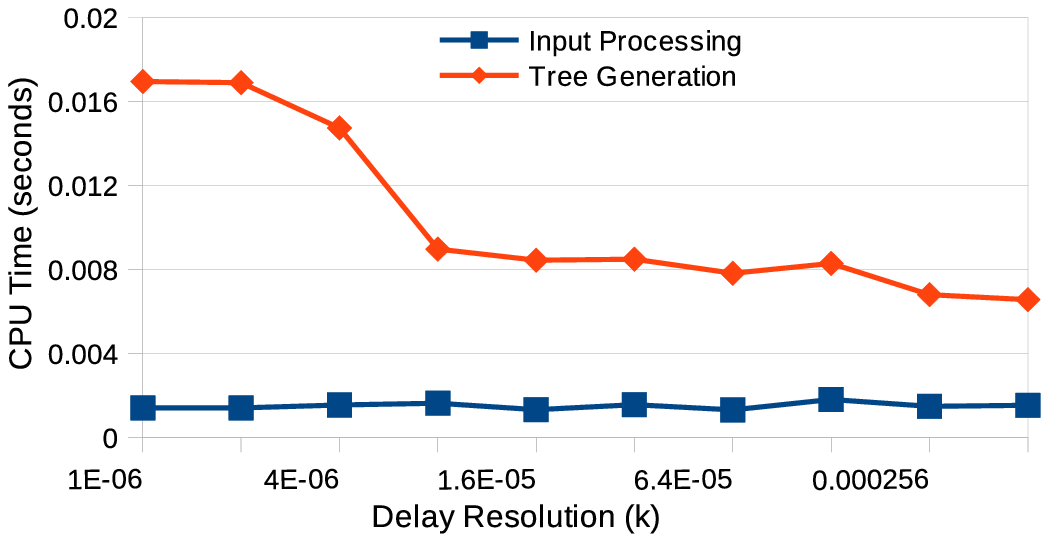}
        	\caption{CPU-Time versus delay resolution (k)}\label{fig:varying-k}
        \end{subfigure}%
        \caption{CPU-Time for input processing and decision tree generation as $n$ and $k$ vary}\label{fig:cpu-time}
\end{figure}

}

\change{
\section{Related Work}\label{sec:related}
Mining information from time-series data has been a topic of study for 
decades~\cite{Esling2012,Ralanamahatana2005}. Learning problems include but are not limited to the following:

% \subsection{Types of Time-Series Learning Problems}
% Given time-series $\mathcal{S}$ and $G$, and dissimilarity metric $D(\mathcal{S},G)$, existing methods for learning from time-series data can be broadly classified into the following types:

\begin{enumerate}

	\item \textbf{Querying}: Finding a time-series similar to a given one from a database. Methods include Singular Value Decomposition (SVD), Discrete Fourier transform (DFT), Discrete Wavelet Transforms (DWT) or Adaptive Piecewise Constant Approximations (APCA) and the dissimilarity metric to index the 
	time-series~\cite{Chakrabarti2002,Faloutsos1994}.
	
	\item \textbf{Clustering}: Clustering time-series data using a similarity metric between time-series. For instance, in Debregeas \& Hebrail, 1998, 
	%~\cite{Debregeas1998} 
	time-series recorded from the electrical power-grid are clustered together using Kohonen Maps, while in 
	Kalpakis et al., 2001,
    %~\cite{Kalpakis2001} 
	distance measures are explored for clustering\label{type:clustering}.
	
	\item \textbf{Summarization}: Summarize a time-series with a representative approximation~\cite{Indyk2000}. Similarly, Van Wijk  \&  Van  Selow,  1999,  %~\cite{VanWijk1999}
	discusses other methods for visualizing trends in a univariate time-series dataset.

	\item \textbf{Separation Features}: Given time-series $\mathcal{S}$ and $\hat{\mathcal{S}}$, find interesting features that separate the two time-series. In Guralnik  \&  Srivastava,  1999, 
	%~\cite{Guralnik1999} 
	time-series data is analyzed to determine \textit{interesting episodes} in the time-series. In Keogh et al., 2002,%~\cite{Keogh2002} 
	the authors suggest discretizing the time-series and finding sub-strings that occur most frequently in the time-series.
	
	\item \textbf{Prediction:} Given a time-series $\mathcal{S}$ over time points $(t_1,...,t_n)$, predict the behaviour of $\mathcal{S}$ as if observed over time points $(t_{n+1},...,t_{n+k})$. Studies in this area have been reported in Brockwell \& Davis, 2002; Harris \& Sollis, 2003; Tsay, 2005; and Brockwell \& Davis, 1986.%~\cite{BrockwellDavis,HarrisSollis,Tsay,Brockwell1986}.
	
	\item \textbf{Anomaly detection:} The problem of detecting a pattern that deviates from a nominal behaviour is strongly linked to the problem of prediction. Like prediction, it also relies on having a sufficiently accurate model of the time-series to be able to identify deviations~\cite{Ypma97noveltydetection,Zhong2007,Ma2003}.
	
	\item \textbf{Motif Discovery:} A sub-sequence that is observed frequently in a time-series is called a \textit{Motif}. A detailed review of existing literature in this area can be found in Esling \& Agon, 2012.%~\cite{Esling2012}.
	
\end{enumerate}
These approaches do not address the problem considered in this paper, namely to find causal sequences of predicates that explain a given consequent. More recently, the focus of learning has been to learn artifacts about time-series that can be expressed in logic and are therefore inherently explainable. These studies are broadly broken down into the following two types:
\begin{itemize}

	\item \textit{Learning properties from templates}: A template property is provided. The learning problem is to choose property parameter values that best represent the time-series.

	\item \textit{Learning property structure and parameter values}: Given a syntax for the property, learn the property that best represents the time-series.

\end{itemize}
We summarize related work in these problems and position our work against them.

\subsection{Learning from Templates}\label{subsec:templateLearning}

A large repository of work exists on mining parameter values for a template property in parametric STL (PSTL)~\cite{Seshia2015,Maler11,Maler18,Fainekos12} with the aim of optimizing property robustness for the given time-series trace. 
%While a property is either satisfied or not satisfied by a time-series, property robusteness~\cite{Donze2010} quantitatively measures how well the time-series satisfies the property.
The work in Asarin et al., 2012, %~\cite{Maler11} 
proposes learning the range of valid parameter values for a PSTL property that a given set of dense-time real-valued system traces satisfy. Given a formula in PSTL, the authors of Asarin et al., 2012, and Bakhirkin et al., 2018, %\cite{Maler11,Maler18}
propose techniques to compute a validity domain for the formula's time and value parameters such that all traces satisfy the formula given these domains. In Yang et al., 2012, %~\cite{Fainekos12}, 
the authors propose a methodology to compute parameter domains for a property in MTL that a given embedded and hybrid system satisfies. 
%The system is modeled in MATLAB and the authors use their property falsification tool S-TALIRO~\cite{staliro} to compute the set of parameter values that robustly satisfy the parameterized MTL property.

In Jin et al., 2015, %~\cite{Seshia2015}, 
the authors propose learning parameter values that satisfy system requirements expressed as template properties in PSTL. 
%They use the framework BREACH~\cite{breach} to compute falsifying traces for a concrete choice of parameter values for the property. 
They iterate on the domain of parameter values until they converge on a combination of values that make the property valid for the given set of traces. 
% The work in~\cite{Seshia2015} requires system traces along with a  model of the system that can be used with their falsification tool.
%The authors of~\cite{Maler18} improved on the work in~\cite{Maler11} by proposing a new methodology to compute the validity domains of parameters of a given PSTL template property by computing bottom-up satisfaction and robustness signals, and by propagating them as a function of time from sub-formulas to formulas.

\change{
Methods for template-based learning require a parameterized property expressed in a formal logic such as MTL or STL. This means that the predicates that influence a given consequent are known, and the parameter values are learned. The learning problem is thereby transformed into a parameter optimization problem. Some of these works also assume the existence of a model of the system.}

\change{
These methods are not applicable to cases where nothing is known about the factors that influence the truth of a given consequent, yet we wish to find the causal sequence of events. Our contribution is in providing a methodology that learns the timed sequence of  predicates that cause the consequent, which involves learning the relevant predicates, as well as the real-time timing between them.
} 

\subsection{Learning Property Structure and Parameter Values}\label{subsec:twoClassClassification}

While in Section~\ref{subsec:templateLearning} a formula structure was provided as input to the learning task, we now summarize property mining studies in which such templates are not-provided.

\change{
The work on Temporal Logic Inference (TLI) in Kong et al., 2014, %~\cite{Calin14} 
aims to classify two labelled sets of time-series by learning distinguishing Boolean combinations of temporal properties of the form $F_{[t_1,t_2)} \psi$ or $G_{[t_1,t_2]} \psi$, where $\psi$ is Boolean\footnote{$F$ and $G$ are respectively the standard \textit{future} and \textit{global} operators of temporal logic}. The work in Bombara et al., 2016, %~\cite{Calin16} 
improves the methodology in Kong et al., 2014, %~\cite{Calin14}
to allow for multiple predicates in the antecedent and properties of the form $F_{[t_1,t_2)} \varphi_g$ or $G_{[t_1,t_2]} \varphi_f$. The predicate and timing constants are learned using local search heuristic algorithms like simulated annealing, while the property structure is learned using standard decision trees. As we have shown in our work, using standard decision trees for learning temporal logic properties can lead to misleading outcomes. Moreover, the methodologies of Kong et al., 2014 and Bombara et al., 2016, %~\cite{Calin14,Calin16} 
do not address our problem of finding a causal sequence for a given consequent.} \newchange{One straightforward way of adapting their work %of~\cite{Calin14,Calin16} 
is to perform a  splitting of traces into two classes of sub-traces of length $n\times k$, one class containing $E$ and the other containing $\neg E$. Note that in both types of traces, it is possible that there exist both, time-points where $E$ is true and others where $\neg E$ is true. Hence, in both sets there would be similar event sequences, with similar delays between events, associated differently with $E$ and $\neg E$. This could result in an empty or misleading outcome. On the other hand, the work proposed here could be adapted to work with classification problems. One way to achieve this is to introduce an variable denoting class type with appropriate values at all time-points in all traces. For instance, consider trace sets $\mathbb{T}_A$ and $\mathbb{T}_B$ containing traces of two classes $A$ and $B$. Introduce a new variable $"class"$, with $class==A$ true at all time-points of traces of class $A$, and false for all time-points of traces of class $B$. We now use the proposed PSI-Miner methodology to mine properties with $E\equiv class==A$ as the target. We successfully used this technique in Example~\ref{ex:disease} to identify key differences between passenger movements labeled {\tt RISKY} and {\tt !NON-RISKY}.}

% In~\cite{Calin14} a property is learned using a directed search over property structures that have a qualilative (language inclusion) and quantitative (robustness~\cite{Donze2010}) partial order among them. 
% The authors use a temporal logic syntax that restricts the types of STL formulae learned to a Boolean combination of short STL primitives, $\varphi_f$ and $\varphi_g$ which are respectively $F_{[t_1,t_2)} \psi$ or $G_{[t_1,t_2]} \psi$, where $F$ and $G$ are respectively the standard \textit{future} and \textit{global} operators of STL; and $t_1$, $t_2$ are time parameters; and $\psi$ is a predicate of the form $x \circ c$, $\circ \in \{\leq,>\}$. 

%The work in ~\cite{Calin16} extends the language and improves on the methodology in~\cite{Calin14} to allow for primitives  $F_{[t_1,t_2)} \varphi_g$ or $G_{[t_1,t_2]} \varphi_f$. The predicate and timing constants are learned using local search heuristic algorithms like simulated annealing, while the property structure is learned using standard decision trees. The property is a mapping of the decision tree into a fragment of STL. As we show later in our work, using standard decision trees for learning temporal logic properties can lead to misleading outcomes.
%Both~\cite{Calin14} and~\cite{Calin16} use the STL robustness measure as the function to optimize the structure and parameters of the property mined. 

The work in Bartocci et al., 2014, %~\cite{Bartocci14}
learns a discriminator property to distinguish between traces generated by two different processes. The method relies on using a statistical abstraction of the data in the traces. 
%and thereafter uses a two-stage approach for identifying a discriminator. First, an optimal formula structure is learned from a library of template structures. This is then followed by a tuning of parameters in the formula so as to maximize its discrimination power. In their methodology 
The property structure and parameters are optimized separately.

%The strategies for mining patterns from system representations can be classified primarily as being analyses for Boolean (digital) or non-Boolean systems. We consider the space of non-Boolean systems to include both programs and hybrid systems. While programs are characterized by discrete behaviours, hybrid systems interleave discrete behaviours with continuous behaviours as the continuous evolution of its real-valued variables.

Past work in Yang \& Evans, 2004; Yang et al., 2006; Gabel \& Su, 2008a, 2008b; and  De-Orio et al., 2009,  %~\cite{Yang2004,PerracottaYang2006,Javert2008,Gabel2008,InfernoDeOrio2009} 
focuses on mining sequences and causal relations for program events. The most recent study examines program traces and learns a finite automaton, over a user-defined alphabet, describing all the ways (up to a discrete event bound on path length) of violating a propositional assertion given in the program~\cite{ChocklerKKS20}. Older studies focus on mining cause-effect relations in programs as LTL properties~\cite{Chang2010,Danese2015-2,Danese2015,Lemieux2015}. The methodology in Cutulencoet al.,  2016, %~\cite{Cutulenco2016} 
mines timed regular expressions from program traces, while in Garg et al.,2016, %~\cite{Garg2016} 
decision trees are used to learn invariants for software programs. In these works, the sequences mined do not preserve timing information between the events. 

%\subsection{Learning Allen and Linear Temporal Logic Patterns}
The tool Goldmine~\cite{goldmine} uses decision trees to mine causal relations from clocked traces of Boolean systems as an ordering of events. The assertions mined are in a subset of LTL limited to bounded safety and liveness. The work in Kauffman \& Fis-chmeister, 2017,  %~\cite{Fischmeister2017} 
mines Allen's interval relations, specifically event intervals from clocked event traces. The proposed methodology mines {\tt nfer} rules (based on Allen's Temporal Logic) from learned \textit{before} relations given a set of clocked event traces. The learned sequences are a series of before relations between events in the traces. These techniques are not applicable to real-time data from dense-time systems.
}

% For instance, from trace data on the NASA's Mars rover, for a single rover activity command, the miner learned the relation \textit{dispatch} \textbf{before} \textit{complete}, which indicates that relation that a command is first dispatched before it is completed.

\change{
To the best of our knowledge, ours is the first work on mining sequences consisting of Boolean combinations of predicates over real variables separated by dense real time delay intervals that causally determine the truth of a given consequent. Our methodology automatically finds the predicate combinations that influence the consequent as well as the real time delay intervals that separate them. Although we start with a property skeleton consisting of bucket positions and template delay intervals, the learning methodology automatically fills the buckets by choosing relevant predicates, merges empty buckets, and tightens the delays between the buckets to return dense time temporal properties for explaining the truth of the consequent.
}

\section{Conclusions}\label{sec:summary}

\change{
In complex dynamical systems, the task of finding the causal trigger of an event is an important and non-trivial task. The AI/ML community seeks data-driven solutions for such problems. Our approach of mining prefix sequences addresses this task. Prefix sequences expressed in logic are easily readable and are easy to explain.

Properties mined using our methodology are useful in many different contexts. This includes the following:

\begin{enumerate}

\item {\em Anomaly Detection.} Anomalies may be viewed as deviations from set patterns in the data. If a property mined from legacy data fails during the execution of the system, then it may be an anomaly. Properties can be readily monitored over simulation and real-time execution, and can thereby be used to detect anomalies.

\item{\em Prediction.} The mined properties can be monitored at runtime to predict future events. For example, when the prefix-sequence at the antecedent of a property matches the runtime behavior, the consequent can be predicted within the corresponding delay interval.

\item{\em Clustering.} Time-series data can be partitioned into clusters on the basis of the truth of the mined properties. Also, for a given consequent we may have mined different prefix-sequences. Clustering the data based on the matches of the prefix-sequence help us to separate out data corresponding to different causes that lead to the same outcome.

\end{enumerate}

There are several interesting offshoots from our work. For example:
\begin{itemize}

\item {\em Incorporating domain knowledge.} If we already know some properties over the variables in the system, then the mining algorithm can be suitably modified to ensure that the mined properties do not contradict the domain knowledge. This is particularly necessary for safety-critical systems, where corner case safety properties are not well represented in the data.

\item {\em Finding separation features.} Separation features are properties that explain the difference between two sets of time-series data sets. To facilitate the readability of the differences, we need to mine properties over similar predicates and having similar structure. This requires considerable modification in the mining algorithm and is one of the future directions being pursued by us.

\item {\em Mining properties with recurrent behaviors.} Suppose a consequent event, $E$, is caused when a predicate $P$ remains true for more than 20 seconds. Such a property cannot be mined using the present approach because the recurrent requirement for predicate $P$ cannot be captured in the present language. This is also an interesting direction of our future research.

\end{itemize}
We believe that there are many other possible directions of research based on the contributions of this paper. The increasing significance of finding causal sequences in data driven learning approaches adds to the impact potential of the methods presented in this paper.
}

\section*{Acknowledgements}
The authors thank Intel corporation CAD SRS funding for partial support of this research.
\bibliographystyle{theapa}
\bibliography{psi-miner}

\begin{thebibliography}{}

\bibitem[\protect\BCAY{Aggarwal}{Aggarwal}{2015}]{Aggarwal2015}
Aggarwal, C.~C. \BBOP2015\BBCP.
\newblock {\Bem Data Mining: The Textbook}.
\newblock Springer Publishing Company, Incorporated.

\bibitem[\protect\BCAY{Asarin et~al.}{Asarin et~al.}{2012}]{Maler11}
Asarin, E.\BBACOMMA\  et~al. \BBOP2012\BBCP.
\newblock \BBOQ Parametric identification of temporal properties\BBCQ\
\newblock In {\Bem Proc. of the 2nd International Conference on Runtime
  Verification}, \BPGS\ 147--160.

\bibitem[\protect\BCAY{Bakhirkin et~al.}{Bakhirkin et~al.}{2018}]{Maler18}
Bakhirkin, A.\BBACOMMA\  et~al. \BBOP2018\BBCP.
\newblock \BBOQ Efficient parametric identification for stl\BBCQ\
\newblock In {\Bem {HSCC}}, HSCC '18, \BPGS\ 177--186. ACM.

\bibitem[\protect\BCAY{Chakrabarti et~al.}{Chakrabarti
  et~al.}{2002}]{Chakrabarti2002}
Chakrabarti, K.\BBACOMMA\  et~al. \BBOP2002\BBCP.
\newblock \BBOQ Locally adaptive dimensionality reduction for indexing large
  time series databases\BBCQ\
\newblock {\Bem ACM Trans. Database Syst.}, {\Bem 27\/}(2), 188--228.

\bibitem[\protect\BCAY{Chang\ \BBA\ Wang}{Chang\ \BBA\ Wang}{2010}]{Chang2010}
Chang, P.~H.\BBACOMMA\  \BBA\ Wang, L.~C. \BBOP2010\BBCP.
\newblock \BBOQ Automatic assertion extraction via sequential data mining of
  simulation traces\BBCQ\
\newblock In {\Bem ASP-DAC}, \BPGS\ 607--612.

\bibitem[\protect\BCAY{Chockler et~al.}{Chockler et~al.}{2020}]{ChocklerKKS20}
Chockler, H.\BBACOMMA\  et~al. \BBOP2020\BBCP.
\newblock \BBOQ Learning the language of software errors\BBCQ\
\newblock {\Bem J. Artif. Intell. Res.}, {\Bem 67}, 881--903.

\bibitem[\protect\BCAY{Danese et~al.}{Danese et~al.}{2015a}]{Danese2015-2}
Danese, A.\BBACOMMA\  et~al. \BBOP2015a\BBCP.
\newblock \BBOQ Automatic extraction of assertions from execution traces of
  behavioural models\BBCQ\
\newblock In {\Bem Proc. of DATE}, \BPGS\ 67--72.

\bibitem[\protect\BCAY{Danese et~al.}{Danese et~al.}{2015b}]{Danese2015}
Danese, A.\BBACOMMA\  et~al. \BBOP2015b\BBCP.
\newblock \BBOQ A time-window based approach for dynamic assertions mining on
  control signals\BBCQ\
\newblock In {\Bem Proc. of VLSI-SoC}, \BPGS\ 246--251.

\bibitem[\protect\BCAY{Esling\ \BBA\ Agon}{Esling\ \BBA\
  Agon}{2012}]{Esling2012}
Esling, P.\BBACOMMA\  \BBA\ Agon, C. \BBOP2012\BBCP.
\newblock \BBOQ Time-series data mining\BBCQ\
\newblock {\Bem ACM Comput. Surv.}, {\Bem 45\/}(1), 12:1--12:34.

\bibitem[\protect\BCAY{Evans\ \BBA\ Grefenstette}{Evans\ \BBA\
  Grefenstette}{2018}]{Evans2018}
Evans, R.\BBACOMMA\  \BBA\ Grefenstette, E. \BBOP2018\BBCP.
\newblock \BBOQ Learning explanatory rules from noisy data\BBCQ\
\newblock {\Bem J. Artif. Int. Res.}, {\Bem 61\/}(1), 1--64.

\bibitem[\protect\BCAY{Faloutsos et~al.}{Faloutsos
  et~al.}{1994}]{Faloutsos1994}
Faloutsos, C.\BBACOMMA\  et~al. \BBOP1994\BBCP.
\newblock \BBOQ Fast subsequence matching in time-series databases\BBCQ\
\newblock In {\Bem Proceedings of the 1994 ACM SIGMOD International Conference
  on Management of Data}, SIGMOD '94, \BPGS\ 419--429. ACM.

\bibitem[\protect\BCAY{Guo et~al.}{Guo et~al.}{2018}]{Guo2019}
Guo, R.\BBACOMMA\  et~al. \BBOP2018\BBCP.
\newblock \BBOQ A survey of learning causality with data: Problems and
  methods\BBCQ\
\newblock {\Bem CoRR}, {\Bem abs/1809.09337}.

\bibitem[\protect\BCAY{{IEEE}}{{IEEE}}{2010}]{psl}
{IEEE} \BBOP2010\BBCP.
\newblock \BBOQ 1850-2010 - {IEEE} {S}tandard for {P}roperty {S}pecification
  {L}anguage ({PSL})
  (\url{https://standards.ieee.org/findstds/standard/1850-2010.html})\BBCQ.

\bibitem[\protect\BCAY{{IEEE}}{{IEEE}}{2012}]{sva}
{IEEE} \BBOP2012\BBCP.
\newblock \BBOQ 1800-2012 - {IEEE} {S}tandard for {S}ystem{V}erilog--{U}nified
  {H}ardware {D}esign, {S}pecification, and {V}erification {L}anguage(\url{
  http://standards.ieee.org/findstds/standard/1800-2012.html })\BBCQ.

\bibitem[\protect\BCAY{Indyk et~al.}{Indyk et~al.}{2000}]{Indyk2000}
Indyk, P.\BBACOMMA\  et~al. \BBOP2000\BBCP.
\newblock \BBOQ Identifying representative trends in massive time series data
  sets using sketches\BBCQ\
\newblock In {\Bem Proceedings of the 26th International Conference on Very
  Large Data Bases}, VLDB '00, \BPGS\ 363--372, San Francisco, CA, USA. Morgan
  Kaufmann Publishers Inc.

\bibitem[\protect\BCAY{{J. R. Quinlan}}{{J. R. Quinlan}}{1993}]{Quinlan1993}
{J. R. Quinlan} \BBOP1993\BBCP.
\newblock {\Bem C4.5: Programs for Machine Learning}.
\newblock Morgan Kaufmann Publishers Inc., San Francisco, CA, USA.

\bibitem[\protect\BCAY{Jin et~al.}{Jin et~al.}{2015}]{Seshia2015}
Jin, X.\BBACOMMA\  et~al. \BBOP2015\BBCP.
\newblock \BBOQ Mining requirements from closed-loop control models\BBCQ\
\newblock {\Bem IEEE Transactions on Computer Aided Design of Integrated
  Circuits and Systems}, {\Bem 34\/}(11), 1704--1717.

\bibitem[\protect\BCAY{Lemieux et~al.}{Lemieux et~al.}{2015}]{Lemieux2015}
Lemieux, C.\BBACOMMA\  et~al. \BBOP2015\BBCP.
\newblock \BBOQ General ltl specification mining (t)\BBCQ\
\newblock In {\Bem Proc. of ASE}.

\bibitem[\protect\BCAY{Ma\ \BBA\ Perkins}{Ma\ \BBA\ Perkins}{2003}]{Ma2003}
Ma, J.\BBACOMMA\  \BBA\ Perkins, S. \BBOP2003\BBCP.
\newblock \BBOQ Online novelty detection on temporal sequences\BBCQ\
\newblock In {\Bem Proceedings of the Ninth ACM SIGKDD International Conference
  on Knowledge Discovery and Data Mining}, KDD '03, \BPGS\ 613--618, New York,
  NY, USA. ACM.

\bibitem[\protect\BCAY{Maler\ \BBA\ Nickovic}{Maler\ \BBA\
  Nickovic}{2004}]{stl2004}
Maler, O.\BBACOMMA\  \BBA\ Nickovic, D. \BBOP2004\BBCP.
\newblock \BBOQ Monitoring temporal properties of continuous signals\BBCQ\
\newblock In {\Bem Proceedings of Formal Modeling and Analysis of Timed Systems
  (FORMATS-FTRTFT). Volume 3253 of LNCS}, \BPGS\ 152--166. Springer.

\bibitem[\protect\BCAY{Mitchell}{Mitchell}{1997}]{Mitchell1997}
Mitchell, T.~M. \BBOP1997\BBCP.
\newblock {\Bem Machine Learning\/} (1 \BEd).
\newblock McGraw-Hill, Inc., New York, NY, USA.

\bibitem[\protect\BCAY{Ott\ \BBA\ Longnecker}{Ott\ \BBA\
  Longnecker}{2006}]{Ott2006}
Ott, R.~L.\BBACOMMA\  \BBA\ Longnecker, M.~T. \BBOP2006\BBCP.
\newblock {\Bem Introduction to Statistical Methods and Data Analysis (with
  CD-ROM)}.
\newblock Duxbury Press, Boston, MA, USA.

\bibitem[\protect\BCAY{Pearl}{Pearl}{1995}]{pearl1995}
Pearl, J. \BBOP1995\BBCP.
\newblock \BBOQ Causal diagrams for empirical research\BBCQ\
\newblock {\Bem Biometrika}, {\Bem 82\/}(4), 669--688.

\bibitem[\protect\BCAY{Pearl}{Pearl}{2000}]{Pearl2000}
Pearl, J. \BBOP2000\BBCP.
\newblock {\Bem Causality: Models, Reasoning, and Inference}.
\newblock Cambridge University Press, New York, NY, USA.

\bibitem[\protect\BCAY{Pearl}{Pearl}{2019}]{Pearl2019}
Pearl, J. \BBOP2019\BBCP.
\newblock \BBOQ The seven tools of causal inference, with reflections on
  machine learning\BBCQ\
\newblock {\Bem Commun. ACM}, {\Bem 62\/}(3), 54--60.

\bibitem[\protect\BCAY{Pearl et~al.}{Pearl et~al.}{2009}]{Pearl2009causal}
Pearl, J.\BBACOMMA\  et~al. \BBOP2009\BBCP.
\newblock \BBOQ Causal inference in statistics: An overview\BBCQ\
\newblock {\Bem Statistics surveys}, {\Bem 3}, 96--146.

\bibitem[\protect\BCAY{Pnueli}{Pnueli}{1977}]{Pnueli77}
Pnueli, A. \BBOP1977\BBCP.
\newblock \BBOQ The temporal logic of programs\BBCQ\
\newblock In {\Bem Proceedings of the 18th Annual Symposium on Foundations of
  Computer Science}, SFCS '77, \BPGS\ 46--57, Washington, DC, USA. IEEE
  Computer Society.

\bibitem[\protect\BCAY{Quinlan}{Quinlan}{1986}]{Quinlan1986}
Quinlan, J.~R. \BBOP1986\BBCP.
\newblock \BBOQ Induction of decision trees\BBCQ\
\newblock {\Bem Mach. Learn.}, {\Bem 1\/}(1), 81--106.

\bibitem[\protect\BCAY{Ralanamahatana et~al.}{Ralanamahatana
  et~al.}{2005}]{Ralanamahatana2005}
Ralanamahatana, C.~A.\BBACOMMA\  et~al. \BBOP2005\BBCP.
\newblock {\Bem Mining Time Series Data}, \BPGS\ 1069--1103.
\newblock Springer US.

\bibitem[\protect\BCAY{Vasudevan et~al.}{Vasudevan et~al.}{2010}]{goldmine}
Vasudevan, S.\BBACOMMA\  et~al. \BBOP2010\BBCP.
\newblock \BBOQ {GoldMine: Automatic assertion generation using data mining and
  static analysis}\BBCQ\
\newblock In {\Bem Proc. of DATE}, \BPGS\ 626--629.

\bibitem[\protect\BCAY{Yang et~al.}{Yang et~al.}{2012}]{Fainekos12}
Yang, H.\BBACOMMA\  et~al. \BBOP2012\BBCP.
\newblock \BBOQ Querying parametric temporal logic properties on embedded
  systems\BBCQ\
\newblock In {\Bem Prof. of ICTSS}.

\bibitem[\protect\BCAY{Ypma et~al.}{Ypma et~al.}{1997}]{Ypma97noveltydetection}
Ypma, A.\BBACOMMA\  et~al. \BBOP1997\BBCP.
\newblock \BBOQ Novelty detection using self-organizing maps\BBCQ\
\newblock In {\Bem In Proc. of ICONIP'97}, \BPGS\ 1322--1325. Springer.

\bibitem[\protect\BCAY{Zhong\ \BBA\ Khoshgoftaar}{Zhong\ \BBA\
  Khoshgoftaar}{2007}]{Zhong2007}
Zhong, S.\BBACOMMA\  \BBA\ Khoshgoftaar, T.~M. \BBOP2007\BBCP.
\newblock \BBOQ Clustering-based network intrusion detection\BBCQ\
\newblock {\Bem International Journal of Reliability, Quality and Safety
  Engineering}, {\Bem 14\/}(2), 169--187.

\end{thebibliography}
\end{document}